\def\dd{\mathbf{d}}
\def\ee{\mathbf{e}}
\def\ww{\mathbf{w}}
\def\xx{\mathbf{x}}
\def\bB{\mathcal{B}}
\def\sS{\mathcal{S}}
\def\tT{\mathcal{T}}
\def\Ce{\mathbb{C}}
\def\Re{\mathbb{R}}
\def\Ze{\mathbb{Z}}
\DeclareMathOperator*{\argmax}{arg\,max}
\DeclareMathSymbol{@}{\mathord}{letters}{"3B}
\newcommand\paren[1]{\left(#1\right)}
\newcommand\norm[1]{\left\lVert#1\right\rVert}
\def\latex/{\LaTeX}
\def\bibtex/{\hologo{BibTeX}}
\newcommand{\RN}[1]{%
  \textup{\uppercase\expandafter{\romannumeral#1}}%
}
\def\FFF{%
	\textbf{\textup{\uppercase\expandafter{\romannumeral 1}}}%
}
\def\SFF{
	\textbf{\textup{\uppercase\expandafter{\romannumeral 2}}}%
}
\def\oomega{\boldsymbol{\omega}}
\newtheorem{theorem}{Theorem}
\newtheorem{definition}{Definition}[section]
\newtheorem{lemma}[theorem]{Lemma}
\crefname{section}{Sec.}{Secs.}
\Crefname{section}{Section}{Sections}
\Crefname{table}{Table}{Tables}
\crefname{table}{Tab.}{Tabs.}
\newcommand{\bacon}{BACON}
\newif\ifdraft
\newcommand{\todo}[1]{\textcolor{red}{todo:#1}}
\newcommand{\guandao}[1]{\textcolor{blue}{guandao:#1}}
\newcommand{\sagie}[1]{\textcolor{orange}{sagie:#1}}
\newcommand{\varun}[1]{\textcolor{brown}{VJ:#1}}
\newcommand{\barron}[1]{\textcolor{magenta}{barron:#1}}
\newcommand{\bh}[1]{\textcolor{cyan}{BH:#1}}
\newcommand{\tom}[1]{\textcolor{green}{TF:#1}}
\newcommand{\serge}[1]{\textcolor{yellow}{SB:#1}}
\newcommand{\todo}[1]{}
\newcommand{\guandao}[1]{}
\newcommand{\sagie}[1]{}
\newcommand{\varun}[1]{}
\newcommand{\barron}[1]{}
\newcommand{\bh}[1]{}
\newcommand{\tom}[1]{}
\newcommand{\serge}[1]{}
\title{Polynomial Neural Fields \\ for Subband Decomposition and Manipulation}
\newcommand{\printfnsymbol}[1]{%
  \textsuperscript{\@fnsymbol{#1}}%
}
\author{%
  Guandao Yang\thanks{Equal contribution. Part of this work was done while Guandao was a student researcher at Google.} \\ Cornell University \And
  Sagie Benaim\printfnsymbol{1} \\ University of Copenhagen 
  \And
  Varun Jampani \\ Google Research \And
  Kyle Genova \\ Google Research \And
  Jonathan T. Barron \\ Google Research \And
  Thomas Funkhouser \\ Google Research \And 
  Bharath Hariharan \\ Cornell University \And
  Serge Belongie \\ University of Copenhagen 
}
\begin{document}

\maketitle

\begin{abstract}
Neural fields have emerged as a new paradigm for representing signals, thanks to their ability to do it compactly while being easy to optimize.
In most applications, however, neural fields are treated like black boxes, which precludes many signal manipulation tasks.
In this paper, we propose a new class of neural fields called polynomial neural fields (PNFs). 
The key advantage of a PNF is that it can represent a signal as a composition of a number of manipulable and interpretable components without losing the merits of neural fields representation.
We develop a general theoretical framework to analyze and design PNFs.
We use this framework to design Fourier PNFs, which match state-of-the-art performance in signal representation tasks that use neural fields.
In addition, we empirically demonstrate that Fourier PNFs enable signal manipulation applications such as texture transfer and scale-space interpolation.
Code is available at \url{https://github.com/stevenygd/PNF}.
\end{abstract}

\section{Introduction}\label{sec:intro}

Neural fields are neural networks that take as input spatial coordinates and output a function of the coordinates, such as image colors~\cite{sitzmann2020siren,chen2021learning}, 3D signed distance functions~\cite{park2019deepsdf, atzmon2019sal}, or radiance fields~\cite{mildenhall2020nerf}.
Recent works have shown that such representations are compact~\cite{dupont2021coin, martel2021acorn, saragadam2022miner}, allow sampling at arbitrary locations~\cite{tancik2020fourier,sitzmann2020siren}, and are easy to optimize within the deep learning framework.
These advantages enabled their success in many spatial visual computing applications including novel view synthesis~\cite{sitzmann2019srns,Oechsle2019ICCV,Niemeyer2020CVPR,mildenhall2020nerf,yariv2020multiview} and 3D reconstruction~\cite{eslami2018neural,sitzmann2019srns,jiang2020local,peng2020convolutional,sitzmann2020siren}.
Most recent neural field based methods, however, treat the network as a black box.
As such, one can only obtain information by querying it with spatial coordinates.
This precludes the applications where we want to change the signal represented in a neural field.
For example, it is difficult to remove high frequency noise
or change the stationary texture of an image.

One way to enable signal manipulation is to decompose a signal into a number of interpretable components and use these components for manipulation.
A general approach that works across a wide range of signals is to decompose them into frequency subbands~\cite{adelson1984pyramid, burt1987laplacian, simoncelli1995steerable}.
Such decompositions are studied in the traditional signal processing literature, e.g., the use of Fourier or Wavelet transforms of the spatial signal.
These transformations, however, usually assume the signal is densely sampled in a regular grid.
As a result, it is non-trivial to generalize these approaches to irregular data (e.g., point clouds).
Another shortcoming of these transforms is that they require a lot of terms to represent a signal faithfully, which makes it difficult to scale to signals of more than two dimensions, such as in the case of light fields.
Interestingly, these are the very problems that can be solved by neural fields, which 
can represent irregular signals compactly and scale easily to higher dimensions.
This leads to the central question of this paper: can we incorporate the interpretability and controllability of classical signal processing pipelines to neural fields? 

Our goal is to design a new class of neural fields that allow for precise subband decomposition and manipulation as required by the downstream tasks.
To achieve that, our network needs to have the ability to control different parts of the network to output signal that is limited by desirable subbands. 
At the same time, we want the network to inherit the usual advantages of neural fields, namely, being compact, expressive, and easy to optimize.
The most relevant prior works with related aims are Multiplicative Filter Network (MFN)~\cite{Fathony2021MultiplicativeFN} and BACON~\cite{Lindell2021BACONBC}.
While MFNs enjoy the advantages of neural fields and
 are easy to analyze, they do not use this property to control the network's output for subband decomposition.
BACON~\cite{Lindell2021BACONBC} extends the MFN architecture to enforce that its outputs are upper-band limited, but it
lacks the ability to provide subband control beyond upper band limits.
This hinders BACON's applicability to tasks that requires more precise control of subbands, such as manipulating stationary textures (shown in \cref{sec:texture}).

To address these issues, we propose a novel class of neural fields called \textbf{polynomial neural fields (PNFs)}. 
PNF is a polynomial neural network~\cite{chrysos2019polygan} evaluated using a set of basis functions.
PNFs are compact, easy to optimize, and can be sampled at arbitrary locations as with general neural fields.
Moreover, PNFs enjoy interpretability and controllability of signal processing methods. 
We provide a theoretical framework to analyze the output of PNFs.
We use this framework to design the Fourier PNF, whose outputs can be localized in the frequency domain with both upper and lower band limits, along with orientation specificity.
To the best of our knowledge, this is the first neural field architecture that can achieve such a fine-grained decomposition of a signal.
Empirically, we demonstrate that Fourier PNFs can achieve subband decomposition while reaching state-of-the-art performance for signal representation tasks.
Finally, we demonstrate the utility of Fourier PNFs in signal manipulation tasks such as texture transfer and scale-space interpolation. 

\section{Related Works}

Our method are built on three bodies of prior works: signal processing, neural fields, and polynomial neural networks.
In this section, we will focus on the most relevant part of these prior works. 
For further readings, please refer to \citet{orfanidis1995introduction} for signal processing, \citet{xie2022neural} for neural fields, and \citet{PNNTutorial} for polynomial neural networks.

\noindent{\bf Fourier and Wavelet Transforms.}
In traditional signal processing pipeline, one usually first transforms the signal into weighted sums of functionals from certain basis before manipulating and analyzing the signal~\cite{orfanidis1995introduction}.
The Fourier and Wavelet transformation are most relevant to our work.
In particular, prior works has leveraged Fourier and Wavelet transforms to organize image signal into meaningful and manipulable components such as the Laplacian~\cite{burt1987laplacian} and Steerable~\cite{simoncelli1995steerable} pyramids.
In our paper, we analyze the signal in terms of the basis functions studied by Fourier and Wavelet transforms.
Our manipulable components are also inspired by the subband used in Steerable Pyramid~\cite{simoncelli1991subband}.
While this signal processing pipeline is very interpretable, it's non-trivial to make it work on irregular data because these transformations usually assume the signal to be densely sampled in a regular grid.
In this paper, we tries to combines the interpretability of traditional signal processing pipeline with the merits of neural fields, which is easy to optimize even with irregular data.

\noindent{\bf Neural Fields.} 
Neural Fields are neural networks that maps spatial coordinate to a signal.
Recent research has shown that neural fields are effective in representing a wide variety of signals such as images~\cite{stanley2007compositional,sitzmann2020siren}, 3D shapes~\cite{davies2020effectiveness, michalkiewicz2019implicit, park2019deepsdf, atzmon2019sal,yang2019pointflow}, 3D scenes~\cite{eslami2018neural,sitzmann2019srns,jiang2020local,peng2020convolutional,gropp2020implicit}
and radiance fields~\cite{liu2020neural,jiang2020sdfdiff,liu2020dist,martinbrualla2020nerfw,pumarola2020d,srinivasan2020nerv,zhang2020nerf,neff2021donerf,oechsle2021unisurf,boss2021nerd,boss2021neural,garbin2021fastnerf,lindell2020autoint,wang2021neus,yariv2021volume,yu2020pixelnerf, barron2021mip,saito2019pifu,sitzmann2019srns,Oechsle2019ICCV,Niemeyer2020CVPR,mildenhall2020nerf,yariv2020multiview}. 
However, neural fields typically operate as black boxes, which hinders the application of neural fields to some signal decomposition and manipulation tasks as discussed in \cref{sec:intro}.
Recent works have to alleviate such issue by designing network architecture that are partially interpretable.
A common technique is to encode the input coordinate with a positional encoding where one can control spectrum properties such as the frequency bandwidth inputting into the network~\cite{mildenhall2020nerf,tancik2020fourier,sitzmann2020siren,barron2021mip,zheng2021rethinking}.
But these positional encodings are passed through a black-box neural networks, making it difficult to analyze properties of the final output.
The most relevant works is BACON~\cite{Lindell2021BACONBC}, which propose an initialization schema for multiplicative filter networks (MFNs)~\cite{fathony2020multiplicative} that ensures the output to be upper-limited by certain bandwidth.
This work generalizes BACON in two ways.
First, our theory can be applied to a more general set of basis function and network topologies.
Second, our network enables more precise subband controls, which include band-limiting from above, below, and among certain orientation.

\noindent{\bf Polynomial Neural Networks.}
Polynomial neural networks (PNNs) are generally referred to neural networks composed of polynomials~\cite{PNNTutorial}.
The study of PNNs can be dated back to higher-order boltzmann machine~\cite{sejnowski1986higher} and Mapping Units~\cite{hinton1985shape}.
Recently, research has shown that PNNs can train very strong generative models~\cite{chrysos2020p,chrysos2019polygan,georgopoulos2020multilinear,chrysos2021conditional,wu2022adversarial} and recognition models~\cite{hu2018squeeze, wang2018non}.
The empirical success has also followed with deeper theoretical analysis.
For example, ~\cite{chrysos2020p} reveal how PNNs' architecture relates to polynomial factorization.
\citet{kileel2019expressive} and \citet{choraria2022spectral} studies the expressive power of PNNs.
Our work establish the connection between PNNs and many neural fields such as MFN~\cite{Fathony2021MultiplicativeFN} and BACON~\cite{Lindell2021BACONBC}.
We further extends polynomial neural networks by evaluating the polynomial with a set of basis functions such as the Fourier basis.

\section{Method}
\label{sec:method}

In this section, we will provide a definition for polynomial neural fields (\cref{sec:pnf}). 
From this definition, we derive a theoretical framework to analyze their outputs in terms of subbands (\cref{sec:subband-control}).
We then use this framework to design Fourier PNFs, a novel neural fields architecture to represent signals as a composition of fine-grain subbands in frequency spaces (\cref{sec:ff-pnf}).

\subsection{Polynomial Neural Fields}\label{sec:pnf}

Recall that we would like to maintain the merits of the neural field representation while adding the ability to partition it into analyzable components.
As with all neural networks, to guarantee expressivity, we want to base our neural fields on function compositions~\cite{raghu2016survey}.
At the same time, we want our neural fields to be interpretable in terms of a set of basis functions that have known properties, as in the signal processing literature.
We propose the following class of neural fields:
\begin{definition}[PNF]
Let $\bB$ be a basis for the vector space of functions for $\Re^n \to \Re$.
A \emph{Polynomial neural field of basis $\bB$} is a neural network $f = g_L \circ \dots \circ g_1 \circ \gamma$,
where $\forall i, g_i$ are finite degree multivariate polynomials, 
and $\gamma:\Re^n \to\Re^d$ is a $d$-dimensional feature encoding using basis $\bB$: $\gamma(x) = [\gamma_1(x)\ \dots\ \gamma_d(x)]^T, \gamma_i \in \bB, \forall i$. 
\label{def:pnf}
\end{definition}
This definition allows a rich design space that subsumes several prior works.
For example, MFN~\cite{Fathony2021MultiplicativeFN} and BACON~\cite{Lindell2021BACONBC} can be instantiated by setting $g_i$ to be either the linear layer or a masked multiplication layer.
Similarly, if we set the basis to be $\bB=\{x^n\}_{n\in \mathbb{N}}$, then we can show architectures proposed in $\Pi$-Net~\cite{chrysos2020p} are a subclass of PNF.
This rich design space can potentially allow us to tailor the architecture to the application of interests, as demonstrated later in \cref{sec:ff-pnf} and \cref{sec:scalespace}.
Moreover, such rich space also contains many expressive neural networks, as shown by both the prior works~\cite{Fathony2021MultiplicativeFN,Lindell2021BACONBC,chrysos2019polygan,chrysos2020p}
and by our experiment in \cref{sec:expressivity}.

Furthermore, as long as the span of the basis is closed under multiplication, PNF yields a linear combination of basis functions and is thus easy to analyze:
\begin{theorem}[]
Let $F$ be a PNF with basis $\bB$ s.t 
$\forall b_1, b_2 \in \bB, 
b_1(x)b_2(x) = \sum_{i \in I} a_i b_i(x), |I| < \infty
$. Then the output of $F$ is a finite linear sum of the basis functions from $\bB$.
\label{thm:interp-main}
\end{theorem}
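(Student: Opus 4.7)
The plan is to prove the theorem by structural induction on the depth $L$ of the polynomial layers $g_1, \dots, g_L$, centered on a closure property of the space $S$ of finite linear combinations of basis functions drawn from $\bB$.

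For the base case, the encoding layer produces $\gamma(x) = [\gamma_1(x)\ \dots\ \gamma_d(x)]^\top$, whose components are individual elements of $\bB$ and hence trivially lie in $S$ as single-term sums.

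For the inductive step I would establish the key lemma that $S$ is closed under the three operations used to build any multivariate polynomial: scalar multiplication, addition, and pointwise multiplication. Closure under the first two is immediate, since $S$ is a vector subspace of the ambient function space. Closure under multiplication is where the hypothesis on $\bB$ enters: given $s_1 = \sum_{j=1}^m c_j b_j$ and $s_2 = \sum_{k=1}^n d_k b_k$ in $S$, distributivity together with the hypothesis yield
\[
s_1(x)\, s_2(x) \;=\; \sum_{j,k} c_j d_k\, b_j(x) b_k(x) \;=\; \sum_{j,k} c_j d_k \sum_{i \in I_{j,k}} a^{(j,k)}_i\, b_i(x),
\]
which is again a finite linear combination of basis functions, because the outer double sum has $mn$ terms and each inner index set $I_{j,k}$ is finite by assumption. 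Since any multivariate polynomial of finite degree is a finite arithmetic expression built from these three operations on its inputs, each layer $g_\ell$ maps a tuple with entries in $S$ to another tuple with entries in $S$. Applying this inductively along the composition $g_L \circ \dots \circ g_1 \circ \gamma$ yields that the final scalar output $F(x)$ lies in $S$, which is precisely the claim.

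The main obstacle is purely bookkeeping: one must confirm that finiteness is preserved at every stage of the network. This is guaranteed by three simultaneous finiteness conditions, namely the finite dimension $d$ of the encoding, the finite degree of each $g_\ell$ (which yields only finitely many monomials to evaluate), and the hypothesis that each product $b_j b_k$ expands into only finitely many basis terms. No single step is subtle in isolation, but together they ensure that the basis expansion of the output never grows to infinitely many terms, which is what makes the theorem clean enough to support the subband analysis developed in the next subsection.
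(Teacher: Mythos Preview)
Your proposal is correct and follows essentially the same route as the paper: both argue by induction on the layer index, with the base case handled by $\gamma$ and the inductive step reduced to the fact that finite linear combinations of $\bB$ are closed under the operations a polynomial layer performs. The only cosmetic difference is that the paper isolates a separate lemma showing that power-products $\prod_n b_n^{\alpha_n}$ lie in the span of $\bB$, whereas you establish closure of $S$ directly under binary multiplication and then appeal to the fact that any polynomial is built from addition, scalar multiplication, and multiplication; the two framings are equivalent.
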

Many commonly used bases, such as Fourier, Gabor, and spherical harmonics, all satisfy this property.
Please refer to the supplementary for proofs for a variety of different bases.

\subsection{Controllable Subband Decomposition}\label{sec:subband-control}
\cref{thm:interp-main} is not enough to control or manipulate the signal represented by the network, because any single neuron in the network may potentially be working with an arbitrary set of basis functions.
This is a problem if we want to manipulate the signal through these neurons.
For example, if we want to discard the contributions of some  high-frequency components in the Fourier basis, we cannot decide which neurons to discard if each of them are contributed to a variety of frequency bands.

To allow better manipulation, we use the notion of \emph{subbands} from traditional frequency domain analysis~\cite{Vetterli1995WaveletsAS}.
In the most general sense, a subband is simply a \emph{subset} of the basis.
Manipulations such as smoothing or sharpening can then be done by discarding or enhancing the contributions of one or more such subbands.
One way to instantiate these ideas with a PNF is to represent the signal as a sum of different PFNs, each of which is limited to only a specific subband; then one can manipulate these component PFNs separately.
\begin{definition}[]
A PNF $F$ of basis $\bB$ is limited by a  subband $S \subset \bB$ if $F$ is in the span of  $S$.
\label{def:subband-limited}
\end{definition}

A key challenge is to construct a subband limited PNF for certain subbands.
To this end, we need to understand how the subband-limited PNF transforms under different network operations.
Fortunately, for PNFs we only need to study two types of operations: multiplication and addition.
To this end, we need the notion of a \emph{PNF-controllable set of subbands}:

\begin{definition}[PNF-controllable Set of Subbands]
$\sS = \{S_\theta | S_\theta \subset \bB\}_\theta$ is a PNF-Controllable Set of Subbands for basis $\bB$ if (1) $S_{\theta_1} \cup S_{\theta_2} \in \sS$ and (2) there exists a binary function $\otimes: \sS \times \sS \to \sS$ such that 
if $b_1\in S_{\theta_1}, b_2\in S_{\theta_2} \implies b_1 b_2 \in S_{\theta_1} \otimes S_{\theta_2}$.
\label{def:controllable-subband-decomposition-main}
\footnote{It's possible prove \cref{thm:controllability-main} with a more relax version of \cref{def:controllable-subband-decomposition-main}: $b_1 b_2 \in Span(S_{\theta_1} \otimes S_{\theta_2})$.}
\end{definition}

Intuitively, a PNF-controllable set of subbands lends some \emph{predictability} to what happens if two PNFs, limited to different subbands, are put together into a larger PNF:

\begin{theorem}[]
Let $\sS$ be a PNF-controllable set of subbands of basis $\bB$ with binary relation $\otimes$.
Suppose  $F_1$ and $F_2$ are polynomial neural fields of basis $\bB$ that maps $\Re^n$ to $\Re^m$.
Furthermore, suppose $F_1$ and $F_2$ are subband limited by $R_1\in \sS$ and $R_2\in \sS$.
Then we have the following:
\begin{enumerate}
    \item $W_1F_1(x)+W_2F_2(x)$ is a PNF of $\bB$ limited by subband $R_1 \cup R_2$ with $W_1, W_2 \in \Re^{m\times n}$;
    \item $F_1(x)^TWF_2(x)$ is a PNF of $\bB$ limited by subband $R_1 \otimes R_2$ with $W \in \Re^{m \times m}$.
\end{enumerate}
\label{thm:controllability-main}
\end{theorem}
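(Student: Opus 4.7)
The plan is to verify each of the two claims by checking separately (i) that the resulting map fits the syntactic PNF definition of \cref{def:pnf}, and (ii) that the resulting map lies in the span of the claimed subband. Part (ii) is essentially a bookkeeping consequence of \cref{def:subband-limited} together with the closure properties in \cref{def:controllable-subband-decomposition-main}; the main work is in part (i), where we must splice two PNFs with possibly different encodings and different depths into a single PNF.

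First I would handle the syntactic construction. Write $F_k = g^{(k)}_{L_k} \circ \dots \circ g^{(k)}_1 \circ \gamma^{(k)}$ for $k=1,2$, where each $\gamma^{(k)}$ takes values in $\Re^{d_k}$ using basis functions drawn from $\bB$. Define the common encoding $\gamma(x) = [\gamma^{(1)}(x)^T\ \gamma^{(2)}(x)^T]^T \in \Re^{d_1 + d_2}$; every coordinate of $\gamma$ is still an element of $\bB$, so $\gamma$ is a legal PNF encoding. I would then pad each network to a common depth $L = \max(L_1, L_2)$ by postcomposing identity layers (the identity is a degree-one polynomial), and build block-diagonal polynomial layers $\tilde g_i(u,v) = (g^{(1)}_i(u), g^{(2)}_i(v))$ that run both networks in parallel on disjoint coordinates. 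Each $\tilde g_i$ is a finite-degree multivariate polynomial because each $g^{(k)}_i$ is. This gives a PNF whose output is the pair $(F_1(x), F_2(x))$.

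With this parallel PNF in hand, the two claims reduce to tacking on a final polynomial layer. For claim 1, the map $(u,v) \mapsto W_1 u + W_2 v$ is linear, hence a degree-one polynomial, so composing it with the parallel PNF yields a PNF. For claim 2, the map $(u,v) \mapsto u^T W v$ is bilinear, hence a degree-two polynomial, so again composition yields a PNF.

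It remains to check subband containment. Because $F_k$ is subband-limited by $R_k$, every coordinate of $F_k(x)$ is a finite linear combination of elements of $R_k$. For claim 1, each coordinate of $W_1 F_1 + W_2 F_2$ is then a finite linear combination of elements of $R_1 \cup R_2$, and by the first closure property of \cref{def:controllable-subband-decomposition-main} this union belongs to $\sS$. For claim 2, expanding $F_1^T W F_2 = \sum_{i,j} W_{ij}(F_1)_i(x)(F_2)_j(x)$ and distributing, every term is of the form $b_1(x) b_2(x)$ with $b_1 \in R_1$ and $b_2 \in R_2$, and by the second closure property each such product lies in $R_1 \otimes R_2 \in \sS$. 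So the whole expression is in the span of $R_1 \otimes R_2$, as required.

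The only real obstacle I anticipate is pedantic rather than conceptual: making sure the parallel-composition construction really produces a single composition $g_L \circ \dots \circ g_1 \circ \gamma$ in the exact form of \cref{def:pnf}, rather than a more general computational graph. Padding by identities at the shallower branch, as above, handles the depth mismatch, and the block-diagonal structure handles the variable mismatch, so this can be done cleanly. Everything else is a direct unfolding of the definitions.
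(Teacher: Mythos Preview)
Your proposal is correct, and the subband-containment half (your part~(ii)) is essentially identical to the paper's argument: both expand the linear or bilinear form coordinatewise and invoke the two closure properties of \cref{def:controllable-subband-decomposition-main} directly.

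The one difference worth flagging is that you devote most of your effort to part~(i), explicitly splicing the two PNFs into a single composition $g_L \circ \dots \circ g_1 \circ \gamma$ via concatenated encodings, block-diagonal polynomial layers, and identity padding. The paper's appendix proof does not do this at all: it verifies only that the output lies in the span of the claimed subband and silently treats the ``is a PNF'' clause as automatic. Your construction is the natural one and is what a reader who takes \cref{def:pnf} literally would need; the paper's omission is a gap you have filled rather than a different method.
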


Many structures used in the signal processing literature, such as the Steerable Pyramid, use subbands that are PNF-controllable~\cite{simoncelli1995steerable, burt1987laplacian}.
Please refer to the supplementary for the derivations of PNF-controllable sets of subbands. In the following sections, we will focus on using Fourier bases and show how to build PNFs that instantiate subband manipulation efficiently.

\subsection{Fourier PNF}\label{sec:ff-pnf}

In this section, we demonstrate how to build a PNF that can decompose a signal into frequency subbands in the following three steps.
First, we identify a PNF-controllable sets of subband for the Fourier basis.
Second, we choose a finite collection of subbands for the PNF to output, and organize them into controllable sets.
The final step is to instantiate the PNF compactly using \cref{thm:controllability-main}.

\subsubsection{Controllable Subband Decomposition for Fourier Space}
\begin{figure*}[h]
    \centering
    \begin{tabular}{@{}c@{}c@{}c@{}c|c@{}c@{}c@{}c@{}}
        \textcolor{red}{$R^{(1)}$} &
        \textcolor{orange}{$R_1$} &
        \textcolor{blue}{$R_2$} &
        \textcolor{green}{$R_1\otimes R_2$} &
        \textcolor{red}{$R^{(\infty)}$} &
        \textcolor{orange}{$R_3$} &
        \textcolor{blue}{$R_4$} &
        \textcolor{green}{$R_3 \otimes R_4$} 
        \\
         \includegraphics[width=0.12\linewidth, height=0.12\linewidth]{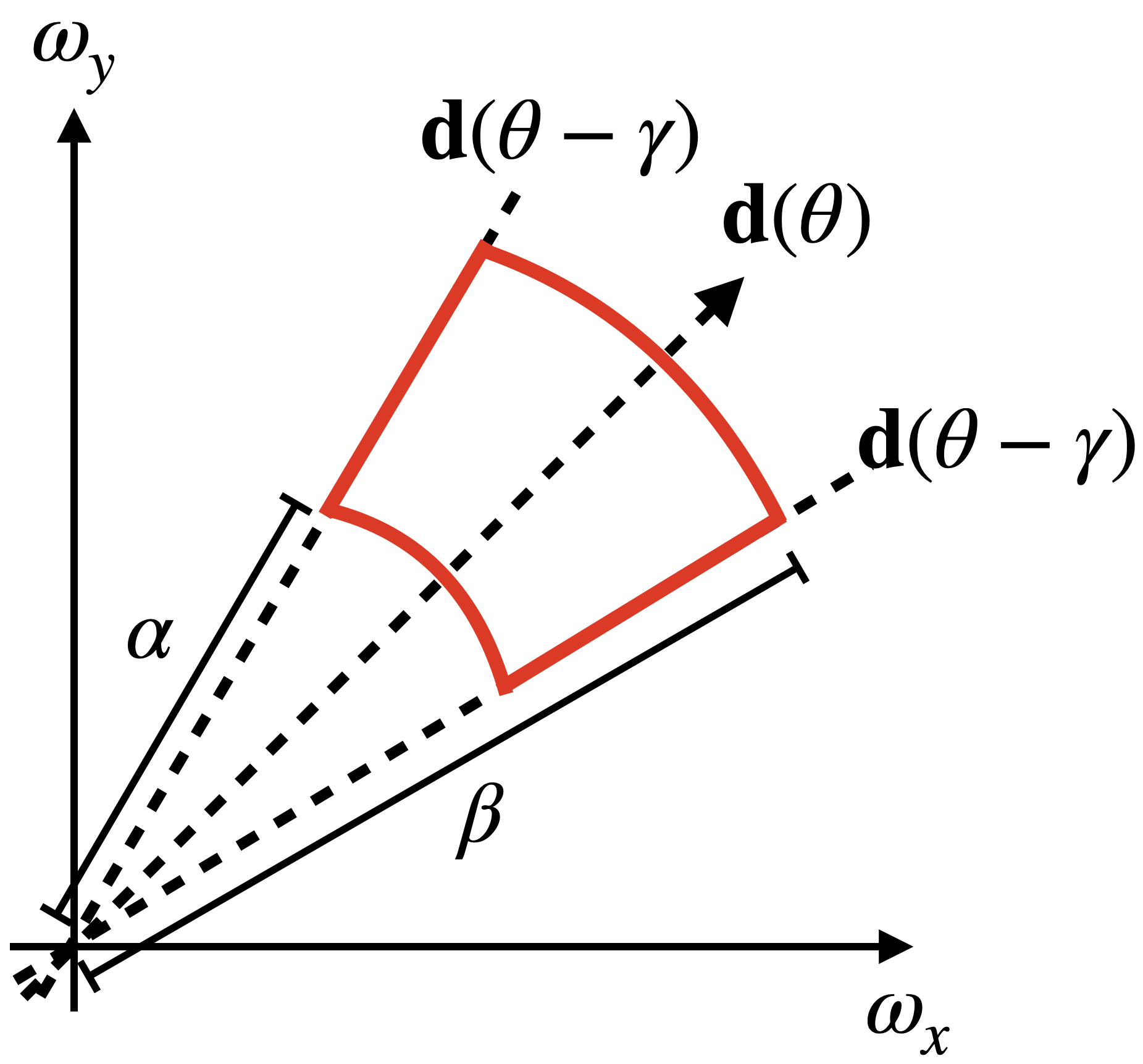} &
         \includegraphics[width=0.12\linewidth, height=0.12\linewidth]{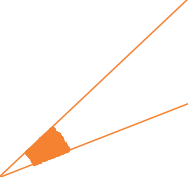} &
         \includegraphics[width=0.12\linewidth, height=0.12\linewidth]{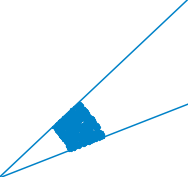} &
         \includegraphics[width=0.12\linewidth, height=0.12\linewidth]{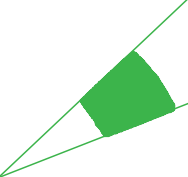} &
         \includegraphics[width=0.12\linewidth, height=0.12\linewidth]{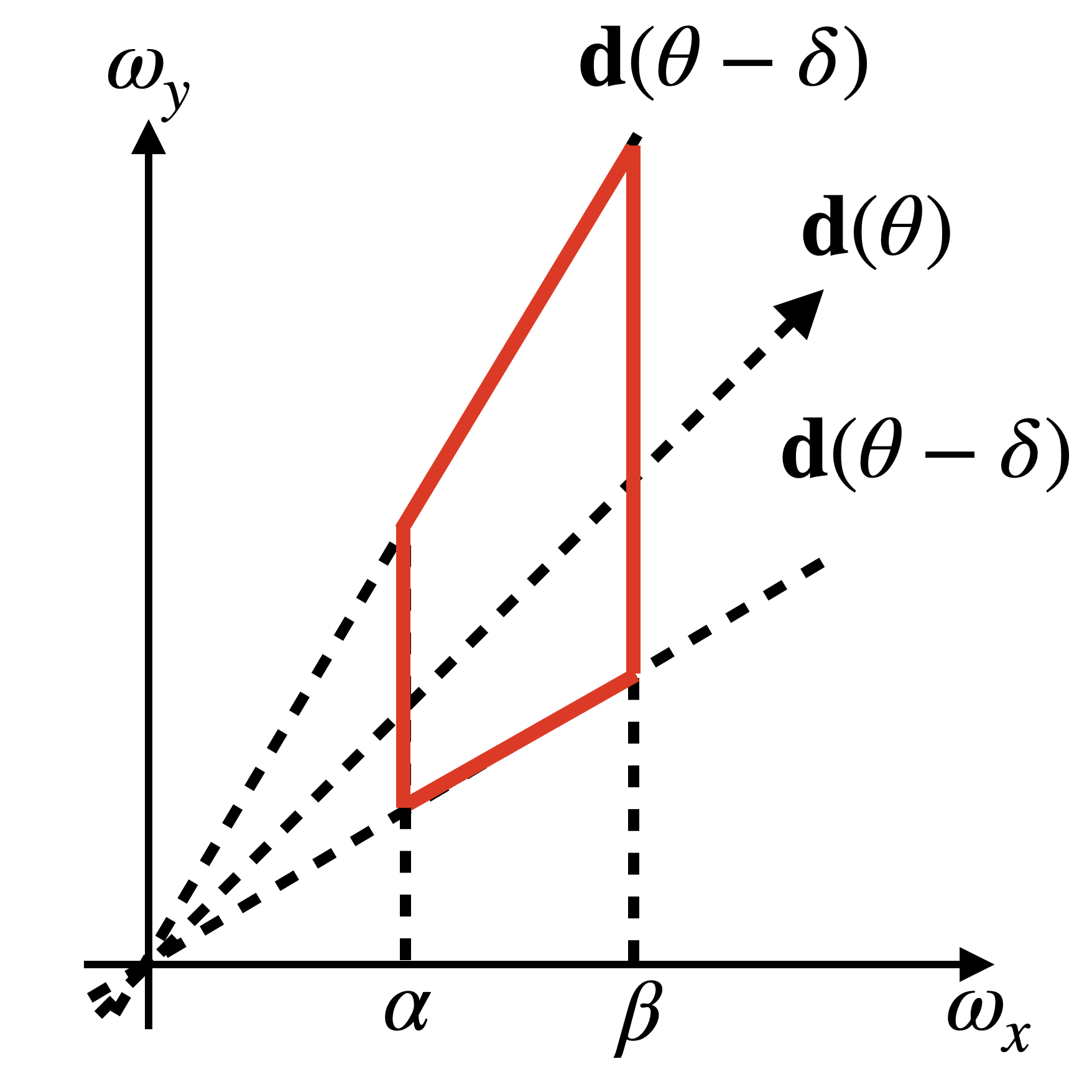} &
         \includegraphics[width=0.12\linewidth, height=0.12\linewidth]{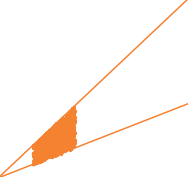} &
         \includegraphics[width=0.12\linewidth, height=0.12\linewidth]{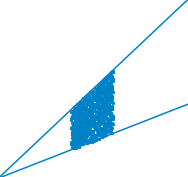} &
         \includegraphics[width=0.12\linewidth, height=0.12\linewidth]{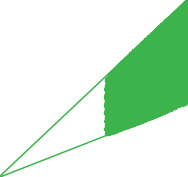}
    \end{tabular}
    \caption{Illustration of PNF-controllable subbands in Fourier space. Left: 2-norm series of subbands; Right: infinite-norm series of subbands.}
    \label{fig:controllable-subbands}
\end{figure*}

The Fourier basis can be written as:
$b_{\oomega}(\xx) = \exp{(i \oomega^T\xx)}$,
where $\oomega, \xx \in \Re^d$ and $i$ is the imaginary unit.
It's easy to see that the fourier basis is complete under multiplication, which satisfies the condition for \cref{thm:interp-main}:
$\exp(i\oomega_1^T\xx)\exp(i\oomega_2^T\xx) = \exp(i(\oomega_1 + \oomega_2)^T\xx).$

Now we need to divide frequency space of $\oomega \in \Re^d$  into subbands that are easy to manipulate and meaningful for downstream tasks.
We define the subband following \citet{simoncelli1995steerable}.
Formally, frequency space is decomposed into following sectors:
\begin{align}
R^{(p)}(\alpha, \beta, \dd, \gamma) =
    \left\{
    \oomega |
    \alpha \leq \norm{\oomega}_p \leq \beta, \norm{\dd}=1,
    \oomega^T\dd < \gamma\norm{\oomega}
    \right\}
    \label{def:ff-subband},
\end{align}
where $p$ describe which norm do we choose to describe the frequency bands.
Intuitively, $\alpha$ defines the lower band limits and $\beta$ defines the upper band limits. 
The vector $\dd$ defines the orientation of the subband and $\gamma$ defines the angular width of the subband.
\cref{fig:controllable-subbands} provides illustrations of $R^{(p)}$.

This definition of subband allows us to organize them into controllable sets.
For example, we can show that the following sets of subbands are controllable:
\begin{theorem}[]
Let $\mathcal{S}$
be a set of subbands defined as $\mathcal{S} = \{R^{(2)}(\alpha, \beta, \dd, \gamma) | \forall  0 \leq \alpha \leq \beta\}$.
If $|\gamma| < \frac{\pi}{4}$, then $\mathcal{S}$ is PNF-controllable.
Specifically,
if $\oomega_1 \in R^{(2)}_1(\alpha_1, \beta_1) \in \mathcal{S}$, and 
$\oomega_2 \in R^{(2)}_2(\alpha_2, \beta_2) \in \mathcal{S}$, then $b_{\oomega_i} b_{\oomega_i} = b_{\oomega_3}$ implies that
$\oomega_3 \in R^{(2)}(\sqrt{\cos(2\gamma)}(\alpha_1+\alpha_2), \beta_1 + \beta_2) \in \mathcal{S}$.
\label{thm:ff-control-main}
\end{theorem}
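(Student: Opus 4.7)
The plan is to prove the theorem by reducing the subband closure claim to two elementary geometric facts about the sum of two vectors, exploiting the trivial product rule for the Fourier basis.

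First, I would use the multiplicative identity already observed in the excerpt, $b_{\oomega_1}(\xx) b_{\oomega_2}(\xx) = b_{\oomega_1+\oomega_2}(\xx)$. This collapses the task to showing that if $\oomega_i \in R^{(2)}(\alpha_i,\beta_i,\dd,\gamma)$ for $i=1,2$, then $\oomega_1+\oomega_2$ lies in $R^{(2)}(\sqrt{\cos(2\gamma)}(\alpha_1+\alpha_2),\beta_1+\beta_2,\dd,\gamma)$. Defining $\otimes$ by exactly this rule then supplies the candidate binary operation required by \cref{def:controllable-subband-decomposition-main}.

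Second, I would verify the three conditions of the target subband in turn. The upper norm bound is the triangle inequality, $\|\oomega_1+\oomega_2\|_2\leq \beta_1+\beta_2$. The orientation condition follows by noting that for $\gamma<\pi/2$, the cone of half-angle $\gamma$ around $\dd$ is a convex cone, so $\oomega_1+\oomega_2$ remains within angle $\gamma$ of $\dd$. The substantive step is the lower bound on $\|\oomega_1+\oomega_2\|$: since both $\oomega_i$ lie within angle $\gamma$ of $\dd$, the (spherical) triangle inequality for angles gives that the angle between them is at most $2\gamma$, hence $\oomega_1^T\oomega_2 \geq \cos(2\gamma)\|\oomega_1\|\|\oomega_2\|$. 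Expanding and using $|\gamma|<\pi/4$, which gives $0<\cos(2\gamma)\leq 1$, yields
\[
\|\oomega_1+\oomega_2\|_2^2 \;=\; \|\oomega_1\|^2+\|\oomega_2\|^2+2\,\oomega_1^T\oomega_2 \;\geq\; \cos(2\gamma)\bigl(\|\oomega_1\|+\|\oomega_2\|\bigr)^2 \;\geq\; \cos(2\gamma)(\alpha_1+\alpha_2)^2,
\]
and taking square roots gives the claimed lower radial limit.

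Finally, I would dispatch the remaining union-closure requirement of \cref{def:controllable-subband-decomposition-main}. Because $\dd$ and $\gamma$ are fixed across $\mathcal{S}$, the union of two radial annular sectors either overlaps or abuts (landing back in $\mathcal{S}$), or else leaves a gap, in which case I would appeal to the relaxation pointed out in the footnote to \cref{def:controllable-subband-decomposition-main} and enclose the union inside the smallest containing sector $R^{(2)}(\min\alpha_i,\max\beta_i,\dd,\gamma)$. I expect the lower radial bound to be the only real obstacle: it is the step that forces the constant $\sqrt{\cos(2\gamma)}$ and that genuinely needs $|\gamma|<\pi/4$ so that the cross-term bound is nontrivial; the remaining pieces are essentially the triangle inequality and convexity of a cone.
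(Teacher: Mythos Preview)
Your proposal is correct and follows essentially the same route as the paper: reduce to $\oomega_3=\oomega_1+\oomega_2$, use the triangle inequality for the upper bound, bound the angle between $\oomega_1,\oomega_2$ by $2\gamma$ to get $\oomega_1^T\oomega_2\ge\cos(2\gamma)\|\oomega_1\|\|\oomega_2\|$ and hence the $\sqrt{\cos(2\gamma)}$ lower bound, and check the cone condition. The paper handles orientation by the explicit inequality $(\oomega_1+\oomega_2)^T\dd\ge\cos(\gamma)(\|\oomega_1\|+\|\oomega_2\|)\ge\cos(\gamma)\|\oomega_1+\oomega_2\|$, which is precisely your convex-cone observation written out; and it does not address union closure at all, so your final paragraph is extra care beyond what the paper provides.
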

This theorem captures the intuition that the multiplication of two waves of similar orientations creates high-frequency waves at that orientation.
It allows us to predict the spectrum properties of the output of the network when knowing the spectrum properties of the inputs.
\cref{fig:controllable-subbands} provides illustrations of how two 2D subbands interact under multiplications.

\subsubsection{Subband Tiling}\label{sec:tiling}
In order to represent different signals, our networks need to be able to leverage basis functions with different orientations and within different bandwidths.
With that said, we need to choose a set of PNF-controllable subbands to cover all basis functions we want to use.
For example, we can tile the space with the set of controllable subbands in \cref{thm:ff-control-main} in the following way:
\begin{align}
\tT_{circ} = \{
S_{ij} = R^{(2)}(b_i, b_{i+1}, \dd(\theta_j), \delta) | b_1\leq \dots \leq b_n, \theta_j = j\delta,\delta=\frac{\pi}{m},1\leq j \leq 2m\},
\label{eq:2d-tile-polar}
\end{align}
where $\dd(\theta) = [\sin(\theta), \cos(\theta)]^T$ denotes unit vector rotate with angle $\theta$, $\delta=\frac{\pi}{m}$, and $m$ sufficiently small to allow the application of ~\cref{thm:ff-control-main}.
\begin{wrapfigure}{R}{0.4\textwidth}
\vspace{-3em}
\begin{center}
\includegraphics[width=0.19\textwidth,height=0.19\textwidth]{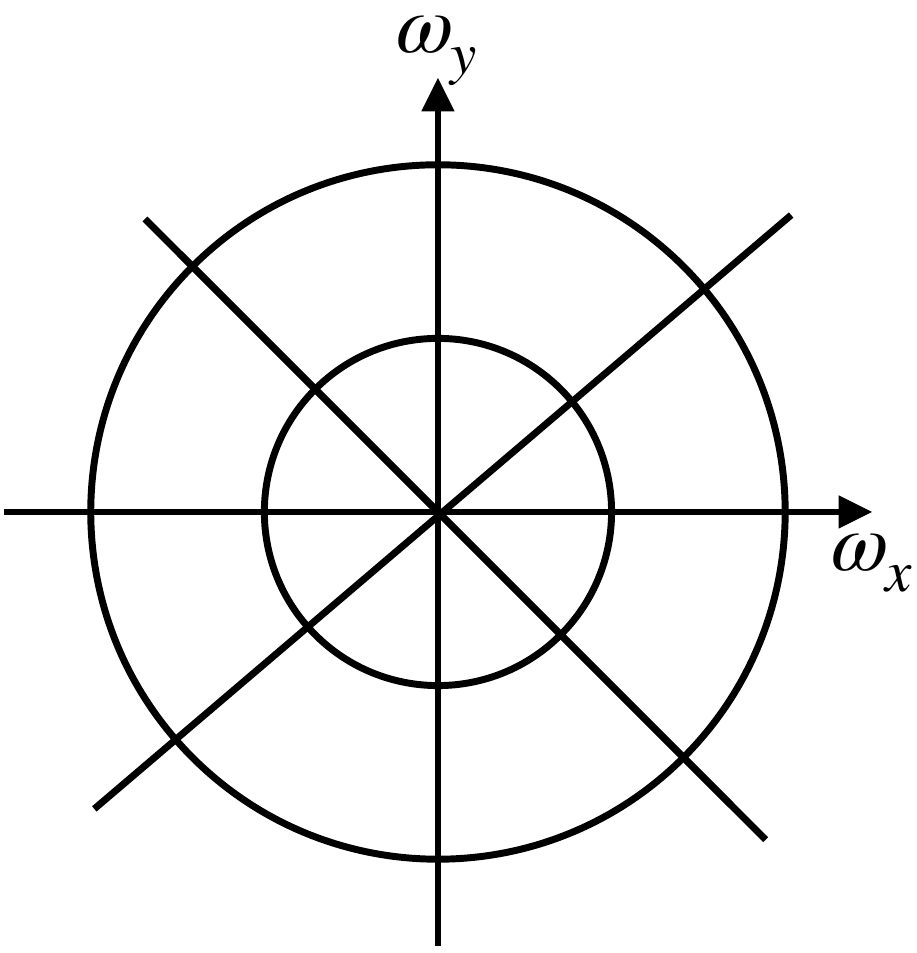}
\includegraphics[width=0.19\textwidth,height=0.19\textwidth]{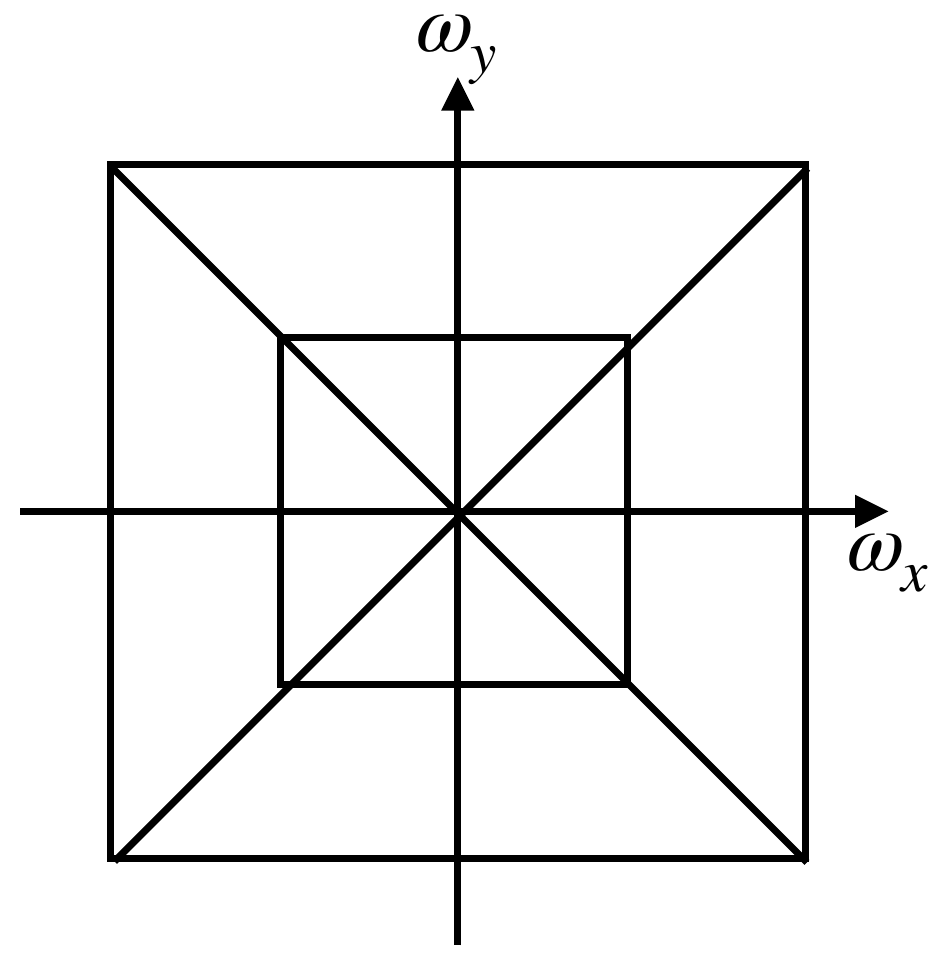}
\caption{Left: $\mathcal{T}_{circ}$; Right: $\mathcal{T}_{rect}$.}
\label{fig:tiling}
\end{center}
\vspace{-3em}
\end{wrapfigure}

For 2D images, the region of interest is $[-N,N]^2$ where $N$ is the bandwidth determined by the Nyquist Sampling Theorem~\cite{nyquist1928certain}.
To tile this rectangular region well without introducing unnecessary high-frequency details, we will use pesudo polar coordinate grids, which divide the space into vertical or horizontal sub-regions and tile those subregions according to $l$-$\infty$ norm.
Formally, the 2D pseudo polar coordinate tiling can be written as:
\begin{align}
\tT_{rect} = \{
S_{ij} = R^{(\infty)}(b_i, b_{i+1}, \dd(\theta_j), \delta) | 
b_1\leq \dots \leq b_n, \theta_j = j\delta,1\leq j \leq 2m, j\neq m\}.
\label{eq:2d-tile}
\end{align}
Note that we exclude certain regions to avoid having a subband to include orientation at $\frac{\pi}{4}$ and $\frac{3\pi}{4}$ in order for \cref{thm:ff-control-main} to generalize to such tiling.
\cref{fig:tiling} contains an illustration of these two types of tiling.
We show detailed derivation in the supplementary.
  
Tiling the spectrum space with subbands from $\tT_{circ}$ or $\tT_{rect}$ allows us to organize information in a variety of meaningful ways.
For example, this set of subbands can be grouped into different cones $\{S_{ij}\}_{j=1}^{2m}$.
Each cone corresponds to a particular orientation of the signal.
Alternatively, we can also organize this set of subbands into different rings $\{S_{ij}\}_{i=1}^n$, which corresponds to the decomposition of an image into the Laplacian Pyramid.

\subsubsection{Network architecture} 
% \todo{add a figure. Long figure, 2x8.}
\begin{figure*}[ht]
    \centering
     \includegraphics[width=\linewidth]{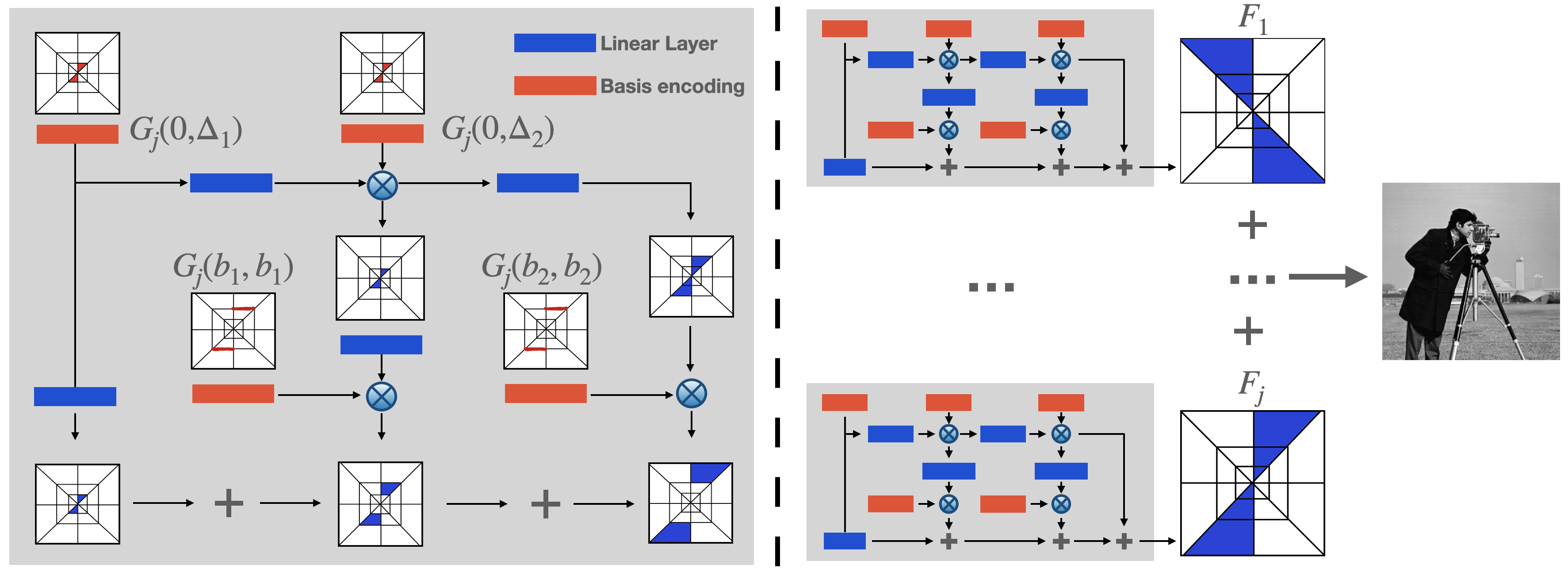} 
     \caption{Illustration of Fourier PNF architecture. Fourier PNF is an ensemble model. The final result is summed over a series of PNFs $F_j$, whose structure is shown on the left side of the figure.}
    \label{fig:architecture}
\end{figure*}
We now have decided on the set of subbands we want to produce by PNF.
We want to design the final Fourier PNF as an ensemble of subband limited PNFs:
$F(\xx) = \sum_j\sum_i O_{ij}F_{ij}(\xx),$
where $F_{ij}(\xx): \Re^n\to\Re^h$ is a PNF that's limited with subband $S_{ij}$ defined in \cref{eq:2d-tile}, and $O_{ij}\in \Re^{h\times m}$ aggregates the output signals together.
One way to achieve this is to naively define $F_{ij}$ as a two layers PNF the feature encoding layer to include only the basis functions in the subband $S_{ij}$ followed by a linear layer.
However, such an approach fails to achieve good performance without a huge number of trainable parameters.
Alternatively, we leverage \cref{thm:controllability-main} to factorize $F$:
\begin{align}{\textstyle
F(\xx) = \sum_j F_j(\xx),\ &F_j(\xx) = \sum_{k=1}^nG_{j}(\xx, b_i, b_i) W_{jk} Z_{j,k}(\xx), \label{eq:ff-pnf-within-set} \\
    Z_{j,1}(\xx) = G_{j}(\xx, 0, \Delta_1),\ &
    Z_{j,k}(\xx) = G_{j}(\xx, 0, \Delta_k)W_i Z_{j,k-1}(\xx),
}\end{align}
where $G_{j}(\xx, a, b)$ is subband limited in $R^{(\infty)}(a, b, d(\theta_j), \delta)$ and $\Delta_k = b_k - b_{k-1}$.
This network architecture of $F_i$ is illustrated in \cref{fig:architecture}.
We instantiate this architecture by setting $G_{j}(\xx, a, b)$ into a linear transform of basis sampled from the subband to be limited:
\begin{align}
    G_{j}(\xx, a, b) = W_i\gamma_j(\xx), \gamma_j \in R^{(\infty)}(a, b, d(\theta_j), \delta)^d, W_i\in \Re^{h\times d},
\end{align}
where $h$ and $d$ is the dimension for the output and the feature encoding.
We provide additional implementation details in the supplementary.

\section{Results}
\label{sec:experiments}

We demonstrate the applicability of our framework along three different axes: (\textit{Expressivity}) For a number of different signal types, we demonstrate our ability to fit a given signal. We observe that our method enjoys faster convergence in terms of the number of iterations. (\textit{Interpretability}) 
We visually demonstrate our learned PNF, whose outputs are localized in the frequency domain with both upper band-limit and lower-band limit. 
(\textit{Decomposition}) Finally, we demonstrate the ability to control a PNF on the tasks of texture transfer and scale-space representation, based on our  subband decomposition. For all presented experiments, full training details and 
additional qualitative and quantitative results are provided in the supplementary.

\subsection{Expressivity}\label{sec:expressivity}

\newcolumntype{Y}{>{\centering\arraybackslash}X}
\begin{figure*}
\begin{center}
\begin{tabular}{c@{}c@{}c@{}c@{~~~}c@{}c@{}c@{}c@{}c}
\includegraphics[width=0.12\linewidth,height=0.12\linewidth]{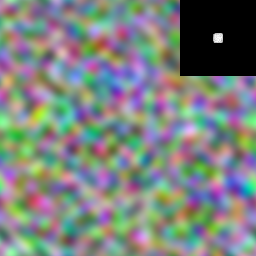} &
\includegraphics[width=0.12\linewidth,height=0.12\linewidth]{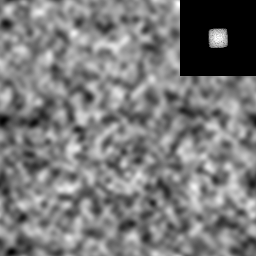} &
\includegraphics[width=0.12\linewidth,height=0.12\linewidth]{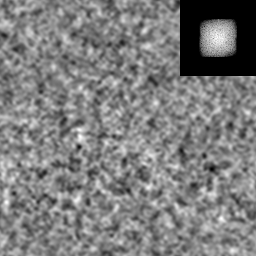} &
\includegraphics[width=0.12\linewidth,height=0.12\linewidth]{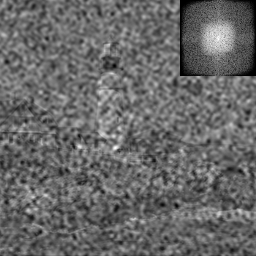} &
\includegraphics[width=0.12\linewidth,height=0.12\linewidth]{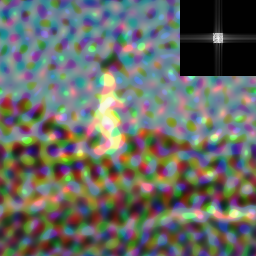} &
\includegraphics[width=0.12\linewidth,height=0.12\linewidth]{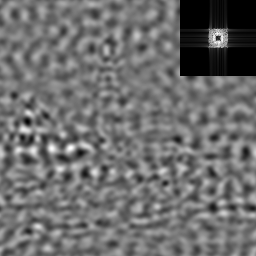} &
\includegraphics[width=0.12\linewidth,height=0.12\linewidth]{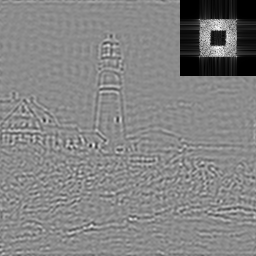} &
\includegraphics[width=0.12\linewidth,height=0.12\linewidth]{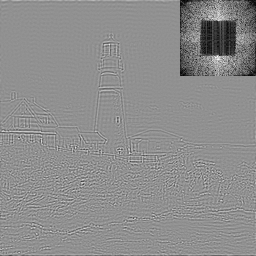} \\

\includegraphics[width=0.12\linewidth,height=0.12\linewidth]{figures/laplacian/bacon/21_BACON_log_quant_last_gau_0.png} &
\includegraphics[width=0.12\linewidth,height=0.12\linewidth]{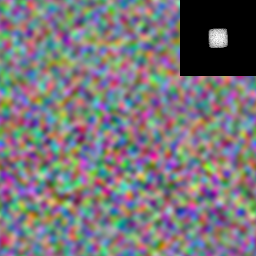} &
\includegraphics[width=0.12\linewidth,height=0.12\linewidth]{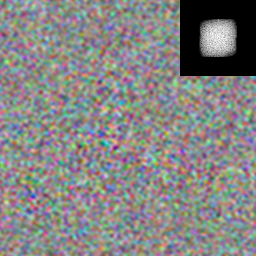} &
\includegraphics[width=0.12\linewidth,height=0.12\linewidth]{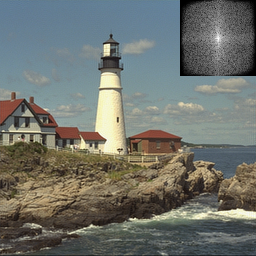} &
\includegraphics[width=0.12\linewidth,height=0.12\linewidth]{figures/laplacian/pnf/21_pnf_PNF_pseudopolar_fix2_gau_0.png} &
\includegraphics[width=0.12\linewidth,height=0.12\linewidth]{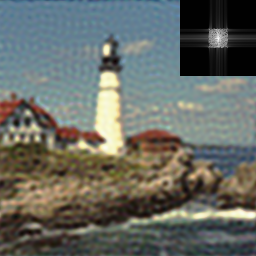} &
\includegraphics[width=0.12\linewidth,height=0.12\linewidth]{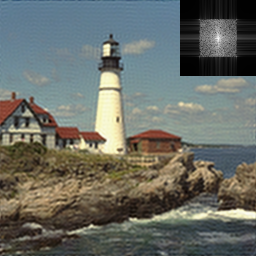} &
\includegraphics[width=0.12\linewidth,height=0.12\linewidth]{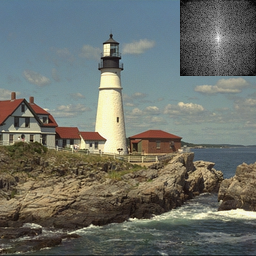}
 \\
 \multicolumn{4}{c}{BACON} & \multicolumn{4}{c}{PNF}
\end{tabular}
\caption{BACON: The bottom row shows the output of each layer which is upper band limited. The top row (columns 2-4) shows the difference between the output of a given layer and the one before it. PNF: The top row shows the output of each layer which is both upper and lower band limited. The bottom row (columns 2-4) shows the addition of the output at a given layer and the one before it. }
\label{fig:laplacian-pyramid}
\end{center}
\end{figure*}

\begin{table}
\centering
\begin{minipage}[t]{.49\textwidth}
\centering
\caption{Image Fitting on the DIV2K dataset.  }
\begin{tabular}{@{}llll@{}}
\toprule
           & PSNR  & SSIM  & \# Params  \\ \midrule
RFF        & 28.72 & 0.834 & 0.26M \\
SIREN      & 29.22 & 0.866 & 0.26M \\
BACON      & 28.67 & 0.838 & 0.27M \\
BACON-L    & 29.44 & 0.871 & 0.27M \\
BACON-M    & 29.44 & 0.871 & 0.27M  \\
PNF        & \textbf{29.47} & \textbf{0.874} & 0.28M  \\ 
\bottomrule
\label{tab:img_fit_quant}
\end{tabular}
\end{minipage}%\hspace{0.05cm}
\begin{minipage}[t]{.49\textwidth}
\centering
\caption{3D shape fitting. CD is Chamfer Distance ($\times 10^{6}$)}
\begin{tabular}{@{}lccc@{}}
\toprule
              &   CD   & F-score  & \# Params \\ \midrule
SIREN            & 9.00        & 99.76\% & 0.53M \\
BACON            & 2.60        & 99.84\% & 0.54M \\
BACON-L       & 2.60        & 99.85\% & 0.54M \\
BACON-M       & 2.61        & 99.85\% & 0.54M \\            \\
PNF    & \textbf{2.25}	 &  \textbf{99.97\%}           & 0.59M        \\
\bottomrule
\label{tab:sdf}
\end{tabular}
\end{minipage}%\hspace{0.05cm}
\end{table}

\begin{table}
\centering

\caption{NeRF Fitting for $64^2$ resolution. A comparison between PNF and bacon is shown for PSNR and SSIM at 300 and 500 epochs. as well as corresponding number of parameters used. 
}
\begin{tabular}{@{}llcccc@{~~~~}c}
\toprule
& & 1x & 1/2x & 1/4x & 1/8x & Avg \\ 
\midrule

  300 epochs & BACON & 28.07/0.936 & 29.95/0.943 & 30.75/\textbf{0.939} & 31.37/\textbf{0.927} & 30.04/\textbf{0.936}   \\
 PSNR/SSIM   & PNF &  \textbf{29.89}/\textbf{0.937} & \textbf{31.16}/\textbf{0.946} & 
\textbf{32.11}/0.934 & \textbf{32.00}/0.920 & \textbf{31.29}/0.934 \\ 
 \midrule
 
  500 epochs & BACON & 28.51/0.932 & 30.77/0.941 & \textbf{31.74}/\textbf{0.940} & 
 \textbf{32.10}/\textbf{0.928} & 30.78/0.935\\  
 PSNR/SSIM  & PNF & \textbf{29.33}/\textbf{0.950} & \textbf{31.19}/\textbf{0.950} & 
 31.48/0.936 & 31.37/0.926 & \textbf{30.84}/\textbf{0.940} \\
 \midrule 
 \multirow{2}{*}{\# Params} & BACON & 0.54M & 0.41M & 0.27M & 0.14M & 0.34M \\ 
  & PNF & 0.46M & 0.34M  & 0.23M & 0.12M & 0.29M\\
                      \bottomrule
\label{tab:nerf-expressivity}
\end{tabular}

% \end{table}
\end{table}

We demonstrate that a PNF is capable of representing signals of different modalities including images, 3D signed distance fields, and radiance fields.
We compare our Fourier PNF with state-of-the-art neural field representations such as BACON~\cite{Lindell2021BACONBC}, SIREN~\cite{sitzmann2020siren}, and Random Fourier Features~\cite{tancik2020fourier}.

\paragraph{Images} 
Following BACON~\cite{Lindell2021BACONBC}, we train a PNF and the baselines to fit images from the DIV2K~\cite{agustsson2017ntire} dataset.
During training, images are downsampled to $256^2$. All networks are trained for $5000$ iterations.
At test time, we sample the fields at $512^2$ and compare with the original resolution images.

Different from other networks, BACON is supervised with the training image in all output layers.
For fair comparison, we also include two BACON variants, which are only supervised either at the last output layer (``BAC-L'') or using the average of all output layers (``BAC-M'').
We report the PSNR and Structural Similarity (SSIM) scores of these methods in \cref{tab:img_fit_quant}.
Fourier PNF improves on the performance of the previous state-of-the-arts.

In addition to its expressivity, PNF is also able to localize signals in different regions.
Specifically, Fourier PNF decomposes an image into subbands in the frequency domain. 
For example, \cref{fig:laplacian-pyramid} shows that the output branches of the Fourier PNF correspond to specific frequency bands. Note that such decomposition forms for all layers while training is only performed for the last layer of the PNF. Such subband control is reminiscent of the traditional signal processing analogue of Laplacian Pyramids~\cite{burt1987laplacian}.
Note that each layer of the PNF is guaranteed to be both lower and upper frequency band limted, 
while this is not achievable by BACON. For comparison \cref{fig:laplacian-pyramid} shows the corresponding result of training BACON only for the last layer. 
In \cref{sec:texture} and \cref{sec:scalespace} we demonstrate how to leverage such fine-grain localization for texture transfer and scale space interpolation.

\paragraph{3D Signed Distance Field}

\begin{figure}
    \vspace{-0.2cm}
    \includegraphics[width=0.6\linewidth]{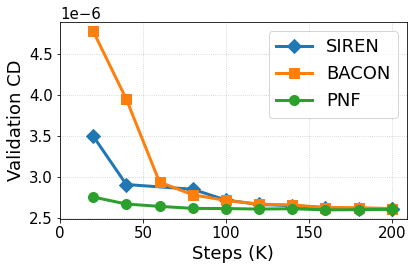}
    \includegraphics[width=0.39\linewidth]{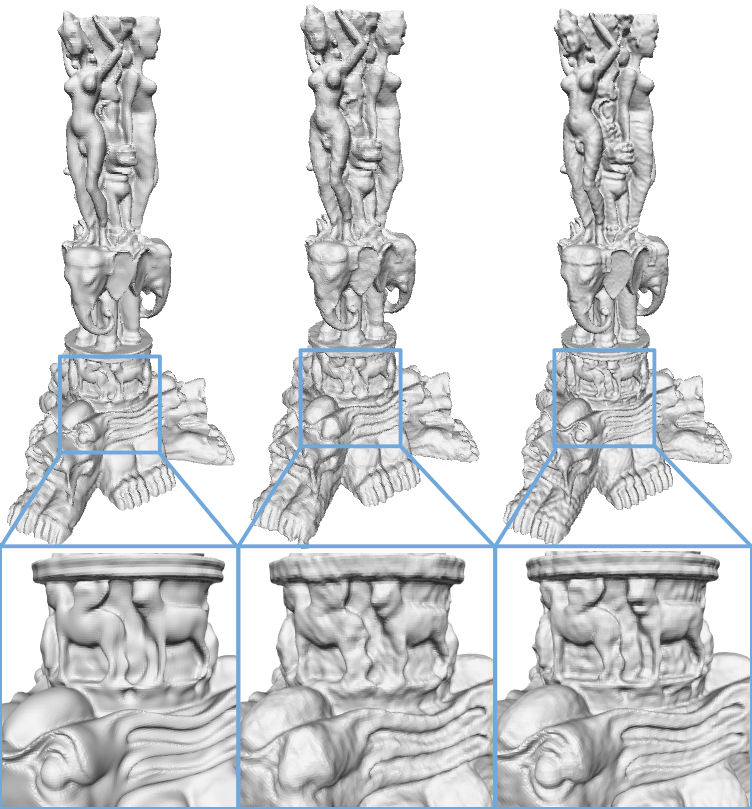}
    \caption{LHS: Convergence time in terms of number of steps/iterations for the \textit{Thai Statue} model~\cite{Stanford3DScanningRepository}. The x-axis shows the number of steps (K) and the y-axis shows the validation Chamfer Distance ($\times 10^6$). RHS: Qualitative comparison for the \textit{Thai Statue} model~\cite{Stanford3DScanningRepository}.}
    \label{fig:thai_full}
\end{figure}

One advantage of neural fields is their ability to represent irregular data such as 3D point cloud or signed distance fields with high fidelity.
We demonstrate that PNFs can represent 3D shapes via signed distance fields expressively.
We follow \bacon{}'s experimental setting to fit a range of 3D shapes from the Stanford 3D scanning repository~\cite{Stanford3DScanningRepository} (a slightly different normalization is used, see supplementary for details).
During training, we sample $10k$ oriented points from the ground truth surface and perturb them with noise to compute an estimate of SDF as ground truth.
All models are trained with reconstruction loss for the same number of iterations.
A quantitative comparison of the different methods is reported in \cref{tab:sdf}. We observe that PNF achieves slightly higher fitting quality than other methods. 
Next, we investigate convergence behavior in \cref{fig:thai_full} (Left) on the \textit{Thai Statue} model~\cite{Stanford3DScanningRepository}. Here, we observe that PNFs converge much more quickly than SIREN and BACON while achieving comparable quality. In \cref{fig:thai_full} (Right) we provide a qualitative comparison of our result, \bacon{}, and SIREN.

\begin{figure}
\begin{center}
\begin{tabular}{cc}
\includegraphics[draft=false,width=0.96\linewidth,height=0.186\linewidth]{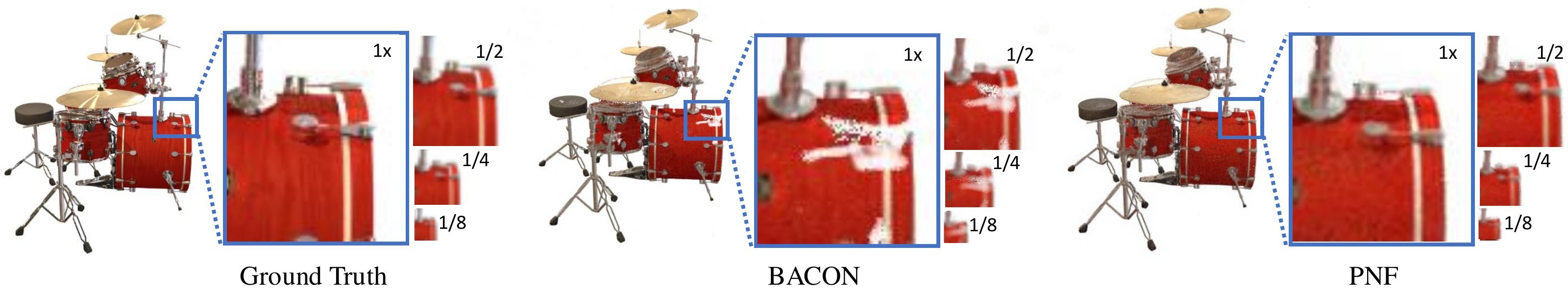}  
\end{tabular}
\caption{Qualitative comparison of neural radiance fields for the \textit{Drums} scene given in \cite{barron2021mip} at the 300 epochs.  The results suggests that our model can achieve better quality in early iterations.}
\label{fig:sdf_nerf}
\end{center}
\end{figure}

\paragraph{Neural Radiance Field}
In the context of neural radiance fields, we show that PNFs can represent signals in higher dimensions compactly and faithfully.
We follow BACON's setting and train a PNF to model the radiance field of a set of Blender scenes~\cite{mildenhall2020nerf}, see supplementary for details. 
Specifically, the PNF outputs a 4D vector of RGB and density values.
These values are used by a volumetric renderer proposed of NeRF~\cite{mildenhall2020nerf} to produce an image, which is supervised with  reconstruction loss.
At test time, we use the same volumetric renderer to produce images from different camera poses and evaluate them with the known ground truth images.
The results, in comparison to BACON, are presented in \cref{tab:nerf-expressivity}. A Qualitative comparison is also given in \cref{fig:sdf_nerf}.
Similarly to SDF, we are able to match or improve BACON's performance using about $60\%$ of the total number of iterations.
This can be potentially attributed to PNF's ability to disentangle coarse signal from finer one; and so at higher layers, BACON needs to relearn low-frequency details in the image while the PNF does not.

\paragraph{Parameter Efficiency.}
One major advantage of using neural fields is that it can represent signal with high expressivity while remaining compact.
In this section, we will demonstrate that PNF also retains this advantages.
While choosing the hyperparameters for PNFs (e.g. hidden layer size) for the expressivity experiemnts, we make sure the PNF has a comparable number of parameters with the prior works.
A comparison of the number of trainable parameters used for different model is included in \cref{tab:img_fit_quant}-\cref{tab:nerf-expressivity}.
We can see that PNFs are achieving comparable performance with the state-of-the-arts neural fields with roughly the same amount of trainable parameters.

\begin{table}[h]
\centering
\centering
\caption{A comparison of the training and inference time for image fitting of a cameramen image. 
}
\begin{tabular}{lcccc}
\toprule
  & Time(s)/Step & Time(s) to 36 PSNR & Final PSNR & Final SSIM  \\
\midrule
 BACON & 0.16 & 177 & 37.45 & 97.33 \\
 PNF & 0.64  & 96 & 37.45 & 97.44 \\
 SIREN & 0.10 & 163 & 36.90 & 97.50 \\
 RFF & 0.08 & 275 & 36.23 & 95.05 \\
\bottomrule
\label{tab:time}
\end{tabular}

\end{table}
\paragraph{Training and Inference Time}
While we observe that Fourier PNF can converge in fewer iterations, but due to the ensemble nature of Fourier PNF, each forward and backward pass of PNF requires longer time to evaluate.
This leads to the question, can we actually achieve faster convergence in wall time?
We profile the training and inference time of the image fitting experiment in \cref{tab:time} for the camera men image. 
The results show that it's possible for PNF to converge faster even in terms of clock time because PNFs can converge is drastically fewer number of steps.

\subsection{Texture Transfer}\label{sec:texture}

\begin{figure*}
\begin{center}
\begin{tabular}{c@{~~}lc@{~~}c@{}c@{}cc@{~~}c@{}c@{}c@{}}

& & & L-1 & L-2 & L-3 & & B-1 & B-2 & B-3 \\  
\rotatebox[origin=l]{90}{~~Content} & \includegraphics[width=0.118\linewidth]{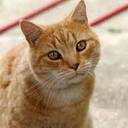} &
\rotatebox[origin=l]{90}{~BACON} & 
\includegraphics[width=0.118\linewidth]{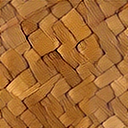} &
\includegraphics[width=0.118\linewidth]{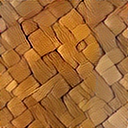} &
\includegraphics[width=0.118\linewidth]{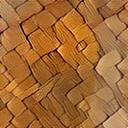} & 
\rotatebox[origin=l]{90}{CLIPStyler} & 
\includegraphics[width=0.118\linewidth]{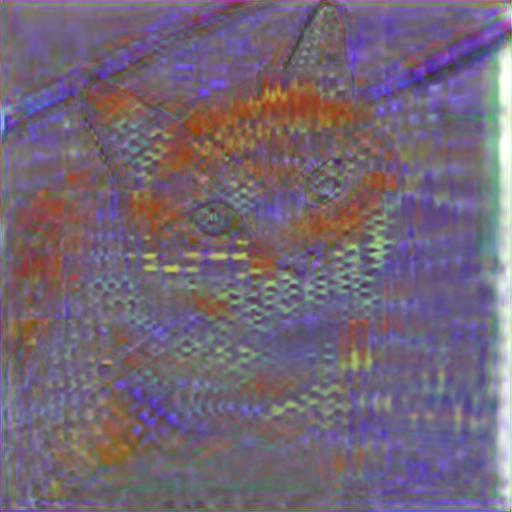} &
\includegraphics[width=0.118\linewidth]{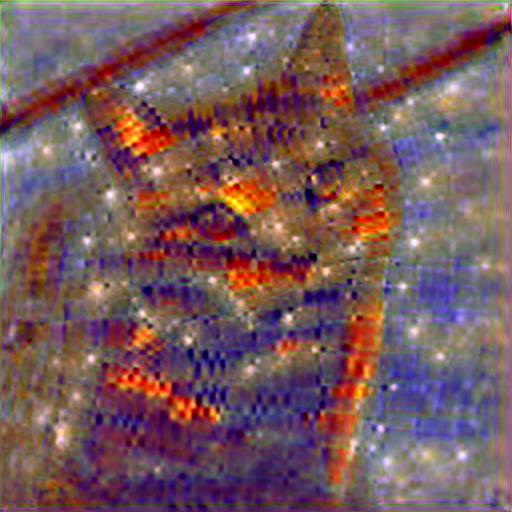} &
\includegraphics[width=0.118\linewidth]{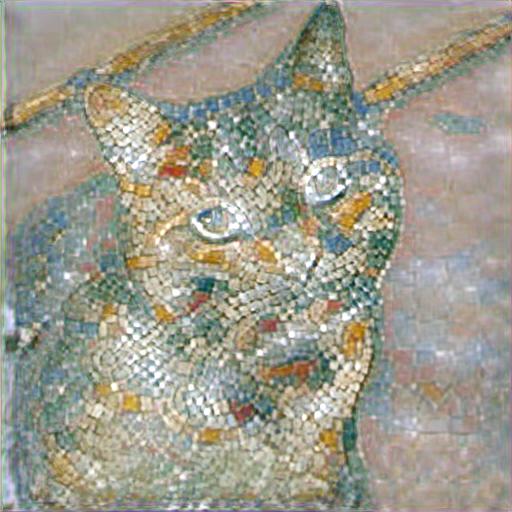}
\\

& & & L-1 & L-2 & L-3 & & L-1 & L-2 & L-3 \\  
\rotatebox[origin=l]{90}{~~Texture} & 
\includegraphics[width=0.118\linewidth]{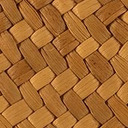} &
\rotatebox[origin=l]{90}{~~~~PNF} & 
\includegraphics[width=0.118\linewidth]{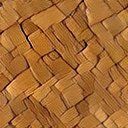} &
\includegraphics[width=0.118\linewidth]{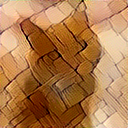} &
\includegraphics[width=0.118\linewidth]{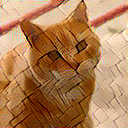} & 
 
\rotatebox[origin=l]{90}{~~~~PNF} & 
\includegraphics[width=0.118\linewidth]{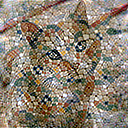} &
\includegraphics[width=0.118\linewidth]{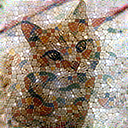} &
\includegraphics[width=0.118\linewidth]{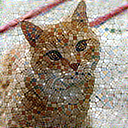} 
\\

& & & \multicolumn{3}{c}{(a)} & & \multicolumn{3}{c}{(b)} \\
\end{tabular}
\vspace{-0.2cm}
\caption{Texture transfer. (a). We optimize specific layers of the neural field. L-1 (layers 1-4), L-2 (layers 2-4), L-3 (layers 3-4). 
The texture contains stationary texture (in high frequency) and structured texture (in low frequency). As opposed to \bacon{},  our method can isolate the stationary texture. (b) We consider the text based texture transfer objectives of CLIPStyler~\cite{kwon2021clipstyler} for the cat content image. For PNF, we consider the text prompt ``Mosaic'' and apply the same layer-based optimization as in (a). For comparison, we apply CLIPStyler of ``Low frequency mosaic'' (B-1),  ``High frequency mosaic'' (B-3) and ``Mosaic'' (B-3). }
\label{fig:texture}
\end{center}
\end{figure*}

Traditional approaches for image manipulation assume the input and output images are represented using a regular grid~\cite{gatys2016image, efros2001image}. Recent work has opted to use neural fields instead, for example allowing fine detailed texturing of 3D meshes~\cite{michel2021text2mesh}. Our formulation allows for an additional layer of control. In particular, due to our subband decompositionality, one can restrict the manipulation to particular subbands. The manipulation can then be driven by optimizing the PNF weights corresponding to those  subbands, using various loss objectives. 

We demonstrate this in the setting of texture transfer. We consider a Fourier PNF with four layers of the following frequency ranges (in Hz): (1) $[0, 8]$, (2) $[4, 16]$, (3) $[12, 32]$ and (4) $[28, 64]$. We consider a \textit{content image} C of $128^2$ resolution. In the first stage, we train a network to fit C as in \cref{sec:expressivity}. In the second stage, we optimize only the parameters of specifics layers. To optimize these parameters, we query the network on a $128^2$ image grid producing image I. We then consider two sets of objectives: (a) Content and style loss objectives as given in \cite{gatys2015neural}. (b), Text-based texture manipulation objectives as given by CLIPStyler~\cite{kwon2021clipstyler}. See further details in the supplementary. 

In \cref{fig:texture}, we illustrate the result of optimizing layers 1 to 4, 2 to 4 or 3 to 4, corresponding to frequency ranges [0, 64], [4, 64]  and [12, 64] respectively. We consider a texture which contains both stationary texture in the high frequency range and structured texture in the low frequency range. As opposed to BACON,  our method can isolate the stationary texture. As can be seen in \cref{fig:texture}(b), for text based manipulation, our method generates texture that is in the correct frequency range. For comparison, we consider CLIPStyler~\cite{kwon2021clipstyler}, with text prompts of "Low frequency mosaic" (B-1),  "High frequency mosaic" (B-3) and "Mosaic" (B-3), resulting in non-realistic texturing. As can be seen, simply specifying the frequency in the text does not result in a satisfactory result.

\subsection{Scale-space Representation}
\label{sec:scalespace}

In many visual computing applications such as volumetric rendering, one is usually required to aggregate information from the neural fields using operations such as Gaussian convolution~\cite{barron2021mip}.
It is useful to model signals as a function of both the spatial coordinates and the scale: $f(\xx, \Sigma)=\mathbb{E}_{\xx\sim\mathcal{N}(\xx, \Sigma)}[g(\xx)]$, where $g$ is the assumed ground truth signal.
Existing works try to approximate the scale-space by using a black-box MLP with intergrated positional encoding, which computes the analytical Gaussian convolved Fourier basis functions~\cite{barron2021mip,Verbin2021RefNeRFSV}.
While such approaches demonstrate success in volumetric rendering applications which requires integrating different scales, they depend on supervision in multiple scales, possibly because their ability to interpolate correctly between scales is hindered by the black-box MLP.
\begin{wrapfigure}{R}{0.5\linewidth}
\vspace{-3em}
\begin{center}
\begin{tabular}{c@{~}c@{}c@{}c@{}c}
&
1$\times$ &
1/2$\times$ &
1/4$\times$ &
1/8 $\times$ 
\\
\rotatebox[origin=l]{90}{BACON} &
\includegraphics[width=0.23\linewidth]{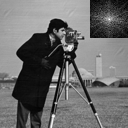} &
\includegraphics[width=0.23\linewidth]{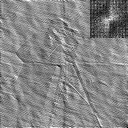} &
\includegraphics[width=0.23\linewidth]{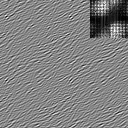} &
\includegraphics[width=0.23\linewidth]{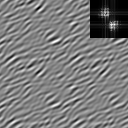} \\
\rotatebox[origin=l]{90}{~~~~~IPE} &
\includegraphics[width=0.23\linewidth]{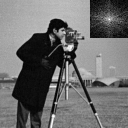} &
\includegraphics[width=0.23\linewidth]{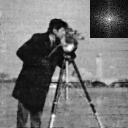} &
\includegraphics[width=0.23\linewidth]{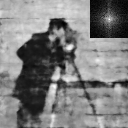} &
\includegraphics[width=0.23\linewidth]{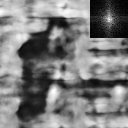} \\
\rotatebox[origin=l]{90}{~~~~PNF} &
\includegraphics[width=0.23\linewidth]{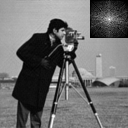} &
\includegraphics[width=0.23\linewidth]{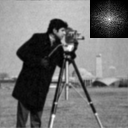} &
\includegraphics[width=0.23\linewidth]{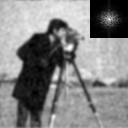} &
\includegraphics[width=0.23\linewidth]{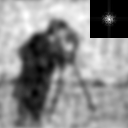} \\ 
\rotatebox[origin=l]{90}{~~~GTR} &
\includegraphics[width=0.23\linewidth]{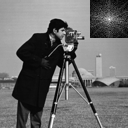} &
\includegraphics[width=0.23\linewidth]{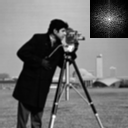} &
\includegraphics[width=0.23\linewidth]{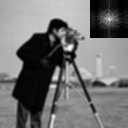} &
\includegraphics[width=0.23\linewidth]{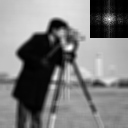}
\end{tabular}
\caption{Scale-space representation. 
Networks are trained on full resolution image (1x) and test on the other reolutions.
GTR is produced by applying Gaussian smoothing on (1x) image. 
% During training the network is tained on the full-resolution image signal of a camera-man. At test time, the network is asked to produce images with different scales (shown here at different factors of the original resolution. 
% PNF is better able to interpolate to lower results compared to the other methods.
}
\vspace{-2em}
\label{fig:gaussian-2d}
\end{center}
\end{wrapfigure}

In this section we want to demonstrate the PNF's ability to better model this scale space with limited supervision.
Suppose the signal of interest can be represented by Fouier bases as $g(\xx) = \sum_n \alpha_n \exp{(\omega_i^T\xx)}$, then we know analytically the Gaussian convolved version should be $f(\xx, \Sigma) = \sum_n \alpha_n\exp(\omega_i^T\Sigma\omega_i)\exp(i\omega_i^T\xx)$.
If we assume that our Fourier PNF can learn the ground truth representation well, then one potential way to achieve this is setting $\gamma(\xx, \Sigma)_n=\exp(-\frac{1}{2}\omega_n^T\Sigma\omega_n)\exp(-i\omega_n^T\xx)$.
We also multiply the output of $F_i$ (\cref{eq:ff-pnf-within-set}) with a correction term addressing the error arises from missing the interference terms in the form of $\exp(-\frac{1}{2}\omega_i^T\Sigma\omega_j)$.
We show in the supplementary how to derive and approximate these missing terms using Fourier PNF.

We test the effectiveness of Fourier PNF to learn the scale-space of a 2D image.
In this application, the network is trained with the image signal in full resolution (finest scale).
At test time, the network is asked to produce image with different scales and compared to the ground truth Gaussian smoothed image.
We compared our method with IPE~\cite{barron2021mip} as well as BACON with IPE as filter function.
The results are shown in \cref{fig:gaussian-2d}.
Our model can represent a signal reasonably well when testing with a lower resolution while other methods degrade more quickly.

\begin{comment}
\paragraph{Detection}
\todo{problem: need to show-case why we need NF}
- Task: texture classification
- have a shared fourier oritable, different output heads for different image 
- train all of them to do reconstruction well
- run SVM on the weights of the output heads
Baseline:
- BACON (heads weights
- SIREN (weights)
- Meta-SIREN (weights)
- HyperNet
\todo{maybe on texture }
\end{comment}

\paragraph{Limitations}
Currently, the activation memory of PNF scales linearly in the number of subbands, and so interpretability and decompositiality gained by PNF comes at a ``cost" of a larger memory footprint. 
Further, it is also non trivial to tile higher dimensional space with controllabel subbands.

\section{Conclusion}

We proposed a novel class of neural fields (PNFs) that are compact and easy to optimize and enjoy the interpretability and controllability of signal processing methods. We provided a theoretical framework to analyse the output of a PNF and design a Fourier PNF, whose outputs can be decomposed in a fine-grained manner in the frequency domain with user-specified lower and upper limits on the frequencies. We demonstrated that PNFs matches state-of-the-art performance for signal representation tasks. We then demonstrated the use of PNF's subband decomposition in the settings of texture transfer and 
scale-space representations. 
As future work, the ability to generalize our representation to represent multiple higher-dimensional signals (such as multiple images) can enable applications in recognition and generation, where one can leverage our decomposeable architecture to impose a prior or regularize specific subbands to improve generalization. 

\paragraph{Acknowledgement}
This research was supported in part by the Pioneer Centre for AI, DNRF grant number P1. Guandao’s PhD was supported in part by research gifts from Google, Intel, and Magic Leap. Experiments are supported in part by Google Cloud Platform and GPUs donated by NVIDIA.

% \clearpage
\bibliographystyle{plainnat}
\bibliography{ref}

\clearpage
\appendix

\section{Theory}
In this section, we will provide a detail derivation of the theories we used for building and analyzing PNF.
We first study the definition of PNF in \cref{sec:definition}.
Then we prove the properties that PNF is linear sums of basis for different basis in \cref{sec:basis} and \cref{sec:linear-sum-basis}.
In \cref{sec:controllable-sets-of-subbands}, we studies how to organize subset of basis, or subbands, in a controllable way to produce subband-limited PNFs.
Finally, we show several different instantiation of PNFs in \cref{sec:diff-pnfs}.

\subsection{Definition}\label{sec:definition}
\begin{definition}[PNF]
	Let $\bB$ be a basis for the vector space of functions for $\Re^n \to \Re$.
	A \emph{Polynomial neural field of basis $\bB$} is a neural network $f = g_L \circ \dots \circ g_1 \circ \gamma$,
	where $\forall i, g_i$ are finite degree multivariate polynomials, 
	and $\gamma:\Re^n \to\Re^d$ is a $d$-dimensional feature encoding using basis $\bB$: $\gamma(x) = [\gamma_1(x), \dots, \gamma_d(x)]^d, \gamma_i \in \bB, \forall i$. 
\label{def:pnf_def}
\end{definition}

We will show how this definition has included both $\Pi$-Net~\cite{chrysos2020poly}, MFN~\cite{fathony2020multiplicative}, and BACON~\cite{Lindell2021BACONBC}.

\paragraph{$\Pi$-Net.} Here we will set basis $\bB=\{x^n\}_{n\geq 0}$. Then $g_i$ can be set according to different factorization mentioned in Section 3.1 and 3.2. 
% \todo{more detail?}

\paragraph{MFN.} \cite{fathony2020multiplicative} studied two types of MFNs - Fourier and Gabor MFN. For Fourier MFN that that takes $\Re^d$ as input, we will set basis to be $\bB=\{\sin(\omega^Tx)\}_{\omega\in\Re^d}$. Assume there are $L_{mfn}$ layers of the multiplicative filter networks and each layer has hidden dimension of $h$.
We will set $L=L_{mfn} + 2$ and define $g_i$, $\gamma$ in the following manner:
\begin{align}
\gamma &\in \Re^d \to \Re^{L_{mfn}h}, \\
g_1(\gamma(x)) &= [M_1\gamma(x), \gamma(x)] \in \Re^{(L_{mfn}+1)h}\\
g_i([z, \gamma(x)]) &= [(W_i z + b_i) \odot (M_i \gamma(x)), \gamma(x)],\quad\forall\ 2 \leq i \leq 2 + L_{mfn} \\
g_{L_{mfn} + 2}([z, \gamma(x)]) &= W_{out}x + b_{out},
\end{align}
where $M_i\in\Re^{L_{mfn}h \times h}$ selects the $(i-1)h$ to $ih$ basis by setting each row of $M_i$ to be an one-hot vector and only the $((i-1)h)^{th}$ to $(ih)^{th}$ columes are non-zeros.
Similarly, for Gabor MFN, we use the same definition of $g$, but switch $\gamma$ to sample from $\bB=\{\exp(\gamma\norm{x-\mu}^2)\sin(\omega^Tx)\}_{(\gamma, \mu, \omega \in \Re^d)}$.

\paragraph{BACON.} The way to instantiate BACON will be similar to MFN. Basically each intermediate output layer of BACON is a MFN with $\gamma$ to be sampled from specific subbands.

The definition of PNF is very general such that it not only include prior works but also allow potential design of new architectures with different network topology and differnet basis.
Our design of Fourier PNF will leverage fourier basis with a modified network architecture.
We will introduce several more variants of the PNFs with different architectures and basis choices.

\subsection{Basis function}\label{sec:basis}

\begin{definition}[Span of Basis is Closed Under Multiplication]
We call a basis $\bB$'s span closed under multiplication if :
$\forall b_1, b_2 \in \bB, 
b_1(x)b_2(x) = \sum_{i \in I} a_i b_i(x), |I| < \infty$.
\label{def:basis}
\end{definition}
Note that this is the same requirement as Definition 1 in the appendix of MFN paper~\cite{Fathony2021MultiplicativeFN}. 
We will extend the analysis of MFN in several ways.
First, we will show that several commonly used basis functions satisfies \cref{def:basis}.

\begin{lemma}[Fourier Basis]
Assume the Fourier basis of functions $\Re^d\to\Re$ takes the form of $\bB_{Fourier} = \{ b_{\omega} = \exp(i\omega^Tx)| \omega \in \Re^d\}$. 
Then $\bB_{Fourier}$'s span is closed under multiplication.
\label{lemma:fourier-basis}
\end{lemma}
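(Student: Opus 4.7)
The plan is to verify the closure condition of \cref{def:basis} directly by exponent addition. Given two arbitrary Fourier basis elements $b_{\omega_1}(x) = \exp(i\omega_1^T x)$ and $b_{\omega_2}(x) = \exp(i\omega_2^T x)$ with $\omega_1, \omega_2 \in \Re^d$, I would simply multiply them and use the fundamental identity $\exp(a)\exp(b) = \exp(a+b)$ for complex exponentials, which gives
\[
b_{\omega_1}(x)\, b_{\omega_2}(x) = \exp\bigl(i(\omega_1+\omega_2)^T x\bigr) = b_{\omega_1+\omega_2}(x).
\]

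Since $\omega_1 + \omega_2 \in \Re^d$, the product $b_{\omega_1+\omega_2}$ is again an element of $\bB_{Fourier}$. Thus the product expansion is a single-term sum, i.e., $|I| = 1$ with coefficient $a_1 = 1$, which is trivially finite. This exhibits the required representation from \cref{def:basis} and completes the proof.

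There is essentially no obstacle here: the statement is a one-line consequence of the exponential addition law, and the main purpose of including it as a lemma is to make explicit that the Fourier basis satisfies the hypothesis of \cref{thm:interp-main} so that Fourier PNFs inherit the linear-sum-of-basis conclusion. In the follow-up lemmas for Gabor and spherical harmonics one would expect the expansion to involve nontrivial coefficients (Clebsch--Gordan for spherical harmonics, Gaussian cross-terms for Gabor), but here the calculation is immediate.
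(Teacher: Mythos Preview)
Your proposal is correct and matches the paper's proof essentially verbatim: the paper also simply computes $\exp(i\omega_1^Tx)\exp(i\omega_2^Tx) = \exp(i(\omega_1+\omega_2)^Tx) \in \bB_{Fourier}$ via exponent addition.
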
 
\begin{proof} It's enough to show that the multiplication of two Fourier basis function is still a Fourier basis funciton:
$
    \exp(i\omega_1^Tx)\exp(i\omega_2^Tx) = \exp(i(\omega_1 +\omega_2)^Tx)\in \bB_{Fourier}
$.
\end{proof}

\begin{lemma}[RBF]
Assume the Radial basis functions for real-value functions $\Re^d\to\Re$ takes the form of $\bB_{RBF} = \{b_{\gamma,\mu} = \exp(-\frac{1}{2}\gamma\norm{x-\mu}^2)$. 
Then $\bB_{RBF}$'s span is closed under multiplication.
\label{lemma:rbf-basis}
\end{lemma}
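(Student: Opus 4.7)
The plan is to prove the statement by direct calculation: the product of two RBF atoms is, up to a scalar constant, again a single RBF atom, so the product actually lies in $\bB_{RBF}$ itself (not merely its span). So I only need to find the right parameters $(\gamma_3, \mu_3)$ and constant.

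First I would write out the product and combine exponents:
\[
b_{\gamma_1,\mu_1}(x)\,b_{\gamma_2,\mu_2}(x) = \exp\!\Big(-\tfrac{1}{2}\bigl[\gamma_1\|x-\mu_1\|^2 + \gamma_2\|x-\mu_2\|^2\bigr]\Big).
\]
Expanding each squared norm gives a quadratic in $x$ whose leading coefficient is $\gamma_1+\gamma_2$ and whose linear part in $x$ is $-2x^T(\gamma_1\mu_1+\gamma_2\mu_2)$. The natural candidates are therefore
\[
\gamma_3 := \gamma_1+\gamma_2, \qquad \mu_3 := \frac{\gamma_1\mu_1+\gamma_2\mu_2}{\gamma_1+\gamma_2},
\]
assuming $\gamma_1+\gamma_2\neq 0$ (the degenerate case is handled separately below).

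Next I would complete the square: rewrite
\[
\gamma_1\|x-\mu_1\|^2 + \gamma_2\|x-\mu_2\|^2 \;=\; \gamma_3\|x-\mu_3\|^2 + C,
\]
where $C = \gamma_1\|\mu_1\|^2+\gamma_2\|\mu_2\|^2-\gamma_3\|\mu_3\|^2$ is an $x$-independent constant. Substituting back yields
\[
b_{\gamma_1,\mu_1}(x)\,b_{\gamma_2,\mu_2}(x) \;=\; e^{-C/2}\, b_{\gamma_3,\mu_3}(x),
\]
which is a single-term linear combination and therefore a finite one, verifying \cref{def:basis}.

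The only subtlety — and the step I would flag rather than the "main obstacle" (there is no real obstacle) — is the edge case $\gamma_1+\gamma_2=0$, and more generally whether the definition of $\bB_{RBF}$ admits arbitrary real $\gamma$. If $\gamma$ is restricted to $\gamma>0$ this case cannot occur. If $\gamma\in\mathbb{R}$ is allowed, then when $\gamma_1=-\gamma_2$ the product reduces to $\exp\!\bigl(-\tfrac{1}{2}\gamma_1(\|x-\mu_1\|^2-\|x-\mu_2\|^2)\bigr)$, which is a pure exponential of a linear function in $x$; this is a degenerate RBF ``at infinity'' and can be written as $b_{0,0}(x)$ times such an exponential, so it does not lie in $\bB_{RBF}$ unless one either restricts $\gamma>0$ or enlarges the basis. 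I would resolve this by stating the standard assumption $\gamma>0$, under which the calculation above is complete and the proof closes cleanly.
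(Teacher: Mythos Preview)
Your proposal is correct and follows essentially the same approach as the paper: both complete the square to show the product of two RBF atoms is a scalar multiple of a single RBF atom with $\gamma_3=\gamma_1+\gamma_2$ and $\mu_3=(\gamma_1\mu_1+\gamma_2\mu_2)/(\gamma_1+\gamma_2)$, and your constant $C$ simplifies to the paper's $\frac{\gamma_1\gamma_2}{\gamma_1+\gamma_2}\|\mu_1-\mu_2\|^2$. Your explicit treatment of the $\gamma_1+\gamma_2=0$ edge case and the resolution via $\gamma>0$ is a welcome addition that the paper leaves implicit.
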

\begin{proof}
Similar to Fourier Basis, we will show that the multiplication of two RBF functions is still in $\bB_{RBF}$. This is shown in Equation 24 of Supplementary of MFN~\cite{Fathony2021MultiplicativeFN}:
\begin{align}
    &\quad\exp\paren{-\frac{1}{2}\gamma_1\norm{x-\mu_1}^2}
    \exp\paren{-\frac{1}{2}\gamma_2\norm{x-\mu_2}^2} 
    \\
    &= \exp\paren{\frac{-\gamma_1\gamma_2\norm{\mu_1-\mu_2}}{2(\gamma_1+\gamma_2)}}
    \exp\paren{-\frac{1}{2}(\gamma_1+\gamma_2)\norm{x-\frac{\gamma_1\mu_1+\gamma_2\mu_2}{\gamma_1+\gamma_2}}^2}\\
    &= c(\gamma_1, \gamma_2, \mu_1,\mu_2)\exp\paren{-\frac{1}{2}\gamma'\norm{x-\mu'}^2}, \label{eq:rbf-mult}
\end{align}
where $c(\gamma_1, \gamma_2, \mu_1,\mu_2) = \exp\paren{\frac{-\gamma_1\gamma_2\norm{\mu_1-\mu_2}}{2(\gamma_1+\gamma_2)}}$, $\mu' = \frac{\gamma_1\mu_1+\gamma_2\mu_2}{\gamma_1+\gamma_2}$, and $\gamma' = \gamma_1 + \gamma_2$.
\end{proof}

\begin{lemma}[Gabor Basis]
Assume the Gabor basis of functions $\Re^d\to\Re$ takes the form of $\bB_{Gabor} = \{b_{\gamma,\mu,\omega} = \exp(-\frac{1}{2}\gamma\norm{x-\mu}^2)\exp(i\omega^Tx)| \omega \in \Re^d, \mu \in \Re^d, \gamma\geq 0\}$. 
Then $\bB_{Gabor}$'s span is closed under multiplication.
\label{lemma:gabor-basis}
\end{lemma}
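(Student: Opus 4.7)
The plan is to mirror the proofs of \cref{lemma:fourier-basis} and \cref{lemma:rbf-basis}: it suffices to show that the pointwise product of two Gabor basis elements is again (a scalar multiple of) a single element of $\bB_{Gabor}$, which trivially implies that the span is closed under multiplication with a finite sum (a single-term sum, in fact).

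First, I would factor each Gabor basis element as a product of an RBF part and a Fourier part, namely $b_{\gamma,\mu,\omega}(x) = \bigl[\exp(-\tfrac{1}{2}\gamma\|x-\mu\|^2)\bigr]\cdot\bigl[\exp(i\omega^T x)\bigr]$. Then the product of $b_{\gamma_1,\mu_1,\omega_1}(x)$ and $b_{\gamma_2,\mu_2,\omega_2}(x)$ naturally splits into an RBF$\,\times\,$RBF piece and a Fourier$\,\times\,$Fourier piece, since pointwise multiplication is commutative.

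Next, I would invoke \cref{lemma:rbf-basis} on the two RBF factors to obtain
\[
\exp\!\bigl(-\tfrac{1}{2}\gamma_1\|x-\mu_1\|^2\bigr)\exp\!\bigl(-\tfrac{1}{2}\gamma_2\|x-\mu_2\|^2\bigr) = c(\gamma_1,\gamma_2,\mu_1,\mu_2)\,\exp\!\bigl(-\tfrac{1}{2}\gamma'\|x-\mu'\|^2\bigr),
\]
with $\gamma' = \gamma_1+\gamma_2$, $\mu' = (\gamma_1\mu_1+\gamma_2\mu_2)/(\gamma_1+\gamma_2)$, and constant $c$ as in \eqref{eq:rbf-mult}. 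In parallel, the Fourier factor merges via \cref{lemma:fourier-basis} as $\exp(i\omega_1^T x)\exp(i\omega_2^T x) = \exp(i(\omega_1+\omega_2)^T x)$.

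Finally, I would combine these pieces to write the product as $c(\gamma_1,\gamma_2,\mu_1,\mu_2)\cdot b_{\gamma',\mu',\omega_1+\omega_2}(x)$, which lies in the span of $\bB_{Gabor}$ (as a single-term linear combination), so the span is closed under multiplication. There is no real obstacle here: the proof is essentially a bookkeeping step that glues together the RBF and Fourier closure lemmas via the fact that Gabor basis elements factor as RBF $\times$ Fourier, with the only care being to carry along the scalar $c$ correctly. The nonnegativity condition $\gamma_1,\gamma_2\geq 0$ ensures $\gamma'\geq 0$ so that $b_{\gamma',\mu',\omega_1+\omega_2}$ is indeed in $\bB_{Gabor}$.
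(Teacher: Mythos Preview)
Your proposal is correct and matches the paper's proof essentially line for line: both factor the Gabor element into its RBF and Fourier parts, apply \eqref{eq:rbf-mult} to merge the RBF factors into $c\cdot\exp(-\tfrac{1}{2}\gamma'\|x-\mu'\|^2)$, merge the Fourier factors into $\exp(i(\omega_1+\omega_2)^T x)$, and conclude that the product is proportional to a single Gabor element $b_{\gamma',\mu',\omega'}\in\bB_{Gabor}$. Your added remark that $\gamma_1,\gamma_2\ge 0$ forces $\gamma'\ge 0$ is a nice explicit check the paper leaves implicit.
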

\begin{proof}
Using \cref{eq:rbf-mult}, we can compute the the multiplication of two Gabor basis functions:
\begin{align}
b_{\gamma_1,\mu_1,\omega_1}b_{\gamma_2,\mu_2,\omega_2} = c(\gamma_1, \gamma_2, \mu_1,\mu_2)\exp\paren{-\frac{1}{2}\gamma'\norm{x-\mu'}^2}\exp(i\omega'^Tx) \propto b_{\gamma',\mu',\omega'} \in \bB_{Gabor},
\end{align}
where $\omega'=\omega_1+\omega_2$. 
The output of the multiplication is still a Gabor.
\end{proof}

The following Lemma will show that \cref{def:basis} can be extended to analyzing functions from different domain. 
We will show that for complex-value function that maps from a sphere, there is a basis function that satisfies \cref{def:basis}.
\begin{lemma}[Spherical Harmonics]
We will consider the basis function for real functions that takes spherical coordinate (i.e. $S^2\to \Ce$ where $S^2=\{(\theta, \phi) | 0\leq \theta \leq \pi, 0\leq \phi \leq 2\pi\}$.
Moreover, we will consider Laplace's spherical harmonics as basis:
\begin{align}
    \bB_{SH}=\{Y_l^m(\theta,\phi)=e^{im\phi}P_l^m(\cos(\theta)) | 0 \leq l, l \in \Ze, -l \leq m \leq l, m \in \Ze \},\label{eq:sh-basis-def}
\end{align}
where $P_l^m:[-1,1]\to\Re$ is an associated Legendre polynomial.
$\bB_{SH}$ satisfies \cref{def:basis}.
\label{lemma:sh-basis}
\end{lemma}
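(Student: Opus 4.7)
The plan is to establish the classical Clebsch--Gordan / Gaunt-type expansion for products of spherical harmonics, but only in the weak form needed for \cref{def:basis}: that the product $Y_{l_1}^{m_1} Y_{l_2}^{m_2}$ is a \emph{finite} linear combination of elements of $\bB_{SH}$. I would proceed by two independent reductions, one in $\phi$ and one in $\theta$, each of which cuts the possible expansion down to a finite index set.

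First, I split the product according to the two spherical variables:
\begin{equation*}
Y_{l_1}^{m_1}(\theta,\phi)\, Y_{l_2}^{m_2}(\theta,\phi) = e^{i(m_1+m_2)\phi}\, P_{l_1}^{m_1}(\cos\theta)\, P_{l_2}^{m_2}(\cos\theta).
\end{equation*}
Because $\{e^{im\phi}\}_{m\in\Ze}$ is orthogonal on $[0,2\pi]$, integrating this product against any $Y_l^m$ with $m\neq m_1+m_2$ yields zero in the $\phi$ factor. Hence if a spherical-harmonic expansion exists at all, it is supported entirely on the single column $\{Y_l^{m_1+m_2} : l \geq |m_1+m_2|\}$.

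Second, I bound the allowed values of $l$ by a degree argument. Each $Y_l^m$ is, up to a normalization constant, the restriction to $S^2$ of a homogeneous harmonic polynomial of degree $l$ in the Cartesian coordinates $(x,y,z)$. Hence $Y_{l_1}^{m_1} Y_{l_2}^{m_2}$ is the restriction to $S^2$ of a polynomial in $(x,y,z)$ of total degree at most $l_1 + l_2$. The standard decomposition of homogeneous polynomials into a harmonic part plus $(x^2+y^2+z^2)$ times lower-degree pieces, combined with $x^2+y^2+z^2 = 1$ on $S^2$, then shows that every such restricted polynomial lies in the span of $\{Y_l^m : 0 \leq l \leq l_1+l_2\}$. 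Intersecting with the $m$-constraint from the first step confines the nonzero coefficients to $|m_1+m_2| \leq l \leq l_1+l_2$, which is a finite set, establishing \cref{def:basis}.

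The main obstacle is the last invocation: the fact that polynomials of degree $\leq N$ restricted to the sphere are exactly spanned by $\{Y_l^m : l \leq N\}$. This is classical but not entirely self-contained. If a self-contained argument is preferred, I would instead induct on $l_1+l_2$ using the three-term recurrence $\cos\theta\cdot P_l^m(\cos\theta) = \alpha_{l,m} P_{l+1}^m(\cos\theta) + \beta_{l,m} P_{l-1}^m(\cos\theta)$ together with $\sin^2\theta = 1 - \cos^2\theta$, which let me rewrite $P_{l_1}^{m_1} P_{l_2}^{m_2}$ as a finite sum of $P_l^{m_1+m_2}$ by direct bookkeeping on $|m_1|+|m_2| - |m_1+m_2|$; this route is more tedious but avoids invoking any spherical-harmonic machinery beyond the definition of $\bB_{SH}$.
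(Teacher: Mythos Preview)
Your proof is correct and takes a genuinely different route from the paper. The paper simply quotes the Gaunt/Clebsch--Gordan product formula for spherical harmonics in terms of Wigner $3j$-symbols, then invokes the $3j$ selection rules ($|l_1-l_2|\le l \le l_1+l_2$ and $m_1+m_2-m=0$) to truncate the ostensibly infinite sum to finitely many terms. Your argument instead avoids any $3j$ machinery: the $\phi$-orthogonality pins $m=m_1+m_2$, and the identification of $Y_l^m$ with the restriction of a degree-$l$ harmonic polynomial, together with the harmonic decomposition of polynomials on $S^2$, caps $l$ at $l_1+l_2$ and simultaneously establishes that the expansion exists. The paper's approach gives the coefficients explicitly (useful if one ever needed them downstream) but leans on an external citation; your approach is more self-contained and conceptually lighter, and the alternative recurrence-based route you sketch would make it fully elementary at the cost of some bookkeeping. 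Either way, both arguments land on the same finite index set $|m_1+m_2|\le l \le l_1+l_2$, $m=m_1+m_2$.
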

\begin{proof}
From the multiplication rule of Spherical Harmonics~\cite{AngularMomentum}, we have:
\begin{align}
    Y_{l_1}^{m_1}(\theta, \phi)Y_{l_2}^{m_2}(\theta,\phi) \propto 
    \sum_{l=0}^\infty \sum_{m=-c}^c (-1)^m\sqrt{2l+1}
    \begin{pmatrix}
    l_1\ \ l_2\ \ l \\
    m_1\ \ m_2\ \ -m
    \end{pmatrix}
    \begin{pmatrix}
    l_1\ \ l_2\ \ l \\
    0 \ \ 0\ \ 0
    \end{pmatrix}
    Y_l^m(\theta,\phi),\label{eq:sh-contract}
\end{align}
where $\begin{pmatrix} j_1\ \ j_2\ \ j_3\\ m_1\ \ m_2\ \ m_3 \end{pmatrix}$ denotes the $3j$-syombols.
Now we need to show that the infinite sum contains only finite number of non-zero terms.
By the selection rules of the $3j$-symbols~\cite{MathForPhysics}, we know that 
$\begin{pmatrix} j_1\ \ j_2\ \ j_3\\ m_1\ \ m_2\ \ m_3 \end{pmatrix}$ is zero if any of the following rules is not satisfies: 1) $|j_1 - j_2| \leq j_3 \leq j_1 + j_2$; and 2) $m_1 + m_2 + m_3 = 0$.
This implies that for all terms $l \geq l_1+l_2$, and $m\neq -(m_1+m_2)$, the term $\begin{pmatrix}l_1\ \ l_2\ \ l \\ m_1\ \ m_2\ \ -m\end{pmatrix} = 0$.
With this said, the \cref{eq:sh-contract} can be written as a finite sum:
\begin{align}
    Y_{l_1}^{m_1}(\theta, \phi)Y_{l_2}^{m_2}(\theta,\phi) \propto 
    \sum_{l=|l_1-l_2|}^{l_1+l_2}
    \begin{pmatrix}
    l_1\ \ l_2\ \ l \\
    m_1\ \ m_2\ \ -(m_1+m_2)
    \end{pmatrix}
    \begin{pmatrix}
    l_1\ \ l_2\ \ l \\
    0 \ \ 0\ \ 0
    \end{pmatrix}
    Y_l^m(\theta,\phi).\label{eq:sh-contract-finite}
\end{align}
\end{proof}

There are several rules  we can used to generate more basis that satisfies the properties.
Here we will show one of them:
\begin{theorem}[Basis-multiplication.]
If $\bB_1$ and $\bB_2$ be the basis of $\Re^d\to\Re$ and satisfy \cref{def:basis}, then $\bB_{mult} = \{b_1b_2 | b_1 \in \bB_1, b_2 \in \bB_2\}$ also satisfy \cref{def:basis}.
\label{thm:basis-composition}
\end{theorem}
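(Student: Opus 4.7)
The plan is to verify the multiplicative closure property of \cref{def:basis} directly by using the closure property of $\bB_1$ and $\bB_2$ in sequence, exploiting the fact that multiplication of real-valued functions is commutative and associative, so that a product of two elements of $\bB_{mult}$ rearranges into a product of one element from $\bB_1$ with one element from $\bB_2$.

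Concretely, I would pick arbitrary elements $p, q \in \bB_{mult}$, which by construction have the form $p = b_1 b_2$ and $q = b_1' b_2'$ with $b_1, b_1' \in \bB_1$ and $b_2, b_2' \in \bB_2$. Using commutativity of pointwise multiplication, rewrite
\begin{equation*}
pq = (b_1 b_2)(b_1' b_2') = (b_1 b_1')(b_2 b_2').
\end{equation*}
Next, apply the hypothesis that $\bB_1$'s span is closed under multiplication to expand $b_1 b_1' = \sum_{i \in I} \alpha_i c_i$ as a finite linear combination with $c_i \in \bB_1$ and $|I| < \infty$, and similarly $b_2 b_2' = \sum_{j \in J} \beta_j d_j$ with $d_j \in \bB_2$ and $|J| < \infty$. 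Distributing the product yields
\begin{equation*}
pq = \sum_{i \in I}\sum_{j \in J} \alpha_i \beta_j \, (c_i d_j),
\end{equation*}
which is a finite linear combination of terms of the form $c_i d_j$, each of which lies in $\bB_{mult}$ by its definition. This matches the criterion in \cref{def:basis}, finishing the verification.

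There is no real obstacle here; the argument is essentially a bookkeeping exercise built on commutativity/associativity of function multiplication and the finiteness of the two expansions. The only subtlety worth flagging is that \cref{def:basis} is phrased for a ``basis,'' so one should mention (or assume as part of the statement) that $\bB_{mult}$ is being treated as a generating set in the relevant function space; the closure property itself holds regardless of linear independence, and the sum remains finite because a finite sum of finite sums is finite.
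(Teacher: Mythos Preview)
Your argument is correct and follows essentially the same route as the paper: rearrange $(b_1 b_2)(b_1' b_2') = (b_1 b_1')(b_2 b_2')$ by commutativity, expand each factor as a finite linear combination using the closure of $\bB_1$ and $\bB_2$ respectively, and distribute to obtain a finite sum of elements of $\bB_{mult}$. If anything, your write-up is cleaner than the paper's, which introduces an unnecessary preliminary expansion of the $b_i$ and has some confusing index conventions; the underlying idea is identical.
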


\begin{proof}
Let $b_1  b_2$ and $b_3  b_4$ from $\bB_{mult}$.
Assume $b_i = \sum_n a_{i,n} b_{k_{i,n}}$, where $i\in\{1,2,3,4\}$, $k_{i,n}$ is an index into $\bB_1$ if $i \in \{1, 2\}$ and $\bB_2$ if $i \in \{3, 4\}$, and $a_{i,n}$ are coefficients. Then we have
\begin{align}
    (b_1 b_2)(b_3 b_4) &= 
    \paren{\sum_n a_{1,n} b_{k_{1,n}}} 
    \paren{\sum_n a_{2,n} b_{k_{2,n}}} 
    \paren{\sum_n a_{3,n} b_{k_{3,n}}} 
    \paren{\sum_n a_{4,n} b_{k_{4,n}}} \\
    &= 
    \paren{\sum_{n,n'} a_{1,n} a_{3,n'} b_{k_{1,n}} b_{k_{3,n'}}} 
    \paren{\sum_{n,n'} a_{2,n} a_{4,n'} b_{k_{2,n}} b_{k_{4,n'}}}.\label{eq:multi-last-equation}
\end{align}
Since both $b_{k_{1,n}}$ and $b_{k_{3,n'}}$ are in $\bB_1$ and $\bB_1$ satisfies \cref{def:basis}, so we can assume the following (and similarly logics can be applied to $b_{k_{2,n}}$ and $b_{k_{4,n'}}$ for $\bB_2$):
\begin{align}
    b_{k_{1,n}}b_{k_{3,n'}} &= \sum_{m} c_{1,n,n',m} b_{l_{1,n,n',m}},\quad b_{l_{1,n,n',m}}\in\bB_1 \\
    b_{k_{2,n}}b_{k_{4,n'}} &= \sum_{m} d_{n,n',m} b_{l_{2,n,n',m}},\quad b_{l_{2,n,n',m}}\in\bB_2.
\end{align}
Plugging abovementioned equations into \cref{eq:multi-last-equation} we get
\begin{align}
    \sum_{n,n'} a_{1,n} a_{3,n'} b_{k_{1,n}} b_{k_{3,n'}} 
    &= \sum_{n,n'} a_{1,n} a_{3,n'} \paren{\sum_{m} c_{1,n,n',m} b_{l_{1,n,n',m}}} \\
    &= \sum_{n,n',m} a_{1,n} a_{3,n'} c_{1,n,n',m} b_{l_{1,n,n',m}} = \sum_{n,n',m} \tilde{c}_{1,n,n',m} b_{l_{1,n,n',m}},
\end{align}
for $\tilde{c}_{1,n,n',m} = a_{1,n} a_{3,n'} c_{1,n,n',m}$.
Similarly, we have 
\begin{align}
    \sum_{n,n'} a_{2,n} a_{4,n'} b_{k_{2,n}} b_{k_{4,n'}} 
    &= \sum_{n,n'} a_{2,n} a_{4,n'} \paren{\sum_{m} c_{2,n,n',m} b_{l_{2,n,n',m}}} \\
    &= \sum_{n,n',m} a_{2,n} a_{4,n'} c_{2,n,n',m} b_{l_{2,n,n',m}} = \sum_{n,n',m} \tilde{c}_{2,n,n',m} b_{l_{2,n,n',m}},
\end{align}
for $\tilde{c}_{2,n,n',m} = a_{2,n} a_{4,n'} c_{2,n,n',m}$.
Finally we can put these two together:
\begin{align}
    (b_1b_2)(b_3b_4) &=\paren{\sum_{n,n',m} \tilde{c}_{1,n,n',m} b_{l_{1,n,n',m}}}\paren{\sum_{n,n',m} \tilde{c}_{2,n,n',m} b_{l_{2,n,n',m}}} \\
    &= \sum_{n,n',m,n'',n''',m'}\tilde{c}_{1,n,n',m}\tilde{c}_{2,n'',n''',m'} b_{l_{1,n,n',m}} b_{l_{2,n'',n''',m'}},
\end{align}
which is a linear combination of basis $\bB_{mult}$ since $ b_{l_{1,n,n',m}} \in \bB_1$ and $b_{l_{1,n'',n''',m'}}\in\bB_2$.
\end{proof}

\subsection{PNF as Linear Sum of Basis}\label{sec:linear-sum-basis}
In this section, we will show that PNF can be expressed as linear sum of basis if the basis function satisfies \cref{def:basis}.
\begin{lemma}[Power-product of Basis]
If $\bB$ satisfies \cref{def:basis}, then power-products of the form $\prod_{n=1}^{N} b_n^{\alpha_n}$, $0 \leq \alpha_n < \infty$ is a linear sum of the basis $\bB$.
\label{lemma:linear-sum-product-basis}
\end{lemma}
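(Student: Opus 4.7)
The plan is to prove the claim by induction on the total degree $K = \sum_{n=1}^{N} \alpha_n$ of the power-product. The closure property of \cref{def:basis} handles the case of a product of two basis elements; the induction bootstraps it to an arbitrary power-product.

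For the base case, if $K=0$ then every $\alpha_n = 0$ and the product is the constant function $\mathbf{1}$, which lies in the span of $\bB$ since $\bB$ is a basis for all functions $\Re^n \to \Re$. If $K=1$ then exactly one $\alpha_n$ equals $1$ and the rest vanish, so the product is a single $b_n \in \bB$, trivially a linear sum of basis elements. The case $K=2$ is exactly \cref{def:basis}.

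For the inductive step, assume the statement for every power-product of total degree at most $K$, and let $P = \prod_{n=1}^N b_n^{\alpha_n}$ have total degree $K+1$. Pick any index $m$ with $\alpha_m \ge 1$ and factor one copy of $b_m$ off, writing $P = b_m \cdot Q$ where $Q$ has total degree $K$. By the inductive hypothesis, there exists a finite index set $I$ with $Q = \sum_{i \in I} c_i\, b_{k_i}$ and $b_{k_i} \in \bB$. Then
\begin{equation*}
P \;=\; b_m \cdot \sum_{i \in I} c_i\, b_{k_i} \;=\; \sum_{i \in I} c_i \,(b_m\, b_{k_i}),
\end{equation*}
and by \cref{def:basis} each product $b_m\, b_{k_i}$ expands as a finite linear combination of elements of $\bB$. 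Substituting and combining the (finitely many, hence finite) sums expresses $P$ as a finite linear combination of basis functions in $\bB$, closing the induction.

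The main obstacle, and really the only point requiring care, is ensuring that the linear combinations remain \emph{finite} after all of the rewriting. This follows because in the inductive step we invoke \cref{def:basis} only finitely many times (once per term in the finite sum giving $Q$), and each invocation produces a finite sum. No convergence or completeness argument is needed. The argument also parallels, at a smaller scale, the double-sum manipulation used in the proof of \cref{thm:basis-composition}, so no new machinery is required.
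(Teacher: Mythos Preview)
Your proof is correct and follows essentially the same approach as the paper: induction on the total degree $\sum_n \alpha_n$, factoring off one basis element in the inductive step, applying the hypothesis to the lower-degree remainder, and then invoking \cref{def:basis} term-by-term. Your treatment is in fact slightly more careful than the paper's in two places: you correctly say that the constant $\mathbf{1}$ lies in the \emph{span} of $\bB$ (the paper writes $1 \in \bB$, which is not quite right for an arbitrary basis), and you explicitly note why the resulting sum stays finite.
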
 
\begin{proof}
We will show by induction on the degree of the power-product: $d = \sum_{n}\alpha_n$.

\textit{Base cases.} If $d=0$, then $\prod_{n=1}^{N} b_n^{\alpha_n} = 1 \in \bB$ since $\bB$ is a basis function of $\Re^d \to \Re$.

\textit{Inductive case.} The inductive hypothesis is: assume that for some degree $d$ such that $d \geq 0$, all finite power-product of degree $d$ are linear sums of basis $\bB$. We want to show that all finite power-product of degree $d+1$ is also linear sums of basis $\bB$. Condider a power-product of degree $d+1$ in the form of $\prod_{n=1}^{N} b_n^{\alpha_n}$ and assume without lost of generality that $\alpha_1 \geq 1$. Then we have
\begin{align}
    \prod_{n=1}^{N} b_n^{\alpha_n} &= b_1 \paren{b_1^{(\alpha_n - 1)}\prod_{n=2}^{N} b_n^{\alpha_n}} = b_1 p,
\end{align}
where $p = b_1^{(\alpha_n - 1)}\prod_{n=2}^{N} b_n^{\alpha_n}$.
It's easy to see that the degree of $p$ is $d$, so $p$ can be written as a linear sum of $\bB$: $\sum_m a_m b_m$. With that we have:
\begin{align}
    \prod_{n=1}^{N} b_n^{\alpha_n} = b_1 \sum_m a_m b_m  =  \sum_m a_m b_1b_m, \label{eq:lamma-last}
\end{align}
where $b_i\in\bB$.
Since $\bB$ satisfies \cref{def:basis}, so $b_1b_m$ can be written as a linear sum of basis $\bB$: $\sum_{j} c_j b_j$. 
Putting this in \cref{eq:lamma-last} gives a linear sum of basis $\bB$.
\end{proof}

\begin{theorem}[]
Let $F$ be a PNF with basis $\bB$.
$\forall b_1, b_2 \in \bB, 
b_1(x)b_2(x) = \sum_{i \in I} a_i b_i(x), |I| < \infty
$, then the output of $F$ is a finite linear sum of the basis functions from $\bB$.
\label{thm:interp}
\end{theorem}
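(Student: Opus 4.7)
The plan is to prove \cref{thm:interp} by induction on the depth $L$ of the network, using \cref{lemma:linear-sum-product-basis} (power-products of basis elements are finite linear sums of basis elements) as the workhorse. Concretely, I will show by induction on $k$ that every coordinate of the intermediate output $h_k(x) := (g_k \circ \cdots \circ g_1 \circ \gamma)(x)$ is a finite linear combination of elements of $\bB$.

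For the base case $k=0$, we have $h_0(x) = \gamma(x) = [\gamma_1(x),\dots,\gamma_d(x)]^T$ with each $\gamma_j \in \bB$, so each coordinate is trivially a (one-term) finite linear sum of elements of $\bB$. For the inductive step, assume each coordinate of $h_{k-1}(x)$ is a finite linear sum of basis elements. By \cref{def:pnf_def}, the map $g_k$ is a finite-degree multivariate polynomial, so each output coordinate of $h_k(x) = g_k(h_{k-1}(x))$ is a finite sum of monomials of the form $c \cdot \prod_{j} h_{k-1,j}(x)^{\alpha_j}$, with finitely many nonzero $\alpha_j$. I will then expand such a monomial: since each $h_{k-1,j}(x) = \sum_{i} a_{j,i} b_{j,i}(x)$ with $b_{j,i}\in\bB$ is a finite sum, the distributive law turns $\prod_j h_{k-1,j}(x)^{\alpha_j}$ into a finite linear combination of power-products of elements of $\bB$. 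By \cref{lemma:linear-sum-product-basis}, each such power-product is itself a finite linear sum of elements of $\bB$, so the monomial is too. Summing finitely many such monomials preserves this property, completing the inductive step.

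Applying the induction up to $k=L$ shows that the output $F(x) = h_L(x)$ is a finite linear sum of elements of $\bB$, as desired. I do not expect a serious obstacle: the closure-under-multiplication hypothesis on $\bB$ is exactly what \cref{lemma:linear-sum-product-basis} needs, and the only thing to track carefully is finiteness. The subtle point worth explicitly checking is that all sums stay finite: this is guaranteed because (i) $g_k$ has finite polynomial degree (so finitely many monomials per coordinate), (ii) each power-product has finite degree and \cref{lemma:linear-sum-product-basis} yields a finite expansion, and (iii) the feature encoding $\gamma$ has finite dimension $d$. Composing a finite number of layers preserves finiteness, so the conclusion follows.
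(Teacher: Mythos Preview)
Your proposal is correct and essentially identical to the paper's own proof: both argue by induction on the layer index that each coordinate of the intermediate output is a finite linear sum of basis elements, expanding each polynomial layer into monomials and invoking \cref{lemma:linear-sum-product-basis} on the resulting power-products. Your explicit attention to tracking finiteness throughout is, if anything, slightly more careful than the paper's presentation.
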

\begin{proof}
Let $F=g_L \circ \dots \circ g_1 \circ \gamma$, with $g_i$ being a multivariate polynomial.
Let $z_0 = \gamma$.
Let $z_i = g_i \circ \dots \circ g_1 \circ \gamma$ for $i \geq 1$.
Let $z_i[j]$ be the $j^{th}$ dimensional value of $z_i$.
We will show by induction that for all $i$, $z_i[j]$ is a linear sum of basis functions from $\bB$ for all $j$.

\textit{Base case : $i=0$.} $z_i[j] = \gamma_j(x) \in \bB$ by definition of $\gamma$ in \cref{def:pnf_def}. 

\textit{Inductive case.} The inductive hypothesis is: for $k\geq 1$, if $z_k[j]$ is linear sum of $\bB$ for all $j$, then $z_{k+1}[l]$ is linear sums of $\bB$ for all $l$. 

By definition of $z$, we know that $z_{k+1} = g_k(z_k)$, where $g_k$ is a multivariate polynomial of finite degree $d$. With that said, we can assume $z_{k+1}[l]\in \Re$ is a linear sum of power-product terms in the form of 
$\prod_{j} z_k[i]^{\alpha_{lj}}$, where $\alpha_{lj} \geq 0$ and $\sum_l \alpha_l \leq d$.
It's sufficient to show that each of this term is linear sum of the basis function $\bB$. 

By the inductive hypothesis, we can assume that $z_k[j]=\sum_n \beta_{jn}b_{jn}$, with $b_{jn} \in \bB$. Then we have:
\begin{align}
    \prod_{j} z_k[i]^{\alpha_{lj}} 
    = \prod_{j} \paren{\sum_n \beta_{jn}b_{jn}}^{\alpha_{lj}} 
    = \sum_{m} a_{m} \prod_n b_n^{\hat{\alpha}_{nm}} \label{eq:linear-sum-lastline}, 
\end{align}
for some $a_m\in\Re$, and $\hat{\alpha}_{nm} \geq 0$ and $\sum_m \hat{\alpha}_{nm} \leq d$. 
By \cref{lemma:linear-sum-product-basis}, the terms $\prod_n b_n^{\hat{\alpha}_{nm}}$ are linear sums of $\bB$. 
Then \cref{eq:linear-sum-lastline} is linear sums of linear sums of $\bB$, which will be linear sums of $\bB$.
\end{proof}

\subsection{Controllable Sets of Subbands}\label{sec:controllable-sets-of-subbands}

In this section, we will develop theories to build controllable sets of subbands and use that to design PNFs.
In this section, we will use definition that a subband is a subset of the basis function.
\begin{definition}[Subband]
A subband $S$ is a subset of $\bB$.
\end{definition}

\begin{definition}[Subband limited PNF]
A PNF $F$ of basis $\bB$ is limited by subband $S \subset \bB$ if each dimension of the output (i.e. $F_i$) is in the span of  $S$. 
\label{def:subband-limited-def}
\end{definition}

One naive way to construct a subband limited PNF is to restrict $\gamma$ to take only basis functions from the subband: $\gamma_i \in S$, and then restricted no multiplication in the layers $g_i$ (since multiplication is the only operation that can change the composition of the basis functions used in the linear sum).

But this will simply create a very shallow network as the composition of linear layers amounts to only one linear layer.
As a result, we will need a very wide network in order to achieve expressivity. 
This means the number of basis function we used will grow linear with the number of network parameters.
When it requires exponential number of basis functions to approximate a signal well, then we are not capable of achieving it compactly, throwing away a key virtual of neural fields.

In order to achieve compactness, we want to be able to generate a lots of basis, all of with within the subband of interest, without instantiating a lot of network parameters.
One way to achieve this is to use function composition with non-linearity.
In our case, since the functions are restricted to be polynomials, then the only non-linearity we can use is multiplication.
But multiplication can potentially create basis functions outside of the subband as \cref{def:basis} does not restrict the linear sum to be within certain subband.
As a result, we need to study how multiplication transform the subband.
More specifically, we will first define a set of subbands, within which multiplication in function space can translate in someway to an operation of subbands:
\begin{definition}[PNF-controllable Set of Subbands]
$\sS = \{S_\theta | S_\theta \subset \bB\}_\theta$ is a PNF-Controllable Set of Subbands for basis $\bB$ if (1) $S_{\theta_1} \cup S_{\theta_2} \in \sS$ and (2) there exists a binary function $\otimes: \sS \times \sS \to \sS$ such that 
if $b_1\in S_{\theta_1}, b_2\in S_{\theta_2} \implies b_1 b_2 = \sum_n a_n b_n, b_n \in S_{\theta_1} \otimes S_{\theta_2}$ for some coefficients $a_n\in\Re$ (i.e. $b_1b_2$ is in the span of $S_{\theta_1} \otimes S_{\theta_2}$).
\label{def:controllable-set-of-subbands}
\end{definition}

\begin{theorem}[]
Let $\sS$ be a PNF-controllable set of subbands of basis $\bB$ with its corresponding binary function $\otimes$.
Suppose  $F$ and $G$ are polynomial neural fields of basis $\bB$ that maps $\Re^n$ to $\Re^{m_1}$ and $\Re^{m_2}$ respectively.
Furthermore, suppose $F$ and $G$ are subband limited by $S_1 \in \sS$ and $S_2\in \sS$.
Then we have the following:
\begin{enumerate}
    \item $A(x) = \ww_1^TF(x)+\ww_2G(x)$ is a PNF of $\bB$ limited by subband $S_1 \cup S_2$ with $\ww_1 \in \Re^{m_1}$ and $\ww_2 \in \Re^{m_2}$\footnote{In the main text, we show a $o$-dimensional version: $W_1F(x) + W_2G(x)$, where $W_1\in \Re^{m_1\times o}$ and $W_2\in \Re^{m_2\times o}$. The $o$-dimensional version can be seen as an easy extension of this single dimensional version as we can view $W_1=[\ww_{11}^T; \dots; \ww_{1o}^T]$ where $\ww_{1i}\in \Re^{m_1}$ and similarly for $W_2$.}; and
    
    \item $M(x) = F(x)^TWG(x)$ is a PNF of $\bB$ limited by subband $S_1 \otimes S_2$ with $W \in \Re^{m_1 \times m_2}$.
\end{enumerate}
\label{thm:controllability}
\end{theorem}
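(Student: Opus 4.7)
The plan is to prove both parts by explicit construction, building a single PNF whose evaluation equals $A(x)$ (respectively $M(x)$), verifying it matches \cref{def:pnf_def}, and then invoking \cref{thm:interp} together with \cref{def:subband-limited-def} to conclude that the output lies in the span of $S_1 \cup S_2$ (respectively $S_1 \otimes S_2$). In both parts the merger of the two networks is the only real piece of work; everything else is a direct consequence of \cref{def:controllable-set-of-subbands}.

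For part~1, write $F = f^F_L \circ \cdots \circ f^F_1 \circ \gamma^F$ and $G = f^G_K \circ \cdots \circ f^G_1 \circ \gamma^G$, and without loss of generality pad the shorter chain with identity maps so that $L = K$. I would define a combined encoding $\gamma^A(x) = [\gamma^F(x),\, \gamma^G(x)]$, which is still a feature encoding in $\bB$ because all of its components come from $\bB$, and parallel block-style polynomial layers $g^A_i([u, v]) = [f^F_i(u),\, f^G_i(v)]$. Finally I would append the linear polynomial $g^A_{L+1}([u, v]) = \ww_1^\top u + \ww_2^\top v$. By construction the composition equals $A(x)$, so $A$ is a PNF. \cref{thm:interp} then gives that every coordinate $F_i$ lies in $\mathrm{span}(S_1)$ and every $G_j$ in $\mathrm{span}(S_2)$, so $A(x) \in \mathrm{span}(S_1 \cup S_2)$, and $S_1 \cup S_2 \in \sS$ by the first clause of \cref{def:controllable-set-of-subbands}.

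For part~2, I would reuse the parallel construction up through layer $L$ but replace the final layer with the bilinear form $g^M_{L+1}([u, v]) = u^\top W v = \sum_{i,j} W_{ij} u_i v_j$, a multivariate polynomial of degree two. To verify the subband limitation I would write $F_i(x) = \sum_k a_{ik} b^1_{ik}(x)$ and $G_j(x) = \sum_\ell c_{j\ell} b^2_{j\ell}(x)$ with $b^1_{ik} \in S_1$ and $b^2_{j\ell} \in S_2$. Each pairwise product $b^1_{ik} b^2_{j\ell}$ lies in $\mathrm{span}(S_1 \otimes S_2)$ by the second clause of \cref{def:controllable-set-of-subbands}, and forming the weighted sum with coefficients $W_{ij} a_{ik} c_{j\ell}$ keeps $M(x)$ inside that same span. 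The main obstacle is purely bookkeeping: confirming that the block-diagonal map $[u, v] \mapsto [f^F_i(u),\, f^G_i(v)]$ is formally a multivariate polynomial of the merged input (immediate, since each output coordinate depends polynomially on a subset of the inputs), handling any depth mismatch via identity padding, and, in part~2, chaining the ``products lie in $\mathrm{span}(S_1 \otimes S_2)$'' guarantee through the outer linear combination so that $M$ is truly in the span rather than needing individual products to sit in $S_1 \otimes S_2$ itself.
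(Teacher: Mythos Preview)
Your proposal is correct and the subband-limitation arguments for both parts are exactly those in the paper's proof: expand each $F_i$ and $G_j$ as finite linear combinations of basis functions from $S_1$ and $S_2$, then distribute the outer weights and, for part~2, apply the second clause of \cref{def:controllable-set-of-subbands} to each product $b^1_{ik}b^2_{j\ell}$. The paper does not carry out your explicit block-diagonal PNF construction for $A$ and $M$---it simply verifies the span condition and leaves the ``is a PNF'' claim implicit---so your treatment is slightly more careful than the paper's, but the substantive argument is the same.
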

\begin{proof}
By \cref{def:subband-limited-def}, we can assume that 
\begin{align}
	\forall 1\leq i \leq m_1, F_i(x) = \sum_{j} a_{f,j} b_{f,j}(x),\quad b_{f,j} \in S_1 \\ 
	\forall 1\leq i \leq m_2, G_i(x) = \sum_{j} a_{g,j} b_{g,j}(x),\quad b_{g,j} \in S_2.
\end{align}
Then we can compute $A(x)$:
\begin{align}
	A(x) 
	&= \sum_{i=1}^{m_1} \ww_1[i]F_i(x) + \sum_{i=1}^{m_2} \ww_2[i]G_i(x) \\
	&= \sum_{i=1}^{m_1} \ww_1[i]\paren{\sum_{j} a_{f,j} b_{f,j}(x)} + \sum_{i=1}^{m_2} \ww_2[i]\paren{\sum_{j} a_{g,j} b_{g,j}(x)} \\
	&= 
	\underbrace{\paren{ \sum_{i=1}^{m_1}\sum_{j} (\ww_1[i] a_{f,j}) b_{f,j}(x)} }_{\text{Subband limited by }S_1}
		+ 
	\underbrace{\paren{\sum_{i=1}^{m_2}\sum_{j} (\ww_2[i] a_{g,j}) b_{g,j}(x)}}_{\text{Subband limited by }S_2}.\label{eq:add-control-last-line}
\end{align}
It's easy to see that \cref{eq:add-control-last-line} is subband limited by $S_1\cup S_2$.

Similarly, we can apply the computation to $M(x)$:
\begin{align}
	M(x) 
	&= \sum_{i,j\geq 1}^{i,j\leq m_1,m_2} W[i,j]F_i(x)G_j(x) \\
	&= \sum_{i,j\geq 1}^{i,j\leq m_1,m_2} W[i,j]
		\paren{\sum_{k} a_{f,k} b_{f,k}(x)}
		\paren{\sum_{k} a_{g,k} b_{g,k}(x)} \\
	&= \sum_{i,j\geq 1}^{i,j\leq m_1,m_2} W[i,j]
		\paren{\sum_{k,k'}a_{f,k}a_{g,k'}b_{f,k}(x)b_{g,k'}(x)} \\
	&= \sum_{i,j\geq 1,k,k'}^{i,j\leq m_1,m_2} 
		(W[i,j]a_{f,k}a_{g,k'}) b_{f,k}(x)b_{g,k'}(x). \label{eq:mult-control-last-line}
\end{align}
By \cref{def:controllable-set-of-subbands}, we can assume
\begin{align}
b_{f,k}(x)b_{g,k'}(x) = \sum_l c_{k,k',l}b_{k,k',l}(x),\quad b_{k,k',l}(x) \in S_1 \otimes S_2.
\end{align} 
Putting it into \cref{eq:mult-control-last-line}, we have:
\begin{align}
	M(x) 
	&= \sum_{i,j\geq 1,k,k'}^{i,j\leq m_1,m_2} 
	(W[i,j]a_{f,k}a_{g,k'}) \sum_l c_{k,k',l}b_{k,k',l}(x) \\
	&= \sum_{i,j\geq 1,k,k',l}^{i,j\leq m_1,m_2} 
	(W[i,j]a_{f,k}a_{g,k'}c_{k,k',l}) b_{k,k',l}(x),
\end{align}
which is subband limited by $S_1 \otimes S_2$ since $b_{k,k',l} \in S_1 \otimes S_2$.
\end{proof}
With this theorem, we are able to leverage the PNF-controllable sets of subbands to compose PNFs with multiplication and additions in a band-limited ways.
For example, if we want to construct a PNF that's band-limited by $S$, we can do it in the following steps:
\begin{enumerate}
    \item Identify a PNF-controllable set of subbands that contains $S$, let that be $\sS$ with the associated operation to be $\otimes$.
    \item Factorize $S$ into a series of subband by the associated operation : $S=S_1\otimes S_2\otimes \dots \otimes S_n$.
    \item For each $S_i$, we can creates a shallow PNF with $F_i = g^{(i)}\circ \gamma^{(i)}$, where $\gamma^{(i)}_j \in S_i$ for all $j$ and $g^{(i)}(z) = Wz$ is a linear layer (without bias).
    \item Composed these layers together using rule-2 of \cref{thm:controllability}.
\end{enumerate}
One can see that the abovementioned way to construct PNF has the ability to create exponential number of basis functions, all of which within $S$.
Following is the intuitive reason why that's the case.
Suppose each neuron $F_i$ is a linear sum of $d$ different basis functions (i.e. the set of basis chosen for $F_i$ is different from those of $F_j$ if $i\neq j$).
Furthermore, suppose that \cref{def:basis} creates different sets of basis in the right-hand-side of the multiplication.
then every time we apply rule-2 of \cref{thm:controllability} on $d$-dimensional inputs (i.e. assume that we have $d$-numer of $W$ matrix in rule-2 to create an output of $d$-dimensional everytime), we creates $d$-number of different basis for every-single existing cases.
As a result, if we apply rule-2 $L$-times, then we will have $d^L$ number of different basis.
At the same time, the number of parameters we used is $d^2$ for each $F_i$ and $d^3$ for each multiplication.
This means that the number of parameter is $O(Ld^3)$ while the number of basis functions we create in the final linear sum is $O(d^L)$.
As a result, such construction can potentially lead to a compact, expressive, and subband-limited PNF.

Note that what's described above is merely an intuitive argument, since many of the conditions might be difficult to hold strictly (e.g. different basis are created for multiplication on the right-hand-sides of \cref{def:basis}).
Empirically, we found that while these conditions are only loosely held true, such construction is still capable of creating a large number of basis functions.
This observation aligns with the analysis of MFN-like network can a large number of basis functions, presented in MFN~\cite{Fathony2021MultiplicativeFN} and BACON~\cite{Lindell2021BACONBC}.
In the rest of the subsection, we will show some construction of PNF-controllable set of subbands with various basis functions.

\subsubsection{Fourier Basis}
In this section, we will consider Fourier basis of $\Re^d\to \Re$ parameterized by $\omega\in\Re^d$: $\bB_{Fourier}=\{\exp(i\omega^Tx)\}_{\omega}$.
A commonly used subband definition (as shown in Equation (1) of the main text) is following:
\begin{definition}[Fourier Subband]
The Fourier subband for Fourier basis of $\Re^d\to\Re$ functions can be defined with a lower band limit $\alpha \in \Re^+$, an upper band limit $\beta\geq alpha$, an orientation $\dd\in\Re^d$ and an angelar width $\gamma\in\Re^+$:
\begin{align}
R_F(\alpha, \beta, \dd, \gamma,p) =
    \left\{
    \oomega |
    \alpha \leq \norm{\oomega}_p \leq \beta, \norm{\dd}=1,
    \oomega^T\dd \geq \cos(\gamma)\norm{\oomega}_2
    \right\}
    \label{def:ff-subband-def}.
\end{align}
\end{definition}

With these subbands, we can find PNF-controllable set of subbands in the following format:
\begin{theorem}[Fourier PNF-Controllable Set of Subbands with L2 norm]
For $|\gamma| < \frac{\pi}{4}$, define
$\sS_{Fourier-L2}(\dd,\gamma)=\{R_F(\alpha, \beta, \dd, \gamma, 2) | \forall  0 \leq \alpha \leq \beta\}$.
Each subband in $\sS$ can be parameterized with tuple $(\alpha, \beta)$.
$\sS_{Fourier-L2}(\dd, \gamma)$ is a PNF-Controllable Set of Subbands with the following definition of binary relation $\otimes_{FF}$:
\begin{align}
\label{eq:otimes-ff}
    (\alpha_1, \beta_1) \otimes_{FF}(\alpha_2, \beta_2) 
    = (\sqrt{\cos(2\gamma)}(\alpha_1 + \alpha_2), \beta_1 + \beta_2).
\end{align}
\label{thm:ff-control}
\end{theorem}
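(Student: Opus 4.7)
The plan is to verify the two conditions of \cref{def:controllable-set-of-subbands} for $\sS_{Fourier-L2}(\dd,\gamma)$ under the operation $\otimes_{FF}$ from \eqref{eq:otimes-ff}. By \cref{lemma:fourier-basis}, Fourier basis elements multiply by frequency addition, $b_{\oomega_1}b_{\oomega_2} = b_{\oomega_1+\oomega_2}$, so multiplicative closure reduces to a purely geometric claim: if $\oomega_i \in R_F(\alpha_i,\beta_i,\dd,\gamma,2)$ for $i=1,2$, then $\oomega_3 := \oomega_1+\oomega_2$ lies in $R_F(\sqrt{\cos(2\gamma)}(\alpha_1+\alpha_2),\beta_1+\beta_2,\dd,\gamma,2)$. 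The union clause I would dispatch via the relaxed form flagged in the footnote of \cref{def:controllable-subband-decomposition-main}: any union of two concentric shells sharing the same orientation cone lies in the span of a single enveloping shell in $\sS_{Fourier-L2}(\dd,\gamma)$.

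Three inequalities remain for the geometric claim. The upper bound $\|\oomega_3\|_2 \le \beta_1+\beta_2$ is immediate from the triangle inequality. The angular constraint comes from linearity of the inner product,
\begin{equation*}
\oomega_3^T\dd = \oomega_1^T\dd + \oomega_2^T\dd \ge \cos(\gamma)\bigl(\|\oomega_1\|_2+\|\oomega_2\|_2\bigr) \ge \cos(\gamma)\|\oomega_3\|_2,
\end{equation*}
where the first step uses the per-summand hypothesis and the second is the triangle inequality in reverse.

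The lower bound is the substantive step and the main obstacle. Since each $\oomega_i$ sits within angular distance $\gamma$ of $\dd$, the angle $\theta$ between $\oomega_1$ and $\oomega_2$ is at most $2\gamma$; the hypothesis $|\gamma|<\pi/4$ enters precisely here, ensuring $2\gamma < \pi/2$ and hence $\cos\theta \ge \cos(2\gamma) > 0$. Expanding
\begin{equation*}
\|\oomega_3\|_2^2 = \|\oomega_1\|_2^2 + \|\oomega_2\|_2^2 + 2\|\oomega_1\|_2\|\oomega_2\|_2\cos\theta \ge \alpha_1^2+\alpha_2^2+2\cos(2\gamma)\alpha_1\alpha_2,
\end{equation*}
and then invoking the elementary identity $(1-\cos(2\gamma))(\alpha_1^2+\alpha_2^2) \ge 0$ to absorb $\alpha_1^2+\alpha_2^2$ into $\cos(2\gamma)(\alpha_1+\alpha_2)^2$, yields $\|\oomega_3\|_2 \ge \sqrt{\cos(2\gamma)}(\alpha_1+\alpha_2)$ after taking square roots. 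The one small piece I would verify carefully is the angle bound $\theta \le 2\gamma$ itself, which follows from decomposing each $\oomega_i$ into components along and orthogonal to $\dd$ and computing $\cos\theta$ directly.
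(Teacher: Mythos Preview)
Your proposal is correct and follows essentially the same route as the paper: triangle inequality for the upper bound, linearity plus triangle inequality for the angular constraint, and the same algebraic rearrangement $\alpha_1^2+\alpha_2^2+2\cos(2\gamma)\alpha_1\alpha_2 = \cos(2\gamma)(\alpha_1+\alpha_2)^2 + (1-\cos(2\gamma))(\alpha_1^2+\alpha_2^2)$ for the lower bound. The paper's proof simply asserts the angle bound $\theta\le 2\gamma$ without the decomposition you propose, and it does not address the union clause at all (only the multiplication condition is proved), so your treatment is if anything slightly more complete.
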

\begin{proof}
It's sufficient to show that $\omega_1 \in R(\alpha_1, \beta_1, \dd, \gamma, 2)$, and 
$\omega_2 \in R(\alpha_2, \beta_2, \dd, \gamma, 2)$, then 
$\omega_1 + \omega_2 \in R(\sqrt{\cos(2\gamma)}(\alpha_1+\alpha_2), \beta_1 + \beta_2,\dd, \gamma,2)$.
Let's start with the bound in the norm:
\begin{align}
    \norm{\omega_1+\omega_2} 
    &=\sqrt{\norm{\omega_1} + \norm{\omega_2} + 2\omega_1^T\omega_2} \\
    &=\sqrt{\norm{\omega_1}^2 + \norm{\omega_2}^2 + 2\norm{\omega_1}\norm{\omega_2}\cos(\theta)},
\end{align}
where $\theta$ is the radius of the angle between $\omega_1$ and $\omega_2$.
Since $\cos(\theta)\leq 1$, it's easy to show that:
\begin{align}
    \norm{\omega_1+\omega_2} 
    &=\sqrt{\norm{\omega_1}^2 + \norm{\omega_2}^2 + 2\norm{\omega_1}\norm{\omega_2}\cos(\theta)} \\
    &\leq \sqrt{\beta_1^2 + \beta_2^2 + 2\beta_1\beta_2} = \beta_1 + \beta_2.
\end{align}
Now we need to show that $\norm{\omega_1+\omega_2} \geq \alpha_1 + \alpha_2$.
By \cref{def:ff-subband-def}, we know that $\omega_1$'s angle with $\dd$ is at most $\gamma$. 
Similarly, $\omega_2$'s angle with $\dd$ is also at most $\gamma$.
With that said, the largest angle between $\omega_1$ and $\omega_2$ should be less than $2\gamma$.
As a result, $\cos(\theta) \geq \cos(2\gamma)$.
Note that $|\gamma| \leq \frac{\pi}{4}$, so $\cos(\theta) \geq \cos(2\gamma) \geq \cos(\frac{\pi}{2}) = 0$.
With that, we can show:
\begin{align}
    \norm{\omega_1+\omega_2} 
    &=\sqrt{\norm{\omega_1}^2 + \norm{\omega_2}^2 + 2\norm{\omega_1}\norm{\omega_2}\cos(\theta)} \\
    &\geq \sqrt{\alpha_1^2 + \alpha_2^2 + 2\norm{\omega_1}\norm{\omega_2}\cos(\theta)} \\
    &\geq \sqrt{\alpha_1^2 + \alpha_2^2 + 2\alpha_1\alpha_2\cos(2\gamma)} \quad \text{(Possible since } \cos(\theta) \geq \cos(2\gamma)\geq 0\text{)}\\
    &= \sqrt{\cos(2\gamma)(\alpha_1^2 + \alpha_2^2 + 2\alpha_1\alpha_2) + (1-\cos(2\gamma))(\alpha_1^2 + \alpha_2^2)} \\
    &\geq \sqrt{\cos(2\gamma)(\alpha_1^2 + \alpha_2^2 + 2\alpha_1\alpha_2)}\quad\text{Since } \cos(2\gamma)\leq 1 \text{ and } \alpha_1,\alpha_2 \geq 0 \\
    &= \sqrt{\cos(2\gamma)}(\alpha_1 + \alpha_2).
\end{align}
Finally, we want to show that $(\omega_1+\omega_2)^T\dd \geq \cos(\gamma)\norm{\omega_1+\omega}$:
\begin{align}
    (\omega_1+\omega_2)^T\dd 
    &=\omega_1^T\dd + \omega_2^T\dd \\
    &\geq \cos(\gamma)\norm{\omega_1} + \cos(\gamma)\norm{\omega_2} \\
    &= \cos(\gamma)(\norm{\omega_1} + \norm{\omega_2}) \\
    &\geq \cos(\gamma)\norm{\omega_1 + \omega_2}\quad \text{by Triangular inequiality of L2-norm}.
\end{align}
\end{proof}

Intuitively, the smaller the angel $\gamma$ is, the tighter we are able to guarantee the lower-bound compared to $\alpha_1 + \alpha_2$.
With this said, if we want to build a subband with lower-bound $l$ and angle $\gamma$, one way to achieve it is to set $\alpha_1 = 0$ and $\alpha_2 = \frac{l}{\sqrt{\cos(2\gamma)}}$.
But we found empirically that the network perform better when the lower-band-limit is overlap with the upper-band-limit.
We will discuss this more in \cref{sec:implementationdetails}.

\subsubsection{Fourier L1}
As mentioned in Section 3.3.2 of the main paper, when working in image domain, a preferable way to define the PNF-controllable set of subbands is through L-$\infty$ norm.
This is because the L-$\infty$ norm creates subband that can tile the corner of the frequency domain created by the image without going over the Nyquist rate. 
But to achieve it, we need to restrict each subband within a total-vertical or total-horizontal region.
\begin{wrapfigure}{R}{0.25\linewidth}
    \vspace{-5em}
    \centering
    \includegraphics[width=\linewidth,height=\linewidth]{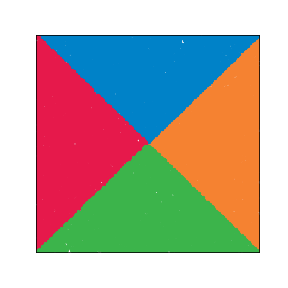}
    \vspace{-1em}
    \caption{Visualization of the four L-$\infty$ consistent regions in $\Re^2$, one color for each.}
    \label{fig:r-infty-region}
\end{wrapfigure}
Intuitively, those regions contains vectors whose L-$\infty$ norm is taking absolute value of the same dimension:
\begin{definition}[L-$\infty$ dimension consistent region]
We call a region $R_\infty(n,s) \subset \Re^d$ has consistent L-$\infty$ dimension of $n$ and sign $s\in \{+1, -1\}$if:
\begin{enumerate}
    \item $\forall \dd\in R, \norm{\dd}_\infty = |\dd[n]|,$ where $\dd[n]$ denotes the $n$th value of vector $\dd$; and
    \item $\forall \dd_1, \dd_2\in R$, $\operatorname{sign}(\dd_1[n]) = \operatorname{sign}(\dd_2[n]) = s$.
\end{enumerate}
\label{def:l-inf-consistent}
\end{definition}
Please refer to \cref{fig:r-infty-region} for the four L-$\infty$ dimension consistent region in $\Re^2$.
In general, for $\Re^d$, there are $2d$ number of such regions.

It's easy to see the following property of $R_\infty(n)$:
\begin{lemma}
If $\dd \in R_\infty(n, s)$, then $a\dd \in R_\infty(n, s)$ for $a\in\Re$ and $a > 0$.
\label{lemma:l-infty-region-scale-invar}
\end{lemma}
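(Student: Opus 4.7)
The plan is to verify directly that scaling a vector $\dd \in R_\infty(n,s)$ by a positive scalar $a$ preserves both defining conditions of \cref{def:l-inf-consistent}. Since the lemma is purely about how the infinity norm and coordinate signs interact with positive scaling, no machinery beyond basic properties of $|\cdot|$ and $\operatorname{sign}(\cdot)$ is needed.

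First I would address condition (1) of \cref{def:l-inf-consistent}: I need to show that $\norm{a\dd}_\infty = |(a\dd)[n]|$. Using the absolute homogeneity of the infinity norm, $\norm{a\dd}_\infty = |a|\,\norm{\dd}_\infty$, and since $a > 0$, $|a| = a$. By hypothesis $\norm{\dd}_\infty = |\dd[n]|$, so
\begin{equation*}
\norm{a\dd}_\infty = a\,|\dd[n]| = |a\,\dd[n]| = |(a\dd)[n]|,
\end{equation*}
which is exactly condition (1) for $a\dd$.

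Next I would verify condition (2): for any $\dd \in R_\infty(n,s)$ we need $\operatorname{sign}((a\dd)[n]) = s$. Since $(a\dd)[n] = a\,\dd[n]$ and $a > 0$, the multiplicative property of sign gives $\operatorname{sign}(a\,\dd[n]) = \operatorname{sign}(a)\,\operatorname{sign}(\dd[n]) = (+1)\cdot s = s$, as required. Applying this to any two scaled vectors $a_1\dd_1$ and $a_2\dd_2$ (or, more simply, noting that the sign condition is a pointwise requirement once membership is established) shows the second clause is preserved.

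There is no real obstacle here; the lemma is essentially a sanity check that $R_\infty(n,s)$ is a positive cone. The only thing worth emphasising in the write-up is why strictly positive scaling is needed: if $a$ were negative the sign condition would flip (moving the vector into $R_\infty(n,-s)$), and if $a = 0$ the resulting zero vector would trivially satisfy condition (1) with every coordinate but could be regarded as degenerate for condition (2). This motivates the hypothesis $a > 0$ in the statement.
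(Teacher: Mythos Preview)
Your proof is correct and follows essentially the same approach as the paper: the paper argues via $\argmax_i |(a\dd)[i]| = \argmax_i |\dd[i]| = n$ for condition (1) and notes $\operatorname{sign}(a\dd[n]) = \operatorname{sign}(\dd[n])$ for condition (2), which is just a terser phrasing of your homogeneity and sign-preservation steps.
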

\begin{proof}
\begin{align}
    \argmax_i |(a\dd)[i]| \argmax_i |a\dd[i]| = \argmax_i |a||\dd[i]| = \argmax_i |\dd[i]| = n.
\end{align}
Since $a>0$, $\operatorname{sign}(a\dd[n]) = \operatorname{sign}(\dd[n])$.
\end{proof}

With this, we are able to prove a similar version of \cref{thm:ff-control} for L-$\infty$ norm but restricted each set of subband to be within only a region with consistent L-$\infty$ dimension:
\begin{theorem}[Fourier PNF-Controllable Set of Subbands with L-$\infty$ norm]
Let $R_\infty(n, s)$ be a region of consistent L-$\infty$ dimension of $n$ defined for $\Re^d$. 
Define the set of subbands as following:
\begin{align}
\sS_{FF-L\infty}(\dd,\gamma,n)=\{R_F(\alpha, \beta, \dd, \gamma, \infty) | \forall  0 \leq \alpha \leq \beta, R_F(\alpha, \beta, \dd, \gamma, \infty) \subset R_\infty(n,s)\}.
\end{align}
Each subband in $\sS_{FF-L\infty}$ can be parameterized with tuple $(\alpha, \beta)$.
$\sS_{FF-L\infty}(\dd, \gamma)$ is a PNF-Controllable Set of Subbands with the following definition of binary relation $\otimes_{FF}$:
\begin{align}
    (\alpha_1, \beta_1) \otimes_{FF\infty}(\alpha_2, \beta_2) 
    = (\alpha_1 + \alpha_2, \beta_1 + \beta_2).
\end{align}
\label{thm:ff-control-linf}
\end{theorem}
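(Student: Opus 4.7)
}
The plan is to mirror the proof of \cref{thm:ff-control} (the L2 case), but exploit the extra hypothesis that both subbands live inside a single L-$\infty$ dimension consistent region $R_\infty(n,s)$, which will make the L-$\infty$ norm behave additively instead of only sub-additively. Since the Fourier basis multiplies by $b_{\oomega_1}b_{\oomega_2}=b_{\oomega_1+\oomega_2}$, it suffices to prove the pointwise statement: if $\oomega_1\in R_F(\alpha_1,\beta_1,\dd,\gamma,\infty)$ and $\oomega_2\in R_F(\alpha_2,\beta_2,\dd,\gamma,\infty)$ with both subbands contained in $R_\infty(n,s)$, then $\oomega_1+\oomega_2\in R_F(\alpha_1+\alpha_2,\beta_1+\beta_2,\dd,\gamma,\infty)$ and this enlarged subband still lies inside $R_\infty(n,s)$.

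First I would handle the norm bounds. Because $\oomega_1,\oomega_2\in R_\infty(n,s)$, \cref{def:l-inf-consistent} gives $|\oomega_i[n]|=\|\oomega_i\|_\infty$ and $\operatorname{sign}(\oomega_i[n])=s$ for $i=1,2$. Consequently $(\oomega_1+\oomega_2)[n]$ has sign $s$ as well and $|(\oomega_1+\oomega_2)[n]|=\|\oomega_1\|_\infty+\|\oomega_2\|_\infty$. For any other coordinate $j\neq n$, the triangle inequality together with $|\oomega_i[j]|\leq |\oomega_i[n]|$ yields $|(\oomega_1+\oomega_2)[j]|\leq |(\oomega_1+\oomega_2)[n]|$, so $\|\oomega_1+\oomega_2\|_\infty=\|\oomega_1\|_\infty+\|\oomega_2\|_\infty$ exactly, and $\oomega_1+\oomega_2\in R_\infty(n,s)$. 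The lower and upper band limits follow immediately:
\begin{equation*}
\alpha_1+\alpha_2\ \leq\ \|\oomega_1+\oomega_2\|_\infty\ \leq\ \beta_1+\beta_2.
\end{equation*}
This is precisely where the proof diverges from the L2 case: no $\sqrt{\cos(2\gamma)}$ correction is needed because the region restriction forces the max-coordinates to align additively.

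Next I would verify the orientation constraint, which is formally identical to the L2 proof since the angular condition $\oomega^T\dd\geq \cos(\gamma)\|\oomega\|_2$ still uses the L2 norm. By linearity and the hypothesis,
\begin{equation*}
(\oomega_1+\oomega_2)^T\dd\ =\ \oomega_1^T\dd+\oomega_2^T\dd\ \geq\ \cos(\gamma)\bigl(\|\oomega_1\|_2+\|\oomega_2\|_2\bigr)\ \geq\ \cos(\gamma)\|\oomega_1+\oomega_2\|_2,
\end{equation*}
the last inequality being the usual L2 triangle inequality. This shows $\oomega_1+\oomega_2\in R_F(\alpha_1+\alpha_2,\beta_1+\beta_2,\dd,\gamma,\infty)$.

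Finally I would argue that the resulting subband still belongs to $\sS_{FF-L\infty}(\dd,\gamma,n)$, i.e.\ that $R_F(\alpha_1+\alpha_2,\beta_1+\beta_2,\dd,\gamma,\infty)\subset R_\infty(n,s)$. Since $R_\infty(n,s)$ is invariant under positive scaling (\cref{lemma:l-infty-region-scale-invar}), membership in $R_\infty(n,s)$ depends only on the direction $\oomega/\|\oomega\|$, hence only on $\dd$ and $\gamma$, not on the radial interval $[\alpha,\beta]$. Thus the containment is inherited from the input subbands. The main obstacle I foresee is not the computation itself, which is strictly simpler than the L2 case, but rather making the region-restriction step precise: one has to check that the additive equality for $\|\cdot\|_\infty$ really requires both vectors to share the same maximal coordinate and sign, and justify why excluding the $\pi/4$ and $3\pi/4$ orientation boundaries (as noted after \cref{eq:2d-tile}) keeps the tiling $\tT_{rect}$ inside a single $R_\infty(n,s)$ so that the theorem applies.
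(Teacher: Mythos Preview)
Your proposal is correct and follows essentially the same strategy as the paper: reduce to showing $\oomega_1+\oomega_2$ lands in the target subband, verify the $L^\infty$ norm bounds using the shared max-coordinate and sign, and reuse the L2 orientation argument verbatim. The only difference is that you establish $\oomega_1+\oomega_2\in R_\infty(n,s)$ by a direct coordinate computation (yielding the exact equality $\|\oomega_1+\oomega_2\|_\infty=\|\oomega_1\|_\infty+\|\oomega_2\|_\infty$), whereas the paper instead first proves the orientation inequality and then invokes \cref{lemma:l-infty-region-scale-invar}; your route is arguably cleaner, and you also supply the closure step (that the resulting subband remains in $\sS_{FF-L\infty}$) which the paper leaves implicit.
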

\begin{proof}
Similar to the proof in \cref{thm:ff-control}, we will show that if $\omega_1\in R_F(\alpha_1, \beta_1, \dd, \gamma, \infty)=R_1\in \sS_{FF-\infty}$ and 
$\omega_2\in R_F(\alpha_2, \beta_2, \dd, \gamma, \infty)=R_2\in \sS_{FF-\infty}$,
then we will have 
$\omega_1+\omega_2\in R_F(\alpha_1+\alpha_2, \beta_1+\beta_2, \dd, \gamma, \infty)\in \sS_{FF-\infty}$.

First, we show that $\omega_1+\omega_2$ is still within $R_\infty(n)$.
As shown in the proof for \cref{thm:ff-control}, we have $(\omega_1+\omega_2)^T\dd \geq \cos(\gamma)\norm{\omega_1 + \omega_2}$ .
This means that $\omega_1+\omega_2 \in R_\infty(n)$ by \cref{lemma:l-infty-region-scale-invar}.

Since $R_1$, $R_2$, and $\omega_1+\omega_2$ are both in $\sS_{FF-\infty}$, we will use this property to derive the upper bound:
\begin{align}
    \norm{\omega_1+\omega_2}_\infty = |(\omega_1+\omega_2)[n]| \leq |\omega_1[n]| + |\omega_2[n]| = \norm{\omega_1}_\infty + \norm{\omega_2}_\infty =  \beta_1 + \beta_2.
\end{align}
If $s>0$, then we know $\omega_1[n], \omega_2[n]>0$ and $(\omega_1+\omega_2)[n] > 0$.
As a result, $\norm{\omega_{1,2}}_\infty = \omega_{1,2}[n] \geq \alpha_{1,2}$ respectively.
This implies $\norm{\omega_1+\omega_2}_\infty = \omega_1[n]+\omega_2[n] \geq \alpha_1+\alpha_2$.

Similar arguement can be applied when $s<0$.
\end{proof}

Note that this theorem shows that the rectangular tiling mentioned in Equation (3) of the main text is actually operating under a PNF-controllable set of subbands.
We also provide an illustration of these two subband binary functions in \cref{fig:subband-mult}.
\begin{figure*}[t]
\begin{center}
\begin{tabular}{c@{}c@{}c@{}c|c@{}c@{}c@{}c}
    \includegraphics[width=0.12\linewidth]{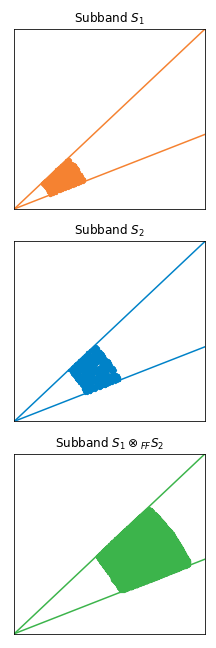}&
    \includegraphics[width=0.12\linewidth]{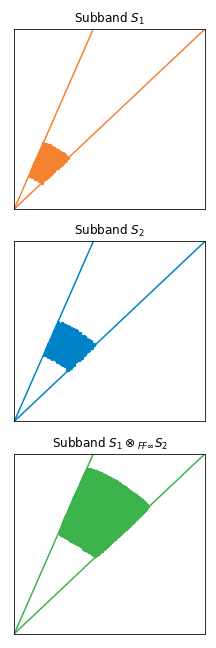}&
    \includegraphics[width=0.12\linewidth]{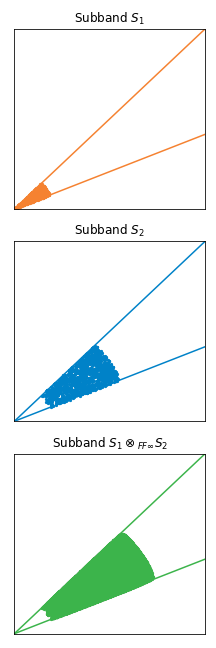}&
    \includegraphics[width=0.12\linewidth]{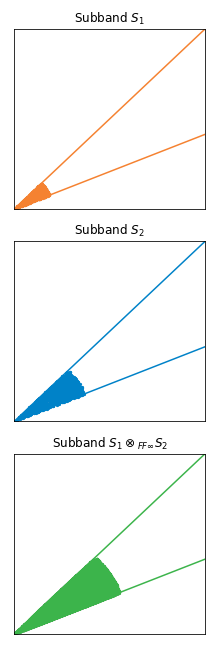}&
    \includegraphics[width=0.12\linewidth]{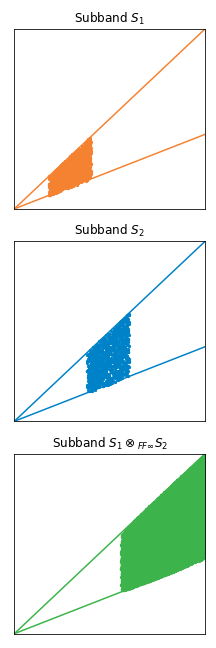}&
    \includegraphics[width=0.12\linewidth]{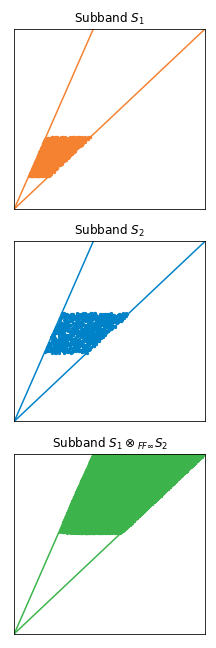}&
    \includegraphics[width=0.12\linewidth]{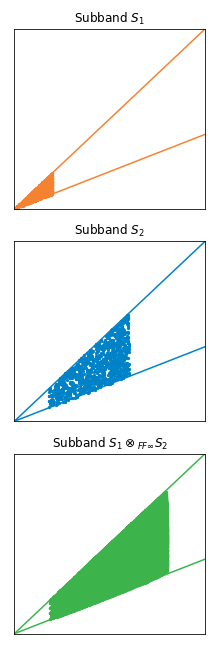}&
    \includegraphics[width=0.12\linewidth]{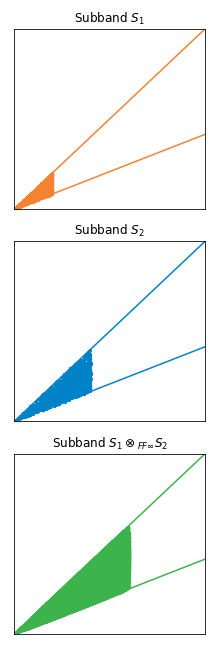}
\end{tabular}
\caption{Illustration of how subbands within PNF-Controllable Sets of subbands transform under multiplication. 
    We illustrate subbands defined in \cref{def:ff-subband}. So x-axis is $\omega[1]$ and y-axis is $\omega[2]$.
    The left four figures shows the operation under $\otimes_{FF}$ (\cref{thm:ff-control}) and the right four figures shows operations under $\otimes_{FF\infty}$ (\cref{thm:ff-control-linf}).
    The top two rows are $S_1$ and $S_2$, and the last row is $S_1 \otimes S_2$ with $\otimes$ to be the corresponding binary functions defined in the PNF-controllable set of subbands.
    }
\label{fig:subband-mult}
\end{center}

\end{figure*}
% \cref{fig:sub

\subsubsection{RBF}

\begin{theorem}[RBF PNF-Controllable Set of Subbands]
Assume the RBF-basis functions for $\Re^d\to \Re$ in the form $\bB_{RBF} = \{\exp(-\frac{1}{2}\gamma\norm{x-\mu})\}$ parameterized by tuple $(\gamma, \mu)$, where $\gamma \in \Re$ and $\mu \in \Re^d$.
Define subband in the following way:
\begin{align}
    S_{RBF}(M,\gamma) = \{\exp(-\frac{1}{2}\gamma\norm{x-\mu}) | \mu \in \operatorname{Cvx}(M)\},
\end{align}
where $M \subset \Re^d$ and $\operatorname{Cvx}(M)$ denotes the convex hull using all vectors of $M$.
Then
\begin{align}
\sS_{RBF}=\{S_{RBF}(M,\gamma) | \forall M \subset \Re^d, |M|<\infty, \gamma > 0\}
\end{align}
is a PNF-controllabl Set of Subbands with the following definition of binary relation $\otimes_{RBF}$:
\begin{align}
    (M_1, \gamma_1) \otimes_{RBF}(M_2, \gamma_2) 
    = (M_1\cup M_2, \gamma_1 + \gamma_2).
\end{align}
\label{thm:rbf-control}
\end{theorem}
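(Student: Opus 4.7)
The plan is to verify the two conditions of \cref{def:controllable-set-of-subbands} for $\sS_{RBF}$ with the proposed binary function $\otimes_{RBF}$. The multiplication condition is the main content, and it reduces to a convex-combination argument once we invoke the RBF product formula from \cref{lemma:rbf-basis}.

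First I would unpack the statement on the level of representatives. Take an arbitrary $b_1 \in S_{RBF}(M_1,\gamma_1)$ and $b_2 \in S_{RBF}(M_2,\gamma_2)$, so $b_i = \exp(-\tfrac{1}{2}\gamma_i\|x-\mu_i\|^2)$ with $\mu_i \in \operatorname{Cvx}(M_i)$. Applying the RBF product identity (the same computation used in the proof of \cref{lemma:rbf-basis}) gives
\begin{align}
b_1(x)\,b_2(x) \;=\; c(\gamma_1,\gamma_2,\mu_1,\mu_2)\,\exp\!\paren{-\tfrac{1}{2}(\gamma_1+\gamma_2)\,\norm{x-\mu'}^2},
\end{align}
where $\mu' = \frac{\gamma_1\mu_1+\gamma_2\mu_2}{\gamma_1+\gamma_2}$ and $c$ is a scalar independent of $x$. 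The product is thus a scalar multiple of a single RBF basis element with variance parameter $\gamma_1+\gamma_2$ and center $\mu'$. Since subband membership is defined per basis function and we are allowed to take spans, the remaining question is purely geometric: does $\mu'$ lie in $\operatorname{Cvx}(M_1 \cup M_2)$?

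The key observation is that the coefficients $\tfrac{\gamma_1}{\gamma_1+\gamma_2}$ and $\tfrac{\gamma_2}{\gamma_1+\gamma_2}$ are nonnegative (here we use $\gamma_i>0$) and sum to one, so $\mu'$ is a convex combination of $\mu_1$ and $\mu_2$. Because $\operatorname{Cvx}(M_i) \subseteq \operatorname{Cvx}(M_1\cup M_2)$, both $\mu_1$ and $\mu_2$ sit inside the convex set $\operatorname{Cvx}(M_1\cup M_2)$, and hence so does their convex combination. This places $\exp(-\tfrac{1}{2}(\gamma_1+\gamma_2)\|x-\mu'\|^2)$ inside $S_{RBF}(M_1\cup M_2,\gamma_1+\gamma_2)$, and therefore $b_1 b_2$ lies in its span, verifying the multiplicative closure part of \cref{def:controllable-set-of-subbands} with the claimed $\otimes_{RBF}$.

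Finally I would handle the union condition. Strictly speaking, $S_{RBF}(M_1,\gamma_1)\cup S_{RBF}(M_2,\gamma_2)$ is not itself of the form $S_{RBF}(M,\gamma)$ when $\gamma_1\neq\gamma_2$ or when $\operatorname{Cvx}(M_1)\cup\operatorname{Cvx}(M_2)$ fails to be convex, so I would invoke the relaxed version of \cref{def:controllable-set-of-subbands} mentioned in the footnote: it suffices that the union lies in the span of a member of $\sS_{RBF}$, which is immediate by taking $M = M_1\cup M_2$ and observing that both variance levels already appear in $\sS_{RBF}$ as separate members, so a finite union reduces to an aggregate of spans indexed inside $\sS_{RBF}$. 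The only real obstacle I anticipate is this bookkeeping around condition (1); once the product formula and convex-combination identification are in place, the multiplicative part is essentially one line.
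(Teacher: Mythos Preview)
Your argument for the multiplicative condition is exactly the paper's approach: invoke the RBF product identity from \cref{lemma:rbf-basis} and observe that $\mu' = \tfrac{\gamma_1\mu_1+\gamma_2\mu_2}{\gamma_1+\gamma_2}$ is a convex combination of $\mu_1$ and $\mu_2$, hence lands in $\operatorname{Cvx}(M_1\cup M_2)$; the paper's proof is only a one-paragraph sketch saying precisely this and nothing more. One small correction on your handling of condition~(1): the footnote you appeal to relaxes condition~(2) (allowing $b_1b_2$ to lie in the \emph{span} of $S_{\theta_1}\otimes S_{\theta_2}$ rather than in the subband itself), not the union-closure condition, so it does not actually license the workaround you describe---but the paper's sketch simply ignores condition~(1) altogether, so you are already more careful than the original on this point.
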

\begin{proof}
(Sketch)
This is because by applying \cref{eq:rbf-mult} , we have $\mu'$ is a weighted sum of $\mu_1$ and $\mu_2$ with weights normalized to $1$.
With that said, $\mu'$ is in the convex hull of where $\mu_1$ and $\mu_2$ is sampling from.
The operation on $\gamma$ is taken directly from \cref{eq:rbf-mult}.
\end{proof}

We can easily generalize \cref{thm:rbf-control} to subband definition where $\gamma$ samples from an interval since it only makes sense when $\gamma > 0$.
The abovementioned theory suggests that every-time we multiply two RBF basis PNF, we will increase the region of the convex hull of $M$ and increase the $\gamma$ (which is inverse to the scale of the RBF).

\subsubsection{Gabor}

We will show how to combine RBF and Fourier cases of the PNF-controllable set of subbands to create a PNF-controllable set of subband for Gabor basis.

\begin{theorem}[Multiplication Rule of PNF-Controllable Set of Subbands]
Let $\bB_1$ and $\bB_2$ be two basis for function $\Re^d\to\Re$ that satisfies \cref{def:basis}.
Let $\sS_1, \otimes_1$ and $\sS_2, \otimes_2$ be the PNF-controllable set of subbands for $\bB_1$ and $\bB_2$ respectively.
Assume the subbands for $\bB_1$ and $\bB_2$ are parameterized by $\theta_1 \in \Re^n$ and $\theta_2 \in \Re^m$ correpsondingly.
Define $\bB_3 = \{b_1b_2| b_1 \in \bB_1, b_2\in\bB_2\}$.
Define subband as $S_3(\theta_1, \theta_2) = \{b_1b_2 | b_1\in S_1(\theta_1), b_2 \in S_2(\theta_2)\}$, where $S_1$ and $S_2$ are subbands for $\bB_1$ and $\bB_2$ correspondingly.
Then following is a PNF-controllable set of subbands for basis $\bB_3$:
\begin{align}
    \sS_3 = \{S_3(\theta_1, \theta_2)| S_1(\theta_1) \in \sS_1, S_2(\theta_2) \in \sS_2\},
\end{align}
whose corresponding binary function is defined as :
\begin{align}
    (\theta_{1,a}, \theta_{2,a}) \otimes_3(\theta_{1,b}, \theta_{2,b}) = (\theta_{1,a}\otimes_1\theta_{1,b}, \theta_{2,a}\otimes_2\theta_{2,b}).
\end{align}
\label{thm:multi-control}
\end{theorem}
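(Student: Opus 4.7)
The plan is to verify the two requirements of \cref{def:controllable-set-of-subbands} for $\sS_3$ with the proposed binary operation $\otimes_3$. As a preliminary I would invoke \cref{thm:basis-composition} to confirm that $\bB_3$ itself satisfies \cref{def:basis}, so that talking about PNF-controllable subbands of $\bB_3$ is even well-posed. After that the multiplication condition and the union-closure condition decouple.

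For the multiplication condition (the substantive part), I would take arbitrary $b \in S_3(\theta_{1,a}, \theta_{2,a})$ and $b' \in S_3(\theta_{1,b}, \theta_{2,b})$ and use the defining factorization to write $b = b_1 b_2$ with $b_1 \in S_1(\theta_{1,a})$, $b_2 \in S_2(\theta_{2,a})$, and similarly $b' = b_1' b_2'$. Rearranging gives
\begin{equation*}
b\, b' \;=\; (b_1 b_1')\,(b_2 b_2'),
\end{equation*}
and the PNF-controllability of $\sS_1$ and $\sS_2$ applied separately yields $b_1 b_1' = \sum_i \alpha_i c_{1,i}$ with $c_{1,i} \in S_1(\theta_{1,a} \otimes_1 \theta_{1,b})$, and $b_2 b_2' = \sum_j \beta_j c_{2,j}$ with $c_{2,j} \in S_2(\theta_{2,a} \otimes_2 \theta_{2,b})$. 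Expanding the product gives $b b' = \sum_{i,j} \alpha_i \beta_j \,(c_{1,i} c_{2,j})$, and by construction each $c_{1,i} c_{2,j}$ lies in $S_3(\theta_{1,a} \otimes_1 \theta_{1,b},\, \theta_{2,a} \otimes_2 \theta_{2,b})$. This is exactly the containment in the span required by \cref{def:controllable-set-of-subbands}.

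The main obstacle is the bookkeeping around union-closure. Neither $\sS_1$ nor $\sS_2$ is literally closed under union in the motivating examples (for instance, an annulus is not a single $R^{(2)}(\alpha,\beta,\dd,\gamma)$), so strict set equality cannot be expected in $\sS_3$ either. Following the same relaxed reading used elsewhere in the paper (cf.\ the footnote under \cref{def:controllable-subband-decomposition-main}), I would interpret union-closure as ``contained in a subband of $\sS$''; then picking $\theta^*_i$ with $S_i(\theta_{i,a}) \cup S_i(\theta_{i,b}) \subseteq S_i(\theta^*_i)$ immediately yields $S_3(\theta_{1,a}, \theta_{2,a}) \cup S_3(\theta_{1,b}, \theta_{2,b}) \subseteq S_3(\theta^*_1, \theta^*_2)$ directly from the product construction. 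As an immediate application I would point out that combining \cref{thm:multi-control} with \cref{thm:ff-control} and \cref{thm:rbf-control}, together with \cref{lemma:gabor-basis}, produces a PNF-controllable set of subbands for the Gabor basis with $\otimes_3$ inheriting coordinatewise from the Fourier and RBF operations.
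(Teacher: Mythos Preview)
Your proposal is correct and follows essentially the same approach as the paper: invoke \cref{thm:basis-composition} for $\bB_3$, then factor $bb'=(b_1b_1')(b_2b_2')$ and apply the controllability of $\sS_1,\sS_2$ coordinatewise. The paper only gives a two-line sketch, so your write-up is in fact more detailed than theirs (including your explicit treatment of union-closure and the Gabor application, the latter of which the paper mentions immediately after the theorem).
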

\begin{proof}
(Sketch)
It's easy to show that $\bB_3$ satisfies \cref{def:basis} using \cref{thm:basis-composition}.
Then we can show that multiplication of two basis function in $\bB_3$ moves $S_3$ according to $\otimes_3$ by leveraging the definition of $\sS_1, \otimes_1$ and $\sS_2, \otimes_2$.
\end{proof}

The PNF-controllable sets of subbands for Gabor basis can be obtained by applying \cref{thm:multi-control} to combine a Fourier set of subbands (e.g. \cref{thm:ff-control}) and the RBF set of subbands (e.g. \cref{thm:rbf-control}).

\subsection{Different Instantiation of PNFs}\label{sec:diff-pnfs}
In the previous sections, we've shown that 1) PNF allows different architectures, and 2) PNF allows different choices of basis, and 3) PNF can be designed to be subband-limited without losing its compositionality.
In this section, we will show several instantiation of the PNFs.
Specifically, for each design of the PNF, we will use the following steps:
\begin{enumerate}
    \item Identify the subband of interests. These subbands should be able to cover all necessary basis functions needed to reconstruct the signal correctly.
    \item Organize the subbands according to the PNF-Controllable sets of subbands.
    \item For each PNF-Controllable set of subband, create a PNF whose outputs are subband-limited to the corresponding subband of interests in the set.
    \item The final PNF is subs of all the previous PNF.
\end{enumerate}
In this section we will use the abovementioned framework to show how PNF can be instantiated in a different forms, using different basis and network architectures.
The detailed instantiation of Fourier PNF will be discussed in \cref{sec:implementationdetails}.

\paragraph{Gabor PNF.}
Similar to MFN~\cite{Fathony2021MultiplicativeFN}, we use the same network architecture as Fourier PNF, but changing the basis into a Gabor basis (as shown in \cref{lemma:gabor-basis}).
With the Gabor basis, the definition of the subbands requires to include a partition in the spatial domain as shown in \cref{thm:rbf-control}.
For simplicity, we can set $\gamma \in [0, \infty)$ (i.e not trying to control $\gamma$) and set $M$ in the following way:
\begin{align}
    M=\left\{s_i\ee_i + s_j\ee_j | i\neq j, s_{i,j}\in\{0.5,-0.5\}, \ee_i[k] = \begin{cases}1\quad\text{if } k=i \\ 0\quad\text{otherwise}\end{cases} \right\}.
\end{align}
Intuitively, $\ee_i$ is an one-hot vector with the $i^{th}$ dimension to be $1$.
An example of $M$ in 2D is $\{[0.5,0.5], [0.5,-0.5], [-0.5, 0.5], [-0.5, -0.5]\}$, which includes all the points in the rectangle of $[-0.5, 0.5]^2$.
Since all data are sampled withitn $[-0.5, 0.5]^d$, this means the only contorl we want to enforce on $\mu$ is that it should live within the boundary $[-0.5, 0.5]^d$.
With this, we are able to create a subband-limited version of Gabor MFN, whose output is a linear sums of the Gabor basis.

\paragraph{RBF PNF.}
Here we will show an interesting way to design an RBF PNF which corresponds to subdividing a rectangular grid and interpolating it with a Gaussian.
Here we will use the subband defined by \cref{def:controllable-set-of-subbands}.
We are interesting in modeling the following sets of subbands, one for each level $l$:
$S_{RBF}(l) = S_{RBF}(M_l, 2^n\gamma_0)$, where $M_l=\{\sum_{i=1}^d \frac{k_i}{2^l} \ee_i | k_i \in [-2^{l-1}, 2^{l-1})\}$ is a set of grid with resolution $2^l$.
While $M_l$ contains $2^{2l}$ number of basis, this can be created through function composition very compactly through PNF.
First, define $f(x) \in \Re^{d^2}$ that each $f(x)[i]$ is an RBF function which takes a corner in $[-0.5, 0.5]^d$ as $\mu$ and $\gamma_0$ as the scale.
Then we define the network in the following recursive way:
\begin{align}
    F_0 &= f(x),\quad F_{k+1} = (A_{k+1}F_k(x)) \odot (B_{k+1} F_k(x)),\label{eq:rbf-pnf-network}
\end{align}
where $\odot$ denotes Hardarmard product and $A, B$ are real-value matrices.
We can use \cref{thm:controllability} and \cref{thm:rbf-control} to show that $F_k$ is subband limited by $S_{RBF}(k)$ and it creates all $\mu$'s that's defined by $M_l$.
The output of $F_k$ can be viewed as an Gaussian interpolated version of points $M_l$, where the value of each $M_l$ is given by the network parameters $A$'s and $B$'s.

\paragraph{Pairwise Gabor PNF.}
For the previous definition of Gabor PNF, we are not attempting to control the spatial content of the signal (i.e. there is no ability to pull out a network output with specific $\gamma$ and $M$ region).
Fortunately, the PNF allows us to design a netowrk architecture to enable such control for both the spatial RBF part and the Fourier part.
The idea is to have a branch of the network to generate RBF PNF $H_{k,l}$ that's subband limited by different $M_k$ at different levels of $2^l\gamma_0$.
This is achievable by creating an essemble of network using equation \cref{eq:rbf-pnf-network}.
Similarly, Fourier PNF creates a series of PNFs $G_{n,m}$, each of which is subband limited in different scale (indexed by $n$) and different orientation (indexed by $m$).
We want to design the Pairwise Gabor PNF to be the sum of pairwise product of $F_{k,l}$ and $G_{n,m}$:
\begin{align}
    F(x) = \sum_{k,l,n,m} a(k,l,n,m) H_{k,l}(x)G_{n,m}(x),
\end{align}
where $a_{k,l,n,m}$ are trainable parameters.
It's easy to show that the output of such design is subband limited applying \cref{thm:multi-control}.
And this network design allows us to remove either certain spatial area by setting $a_{k,l, \cdot, \cdot} = 0$, or to remove certain frequency content by setting $a_{\cdot, \cdot, n, m}=0$.

We will show results of some of these designs in the image expressivity experiments (i.e. \cref{tab:img_fit_quant_full}.
\section{Implementation Details}
\label{sec:implementationdetails}

As mentioned in Sec.~3.3.3 of the main text,  we leverage Theorem~2 to factorize $F$
\begin{align}{\textstyle
F(\xx) = \sum_j F_j(\xx),\ &F_j(\xx) = G_{j}(\xx, b_j, b_j) W_{jn} Z_{j,n}(\xx), \label{eq:ff-pnf-within-set-def} \\
    Z_{j,1}(\xx) = G_{j}(\xx, 0, \Delta_1),\ &
    Z_{j,k}(\xx) = G_{j}(\xx, 0, \Delta_k)W_i Z_{j,k-1}(\xx),
}\end{align}
where $G_{j}(\xx, a, b)$ is subband limited in $R_F(a, b, d(\theta_j), \delta, \infty)$ and $\Delta_k = b_k - b_{k-1}$.
We instantiate this architecture by setting $G_{j}(\xx, a, b)$ into a linear transform of basis sampled from the subband to be limited:
\begin{align}
    G_{j}(\xx, a, b) = W_i\gamma_j(\xx), \gamma_j \in R_F(a, b, d(\theta_j), \delta, \infty)^d, W_i\in \Re^{h\times d},
\end{align}
where $h$ and $d$ is the dimension for the output and the feature encoding.
We realize the $W_{i}$ as linear layers with no bias. $\gamma_j \in R_F(a, b, d(\theta_j), \delta, \infty)^d$ are initialized randomly, with $\theta$  and the radius chosen uniformly in $R_F(a, b, d(\theta_j), \delta, \infty)^d$. 

For experiments involving 2D images (e.g., image fitting, texture manipulation), the output dimension $h$ for $G_{j}(\xx, b_j, b_j)$ is set to be $3$ (RGB output values). The input dimension $d$ for $Z_{j,1}(\xx) = G_{j}(\xx, 0, \Delta_1)$ is set to be $2$ ($x$ and $y$ coordinates). Otherwise the hidden dimensions are chosen to be $128$. For 3D SDF fitting, a hidden dimension of size $100$ is chosen, and for NeRF, a hidden dimension of $86$ is chosen. The input dimension for 3D SDF fitting and for NeRF is 3 (we follow BACON's setting of modeling irradiance fields). The output dimension is $1$ for 3D SDF fitting and $4$ for NeRF (RGB and Occupancy values).

\paragraph{Tiling} For images, the region of interest chosen for tiling is $[-B,B]^2$ where $B$ is band limit, set to be 64, following \bacon{}~\cite{Lindell2021BACONBC}. Eq.~2 and Eq.~3 describe a potential tiling with no overlapping fans. In practice, for fitting tasks we found it beneficial to use overlapping fans, and so consider the following, modified tilings:
\begin{align*}
\tT_{circ} = \{
S_{ij} = R_F(b_i, b'_{i+1}, \dd(\theta_j), \delta, 2) | & b_1\leq \dots \leq b_{n-1}, b'_2\leq \dots \leq b'_n,  \\
& \theta_j = j\delta,\delta=\frac{\pi}{m},1\leq j \leq 2m\},
\end{align*}
\begin{align*}
\tT_{rect} = \{
S_{ij} = R_F(b_i, b'_{i+1}, \dd(\theta_j), \delta, \infty) | & b_1\leq \dots \leq b_{n-1}, b'_2\leq \dots \leq b'_n, \\
& \theta_j = j\delta,1\leq j \leq 2m, j\neq m\}.
\end{align*}
This allows us to cover the space of frequency basis more compactly and learn a more faitfull fitting of a given signal. 
In particular, we set $b_1\leq \dots \leq b_{n-1}$ to be  
$0, \frac{1}{16}, \frac{1}{8}, \frac{1}{4}$ and $b'_2\leq \dots \leq b'_{n}$ to be  $0, \frac{1}{8}, \frac{1}{8}, \frac{1}{4}, \frac{1}{2}$. 
$m$ is chosen to be $\frac{1}{8}$, covering the frequency band with $8$ orientations. By default we use rectangular tiling for our experiments with PNF. 

For experiments that requires higher dimensional inputs (e.g. 3D), we mainly use a generalization of the rectangular tiling.
We will first create a tile for each L-$\infty$ dimensional consistent region.
In the case of 3D, we will create one PNF-controllable set of subbands for each of the following: $R_\infty(1, 1)$, $R_\infty(2, 1)$, and $R_\infty(3, 1)$.
We use the same division of bandwidth within the construction of each of these PNF-controllable subband sets, but scaled it to correpsnding max bandwidth according to different applications.
While this already covers all the basis function of interests, we found it improves the performance if we tile the frequency space in an overcomplete way.
Specifically, for each pairs of the eight octants in $[-B, B]^3$ where $B$ is the band-limit, we will create three non-overlapping PNF-controllable sets of subbands with the L-$\infty$ norm.
One interesting trick we leveraged is that the three non-overlapping PNF-controllabel sets of subband within the same octant can be implemented with one band-limited PNF.
With this said, for 3D, we will need one band-limited PNF for each axis and one for each pairs of octants.
This leads to an enssemble $7$ band-limited PNFs.

\section{Experiment Details}

\subsection{Expressivity}

\paragraph{Images}

\newcolumntype{Y}{>{\centering\arraybackslash}X}
\begin{figure*}
\begin{center}
\begin{tabular}{c@{}c@{}c@{}c@{~~~}c@{}c@{}c@{}c@{}c}
\includegraphics[width=0.12\linewidth,height=0.12\linewidth]{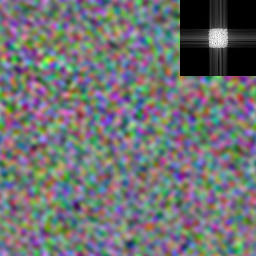} &
\includegraphics[width=0.12\linewidth,height=0.12\linewidth]{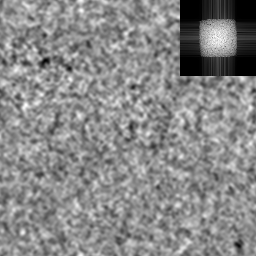} &
\includegraphics[width=0.12\linewidth,height=0.12\linewidth]{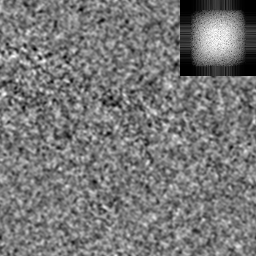} &
\includegraphics[width=0.12\linewidth,height=0.12\linewidth]{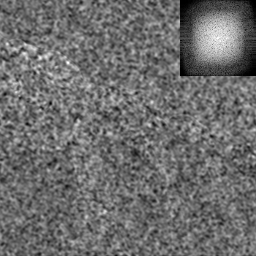} &
\includegraphics[width=0.12\linewidth,height=0.12\linewidth]{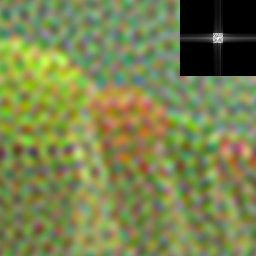} &
\includegraphics[width=0.12\linewidth,height=0.12\linewidth]{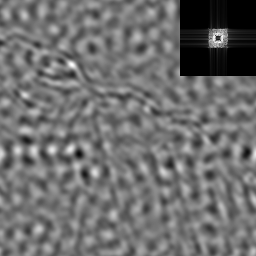} &
\includegraphics[width=0.12\linewidth,height=0.12\linewidth]{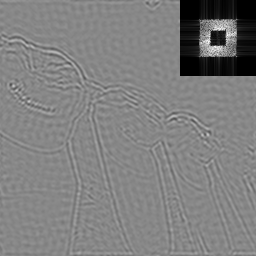} &
\includegraphics[width=0.12\linewidth,height=0.12\linewidth]{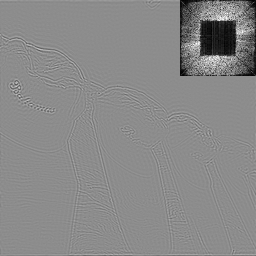} \\

\includegraphics[width=0.12\linewidth,height=0.12\linewidth]{figures/laplacian_supp1/bacon/03_BACON_mean_gau_0.png} &
\includegraphics[width=0.12\linewidth,height=0.12\linewidth]{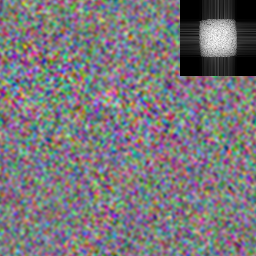} &
\includegraphics[width=0.12\linewidth,height=0.12\linewidth]{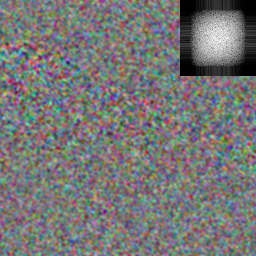} &
\includegraphics[width=0.12\linewidth,height=0.12\linewidth]{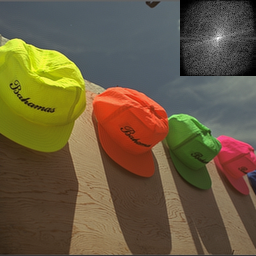} &
\includegraphics[width=0.12\linewidth,height=0.12\linewidth]{figures/laplacian_supp1/pnf/03_pnf_PNF_pseudopolar_fix2_gau_0.png} &
\includegraphics[width=0.12\linewidth,height=0.12\linewidth]{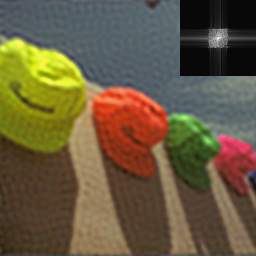} &
\includegraphics[width=0.12\linewidth,height=0.12\linewidth]{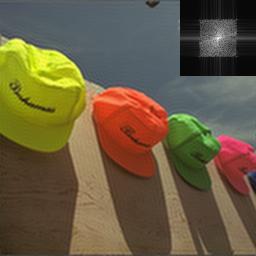} &
\includegraphics[width=0.12\linewidth,height=0.12\linewidth]{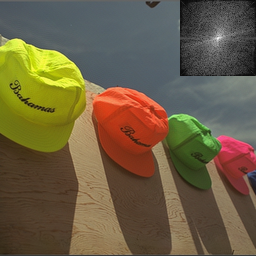}
 \\
 \multicolumn{4}{c}{BACON} & \multicolumn{4}{c}{PNF} \\ 
 
 \includegraphics[width=0.12\linewidth,height=0.12\linewidth]{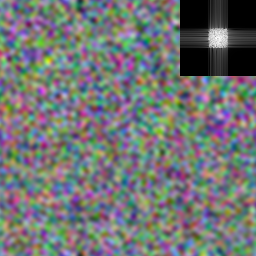} &
\includegraphics[width=0.12\linewidth,height=0.12\linewidth]{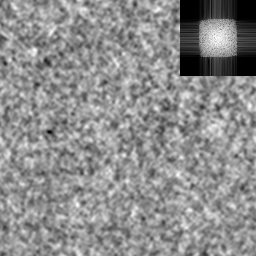} &
\includegraphics[width=0.12\linewidth,height=0.12\linewidth]{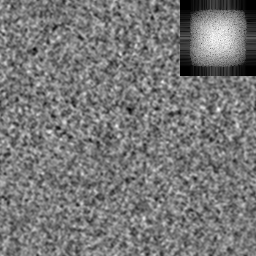} &
\includegraphics[width=0.12\linewidth,height=0.12\linewidth]{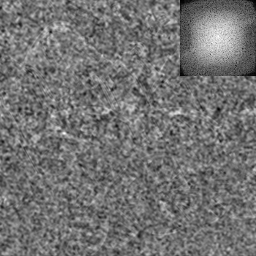} &
\includegraphics[width=0.12\linewidth,height=0.12\linewidth]{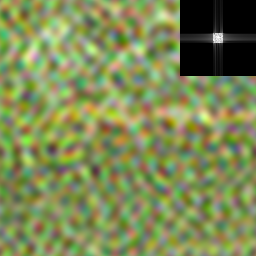} &
\includegraphics[width=0.12\linewidth,height=0.12\linewidth]{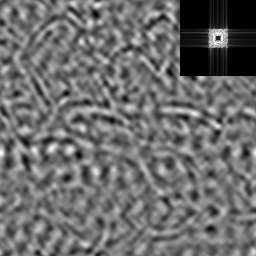} &
\includegraphics[width=0.12\linewidth,height=0.12\linewidth]{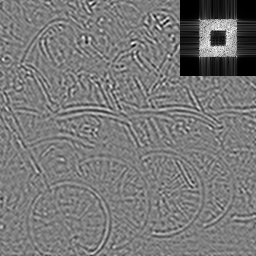} &
\includegraphics[width=0.12\linewidth,height=0.12\linewidth]{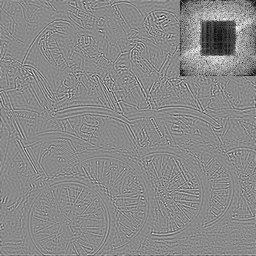} \\

\includegraphics[width=0.12\linewidth,height=0.12\linewidth]{figures/laplacian_supp2/bacon/05_BACON_mean_gau_0.png} &
\includegraphics[width=0.12\linewidth,height=0.12\linewidth]{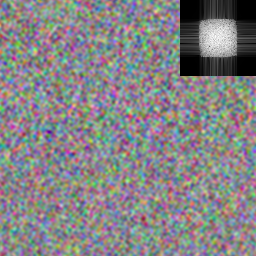} &
\includegraphics[width=0.12\linewidth,height=0.12\linewidth]{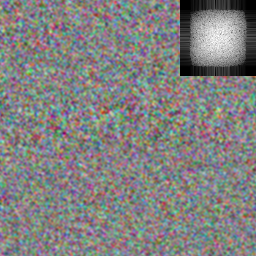} &
\includegraphics[width=0.12\linewidth,height=0.12\linewidth]{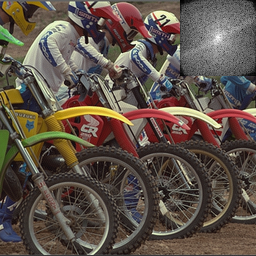} &
\includegraphics[width=0.12\linewidth,height=0.12\linewidth]{figures/laplacian_supp2/pnf/05_pnf_PNF_pseudopolar_fix2_gau_0.png} &
\includegraphics[width=0.12\linewidth,height=0.12\linewidth]{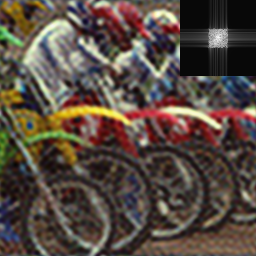} &
\includegraphics[width=0.12\linewidth,height=0.12\linewidth]{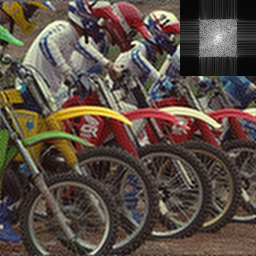} &
\includegraphics[width=0.12\linewidth,height=0.12\linewidth]{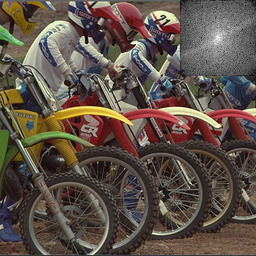}
 \\
 \multicolumn{4}{c}{BACON} & \multicolumn{4}{c}{PNF}
 
\end{tabular}
\caption{BACON: The bottom row shows the output of each layer which is upper band limited. The top row (columns 2-4) shows the difference between the output of a given layer and the one before it. PNF: The top row shows the output of each layer which is both upper and lower band limited. The bottom row (columns 2-4) shows the addition of the output at a given layer and the one before it. }
\label{fig:laplacian-pyramid_supp_1}
\end{center}
\end{figure*}
\begin{figure}
\centering
\caption{Image Fitting on the DIV2K dataset. BAC stands for BACON. Rectangular stands for PNF using $\tT_{rect}$ tiling, and Circular stands for PNF using $\tT_{circ}$ tiling. Gabor is Gabor PNF and PWGabor is the Pairwise Gabor PNF, both are described in \cref{sec:diff-pnfs}}
\begin{tabular}{@{}lll@{}}
\toprule
Method  & PSNR  & SSIM   \\ \midrule
RFF     & 28.72 $\pm$ 2.88   & 0.834  $\pm$ 0.053\\
SIREN   & 29.22 $\pm$ 3.08   & 0.866  $\pm$ 0.053\\ 	
BAC     & 28.67 $\pm$ 2.83     & 0.838 $\pm$ 0.048\\	
BAC-L   & 29.44 $\pm$ 3.03     & 0.871$\pm$ 0.047 \\ 	
BAC-M   & 29.44 $\pm$ 3.03	    & 0.871 $\pm$ 0.047 \\ 
\midrule
PNF (Rectangular)   & \textbf{29.47 $\pm$ 3.08 } & 0.874 $\pm$ 0.047 \\
PNF (Circular)      & 28.83 $\pm$ 3.04  & 0.856	$\pm$ 0.052 \\
PNF (Gabor)         & 29.39	$\pm$ 2.97  & \textbf{0.875 $\pm$ 0.047} \\
PNF (PWGabor)       & 29.22	$\pm$ 2.95  & 0.872 $\pm$ 0.047 \\
\bottomrule
\label{tab:img_fit_quant_full}
\end{tabular}
\end{figure}

For the image fitting task (Sec.~4.1, Tab.~1), we use the DIV2K~\cite{agustsson2017ntire} dataset and downsample images to $256^2$ resolution. For evaluation, we sample the fields at $512$ and compare with the original resolution images. We compare our method against state-of-the-art neural fields of \bacon{}~\cite{Lindell2021BACONBC}, Random Fourier Features~\cite{tancik2020fourier} and SIREN~\cite{sitzmann2020siren}. \cref{tab:img_fit_quant_full} corresponds to Tab.1 of the main paper. We add here SD values corresponding to an averaged over all $25$ images in  DIV2K~\cite{agustsson2017ntire} dataset. In the paper we report values rectangular tiling. In \cref{tab:img_fit_quant_full} we also add the values corresponding to circular tiling. 
Additional results corresponding to Fig.~2 of the main paper are given in \cref{fig:laplacian-pyramid_supp_1}

\paragraph{Neural Radiance Field}
\begin{table}[t]
\centering
\caption{NeRF Fitting for $64^2$ resolution. Full results on 1x ($64^2$), 1/2, 1/4, 1/8 resolutions. 
}
\begin{tabular}{@{}lcccc@{~~~}cccc@{}}
\toprule
& \multicolumn{4}{c}{\bacon{}~\cite{Lindell2021BACONBC}} & \multicolumn{4}{c}{PNF (Ours)} \\
\cmidrule(l){2-5}\cmidrule(l){6-9}
                           & \multicolumn{2}{c}{300 epochs}                      & \multicolumn{2}{c}{500 epochs} & \multicolumn{2}{c}{300 epochs}                      & \multicolumn{2}{c}{500 epochs}                      \\ \cmidrule(l){2-3}\cmidrule(l){4-5}  \cmidrule(l){6-7}\cmidrule(l){8-9}
                           & \multicolumn{1}{l}{PSNR} & \multicolumn{1}{l}{SSIM} & \multicolumn{1}{l}{PSNR} & \multicolumn{1}{l}{SSIM} 
                           & \multicolumn{1}{l}{PSNR} & \multicolumn{1}{l}{SSIM} & \multicolumn{1}{l}{PSNR} & \multicolumn{1}{l}{SSIM}  \\ \midrule

LEGO 1x        & 29.62 & 0.969 & 30.07 & 0.973 & 30.40 & 0.967                         & 31.10 & 0.965 \\
LEGO 1/2x      & 29.90 & 0.955 & 30.65 & 0.957 & 30.41 & 0.962                         & 31.17 & 0.951 \\
LEGO 1/4x      & 29.91 & 0.939 & 30.46 & 0.939 & 31.08 & 0.947                         & 31.75 & 0.944 \\
LEGO 1/8x      & 28.87 & 0.913 & 28.85 & 0.917 & 30.39 & 0.921                         & 30.86 & 0.934 \\
\midrule
Chair 1x       & 29.89 & 0.914 & 30.93 & 0.958 & 31.05 & 0.966                         & 30.91 & 0.983 \\
Chair 1/2x     & 30.97 & 0.898 & 34.93 & 0.968 & 35.00 & 0.949                         & 34.88 & 0.975 \\
Chair 1/4x     & 30.40 & 0.943 & 37.04 & 0.973 & 37.16 & 0.937                         & 37.06 & 0.956 \\
Chair 1/8x     & 29.65 & 0.947 & 35.92 & 0.964 & 36.29 & 0.929                         & 36.55 & 0.956 \\
\midrule
Drums 1x       & 26.10 & 0.937 & 28.24 & 0.923 & 28.30 & 0.932                         & 28.12 & 0.958 \\
Drums 1/2x     & 28.38 & 0.942 & 30.20 & 0.956 & 30.05 & 0.939                         & 30.16 & 0.960 \\
Drums 1/4x     & 28.63 & 0.924 & 31.09 & 0.946 & 31.52 & 0.925                         & 31.44 & 0.955 \\
Drums 1/8x     & 32.79 & 0.950 & 32.25 & 0.940 & 32.64 & 0.954                         & 32.96 & 0.946 \\
\midrule
Ficus 1x       & 25.61 & 0.925 & 28.18 & 0.953 & 29.31 & 0.965                         & 30.34 & 0.976 \\
Ficus 1/2x     & 31.49 & 0.955 & 33.12 & 0.975 & 30.37 & 0.957                         & 30.45 & 0.962 \\
Ficus 1/4x     & 35.48 & 0.973 & 35.55 & 0.979 & 30.96 & 0.945                         & 30.33 & 0.943 \\
Ficus 1/8x     & 37.98 & 0.971 & 38.00 & 0.977 & 30.35 & 0.922                         & 28.09 & 0.912 \\
\midrule
Hotdog 1x      & 30.82 & 0.981 & 33.82 & 0.981 & 34.62 & 0.966                         & 34.80 & 0.983 \\
Hotdog 1/2x    & 29.39 & 0.973 & 34.37 & 0.974 & 34.15 & 0.949                         & 34.35 & 0.975 \\
Hotdog 1/4x    & 29.79 & 0.929 & 32.44 & 0.953 & 32.49 & 0.937                         & 32.55 & 0.956 \\
Hotdog 1/8x    & 28.72 & 0.939 & 32.28 & 0.940 & 31.87 & 0.929                         & 32.15 & 0.956 \\
\midrule
Materials 1x   & 23.67 & 0.901 & 22.17 & 0.902 & 25.13 & 0.924                         & 24.92 & 0.956 \\
Materials 1/2x & 25.70 & 0.946 & 24.17 & 0.915 & 28.23 & 0.935                         & 27.85 & 0.972 \\
Materials 1/4x & 26.70 & 0.928 & 25.46 & 0.924 & 27.01 & 0.913                         & 26.72 & 0.951 \\
Materials 1/8x & 26.01 & 0.806 & 24.47 & 0.842 & 27.68 & 0.860                         & 26.62 & 0.863 \\
\midrule
Mic 1x         & 30.53 & 0.981 & 29.67 & 0.979 & 30.55 & 0.983                         & 30.12 & 0.979 \\
Mic 1/2x       & 34.02 & 0.972 & 31.72 & 0.973 & 33.98 & 0.975                         & 33.52 & 0.976 \\
Mic 1/4x       & 34.69 & 0.965 & 33.37 & 0.954 & 35.95 & 0.957                         & 34.60 & 0.957 \\
Mic 1/8x       & 35.54 & 0.958 & 36.00 & 0.954 & 35.35 & 0.953                         & 35.52 & 0.952 \\
\midrule
Ship 1x        & 28.30 & 0.878 & 24.99 & 0.789 & 29.74 & 0.795                         & 24.35 & 0.795 \\
Ship 1/2x      & 29.74 & 0.901 & 27.02 & 0.812 & 27.11 & 0.899                         & 27.17 & 0.829 \\
Ship 1/4x      & 30.40 & 0.910 & 28.54 & 0.854 & 30.70 & 0.914                         & 27.41 & 0.828 \\
Ship 1/8x      & 31.42 & 0.931 & 29.00 & 0.892 & 31.40 & 0.891                         & 28.17 & 0.888

               \\ \bottomrule
\label{tab:nerf-expressivity-additional}
\end{tabular}
\end{table}

The method devised by NeRF~\cite{mildenhall2020nerf} can be used for novel view synthesis. It operates on a dataset of images from different views with known camera parameters. NeRF queries a neural field $M$ for an RGB and occupancy value, given a 3D point on a ray which passes through an image pixel that extends from a camera center. The RGB and occupancy values are then aggregated using standard volumetric rendering pipeline. After training, novel views are rendering by evaluating the relevant rays. For test views, evaluation is performed by measuring the difference between generated views and ground truth views, using SSIM and PSNR measures. 

For the choice of neural field $M$, we evaluate our method on the multiscale Blender dataset~\cite{barron2021mip} with images at full (64$\times$64), $1/2$, $1/4$, and $1/8$ resolution. We compare our method to \bacon{} which is state-of-the-art on this task. We use the same training scheme as in \bacon{} for this task. In Fig.~4, we also provide a visual comparison for the drums scene, trained at full (512$\times$512), $1/2$, $1/4$, and $1/8$ resolution. 

An Adam optimizer is used for training with $1e6$ training iterations. Learning rate is annealed logarithmically from $1e-3$ to $5e-6$. For BACON, $8$ hidden layers are used with $256$ hidden features. As mention in \cref{sec:implementationdetails}, for our network, $4$ layers are used with hidden dimension of $86$. This results in the total memory which is slightly below that of \bacon{}. 
Rays for the multiscale Blender dataset are in $[-4, 4]^3$. We follow follow \bacon{} in setting the maximum bandwidth to be $64$ cycles per unit interval and in evaluating without the viewing direction as input. We also adapt the hierarchical sampling of NeRF~\cite{mildenhall2020nerf}. For a fair comparison to \bacon{} we consider \bacon{}'s per-scale supervision using the loss of $\sum\limits_{i, j, k} \lVert (\mathbf{I}_k(\mathbf{r}_i, \mathbf{t}_j) - \mathbf{I}_{\text{GT},k}(\mathbf{r}_i) \rVert^2_2$, for $i$, $j$, and $k$ being index rays, ray positions, and dataset scales. 
Tab.~3 of the main text provides the result averaged over all scenes in the Blender dataset for $1x$ (64$\times$64) resolution and for the average over $1x$, $1/2$, $1/4$, and $1/8$ resolution. Full results are provided in \cref{tab:nerf-expressivity-additional}.

\paragraph{3D Signed Distance Field}
\begin{table}[]
\centering
\caption{3D shape fitting. CD is Chamfer Distance ($\times 10^{6}$). FS stands for F-score and NC stands for normal consistency.}
\begin{tabular}{llccc@{~~~}ccc}
\toprule
                       &                        & \multicolumn{3}{c}{Armadillo}                                                                        & \multicolumn{3}{c}{Dragon}                                                                           \\
                       
                       \cmidrule(l){3-5} \cmidrule(l){6-8}
                       &         \#Iters               & \multicolumn{1}{c}{CD}           & \multicolumn{1}{c}{NC}          & \multicolumn{1}{c}{FS}          & \multicolumn{1}{c}{CD}           & \multicolumn{1}{c}{NC}          & \multicolumn{1}{c}{FS}          \\
                       \midrule
SIREN                  &                        & 2.86                         & 99.16                         & 99.92\%                         & 11.9 & 97.03 & 99.43\% \\
\midrule
BACON                  &                        & 2.86                         & 99.12                         & 99.91\%                         & 9.57                         & 96.94                         & 99.65\%                         \\
BACON-last             &                        & 2.88                         & 99.05                         & 99.91\%                         & 5.11                         & 96.92                         & 99.67\%                         \\
BACON-mean             & \multirow{-4}{*}{200k} & 2.86                         & 99.07                         & 99.91\%                         & 9.80                         & 96.83                         & 99.57\%                         \\
\midrule
                       & 200k                   & 2.87                         & 99.14                         & 99.91\%                         & 1.72                         & 97.02                         & 99.99\%                         \\
                       & 100k                   & 2.86                         & 99.14                         & 99.92\%                         & 1.73                         & 96.99                         & 99.99\%                         \\
                       & 60k                    & 2.87                         & 99.13                         & 99.91\%                         & 1.74                         & 96.93                         & 99.99\%                         \\
                       & 40k                    & 2.88                         & 99.11                         & 99.90\%                         & 1.76                         & 96.84                         & 99.99\%                         \\
\multirow{-5}{*}{Ours} & 20k                    & 2.91                         & 99.05                         & 99.91\%                         & 1.82                         & 96.67                         & 99.99\%                         \\
\midrule
Oracle                 &                        & 2.85                         & 99.15                         & 99.92\%                         & 1.67                         & 97.84                         & 100.00\%                        \\ \bottomrule \\
                       &                        & \multicolumn{3}{c}{Lucy}                                                                             & \multicolumn{3}{c}{Thai}                                                                             \\
                       \cmidrule(l){3-5} \cmidrule(l){6-8}
                       &       \#Iters                 & \multicolumn{1}{c}{CD}           & \multicolumn{1}{c}{NC}          & \multicolumn{1}{c}{FS}          & \multicolumn{1}{c}{CD}           & \multicolumn{1}{c}{NC}          & \multicolumn{1}{c}{FS}          \\
                       \midrule
SIREN                  &                        & 18.7 & 98.00 & 99.74\% & 2.61                         & 94.71                         & 99.96\%                         \\
\midrule
BACON                  &                        & 18.1                         & 97.79                         & 99.82\%                         & 2.60                         & 94.57                         & 99.97\%                         \\
BACON-last             &                        & 4.95                         & 97.03                         & 99.73\%                         & 2.60                         & 94.59                         & 99.97\%                         \\
BACON-mean             & \multirow{-4}{*}{200k} & 22.1                         & 97.94                         & 99.94\%                         & 2.61                         & 94.42                         & 99.97\%                         \\
\midrule
                       & 200k                   & 1.75                         & 97.89                         & 100.00\%                        & 2.61                         & 94.56                         & 99.97\%                         \\
                       & 100k                   & 1.75                         & 97.87                         & 100.00\%                        & 2.60                         & 94.57 & 99.96\%                         \\
                       & 60k                    & 1.76                         & 97.81                         & 100.00\%                        & 2.64                         & 94.45                         & 99.96\%                         \\
                       & 40k                    & 1.77                         & 97.78                         & 100.00\%                        & 2.67                         & 94.30                         & 99.97\%                         \\
\multirow{-5}{*}{Ours} & 20k                    & 1.81                         & 97.66                         & 100.00\%                        & 2.75                         & 93.95                         & 99.96\%                         \\
\midrule
Oracle                 &                        & 1.71 & 98.54 & 99.99\% & 2.54                         & 95.55                         & 99.97\%               \\          
\bottomrule
\end{tabular}
\label{fig:sdf-all-shape-numbers}
\end{table}
As mentioned in the main text, we evaluate the performance of our method against the Stanford 3D scanning repository\footnote{\url{http://graphics.stanford.edu/data/3Dscanrep/}}. In Tab.~2 of the main text we report the averaged for the scenes of \textit{Armadillo}, \textit{Dragon}, \textit{Lucy}, and \textit{Thai Statue}. Individual per-object scores are provided in \cref{fig:sdf-all-shape-numbers} where we consider an additional evaluation metric of normal consistency. Normal consistency (NC), first computes the nearest points using Campfer Distance and then then computes whether the surface normal is within certain threshold. NC is the percentage that lands within the threshold. In \cref{fig:sdf-all-shape-numbers}, oracle corresponds to the upper bound for the performance, computed by sampling two sets of points form the ground truth and computing the evaluation metrics on them. %\cref{sec:fullresults} \todo{}.
We compare our method to SIREN~\cite{sitzmann2020siren} and \bacon{}~\cite{Lindell2021BACONBC}. We train each network to fit a signed distance function (SDF). For \bacon{} and SIREN, $8$ hidden layers are used with $256$ hidden features.  The models are extracted at $512^3$ resolution using marching cubes and evaluated using F-score and Chamfer distance. 

We follow a similar training procedure to \bacon{}~\cite{Lindell2021BACONBC}. Training data consists of sampled locations from the zero level set, where Laplacian noise is added for each point, as in \cite{davies2020effectiveness}. As noted in \bacon{}, the width of the Laplacian has a large performance impact, and so we use the same coarse and fine sampling procedure of \bacon{}, where ``fine'' samples are produced using a small variance of  $\sigma^2_\text{L}=\text{2e-6}$ and ``coarse'' samples with $\sigma_\text{L}^2=\text{2e-2}$. Samples are drawn in the domain $[-0.5, 0.5]^3$ and the following loss is used: $\lambda_\text{SDF} \lVert \mathbf{y}^c - \mathbf{y}^c_\text{GT}\rVert_2^2 + \lVert \mathbf{y}^f - \mathbf{y}^f_\text{GT} \rVert_2^2$, where 
$\mathbf{y}$ is the generated output, $\mathbf{y}_\text{GT}$ is the ground truth value, the $f$ and $c$ indicate fine and coarse samples. As in \bacon{}, $\lambda_\text{SDF}$ is set to 0.01 for all experiments.  SIREN and \bacon{} are used as baselines, each trained for $200,000$ with a batch size of $5000$ coarse and $5000$ fine samples. The same optimization as for neural radiance field is used here.

\subsection{Texture Transfer}

\begin{figure}
\begin{center}
\begin{tabular}{c@{~~}lc@{~~}c@{}c@{}cc@{~~}c@{}c@{}c@{}}

& & & L-1 & L-2 & L-3 & & L-1 & L-2 & L-3 \\  
\rotatebox[origin=l]{90}{~~Texture} & 
\includegraphics[width=0.118\linewidth]{figures/texture/wooden_128.jpg} &
\rotatebox[origin=l]{90}{~~BACON} & 
\includegraphics[width=0.118\linewidth]{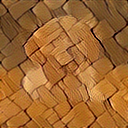} &
\includegraphics[width=0.118\linewidth]{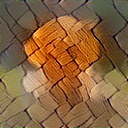} &
\includegraphics[width=0.118\linewidth]{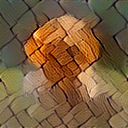} & 
 
\rotatebox[origin=l]{90}{~~~~PNF} & 
\includegraphics[width=0.118\linewidth]{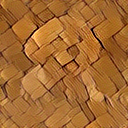} &
\includegraphics[width=0.118\linewidth]{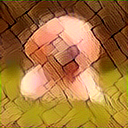} &
\includegraphics[width=0.118\linewidth]{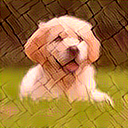} 
\\
\end{tabular}
\vspace{-0.2cm}
\caption{Additional texture transfer results.  We optimize specific layers of the neural field. L-1 (layers 1-4), L-2 (layers 2-4), L-3 (layers 3-4). }
\label{fig:texture_additional}
\end{center}
\end{figure}

For texture transfer, a random cat image is used (free-use licence) and texture image is used from \cite{efros2001image}. As noted in the main text, we query the network on a $128^2$ grid and apply either (a) Content and style loss objectives as given in \cite{gatys2015neural}, or (b) Text-based texture manipulation objectives as given by CLIPStyler~\cite{kwon2021clipstyler}. For baseline comparison, the same objectives are applied directly on the cat image, and the cat image pixel values are directly optimized. 

For (a), we use the same objectives (content and style loss), loss weights and optimizer as used in the public code of \url{https://pytorch.org/tutorials/advanced/neural_style_tutorial.html} and \cite{gatys2015neural}. Similarly, for (b), we similarly use the same we objectives, loss weights and optimizer as used in the official code of CLIPStyler~\cite{kwon2021clipstyler} in \url{https://github.com/cyclomon/CLIPstyler}. 
Additional texture transfer corresponding on the Fig.~5(a) is given in \cref{fig:texture_additional}.

\subsection{Scale-space Representation}
In this section, we will first show the derivation of the PNF used for the scale-space representation.
Then we will provide detailed description of the experiment and provide numerical results.

\subsubsection{Derivation of Scale-space Fourier PNF}
Suppose the signal of interest can be represented by Fouier bases as $g(\xx) = \sum_n \alpha_n \exp{(\omega_i^T\xx)}$, then we know analytically the Gaussian convolved version should be $f(\xx, \Sigma) = \sum_n \alpha_n\exp(\omega_i^T\Sigma\omega_i)\exp(i\omega_i^T\xx)$.
If we assume that our Fourier PNF can learn the ground truth representation well, then one potential way to achieve this is setting $\gamma(\xx, \Sigma)_n=\exp(-\frac{1}{2}\omega_n^T\Sigma\omega_n)\exp(-i\omega_n^T\xx)$.
Doing this naively with Fourier PNF only yields a bad approximation, since each coefficient is off by an error term $E(n, I_n)$ from the ground truth.
We will derive such error term in detail below.

For a particular $F_k$ (Equation (4) in the main text), let $\tilde{F}_k(\xx, \Sigma)$ be the output of naively replacing the Fourier basis encoding $\gamma$ with the intergrated Fourier basis encoding: $\gamma(\xx, \Sigma)$.
Then we have:
\begin{align}
    &\ \ \mathop{\mathbb{E}}_{\xx\sim\mathcal{N}(\mu,\Sigma)}[F_k(\xx)]  \\
    &= 
    \sum_n \alpha_n
    \exp\left(-\frac{1}{2}\omega_n^T\Sigma\omega_n^T \right) \exp(i\omega_n^Tx)  \\
    &=\sum_n\exp\left(-\frac{1}{2}\omega_n^T\Sigma\omega_n^T \right)
    \sum_{I \in I_n} \alpha_{nI} \exp\paren{i\paren{\sum_{j\in I} \omega_j}^T\xx}
    \\
    &=\sum_n
    \sum_{I \in I_n} 
    \exp\left(-\frac{1}{2}\omega_n^T\Sigma\omega_n^T \right)
    \alpha_{nI} \exp\paren{i\paren{\sum_{j\in I} \omega_j}^T\xx}
    \\
    &=\sum_{n,I \in I_n} \alpha_{nI}
    \exp\paren{-\frac{1}{2}\sum_{j \in I} \omega_j^T\Sigma\omega_j^T}
    \exp\paren{-\frac{1}{2}\sum_{j\neq l, j,l \in I} \omega_j^T\Sigma\omega_l^T}
     \exp\paren{i\paren{\sum_{j\in I} \omega_j}^T\xx}
    \\
    &= \underbrace{
        \sum_{n,I \in I_n} \alpha_{nI} 
        \paren{
        \prod_{l\in I}
        \exp\paren{-\frac{1}{2}\omega_l^T\Sigma\omega_l^T + i\omega_n^T\xx}
        }
    }_{\text{PNF output:} \tilde{F}_k(\xx, \Sigma)}
    \underbrace{\exp\left(-\frac{1}{2}\sum_{l,j \in I, l\neq j} \omega_l^T\Sigma\omega_j^T \right)}_{\text{Error terms } E_k(n, I_n)}
    ,
\end{align}
where $I_n$ include all indexes to choose one basis per $\gamma_j$ layer (as defined in Equation (6) of the main text) such that $\forall I \in I_n, \sum_{l\in I} \omega_l = \omega_n$.
We use $\alpha_{nI}$ to denotes the coefficients gathered along the index of $I$.
We can compute these coefficients analytically by applying similar analysis as MFN~\cite{Fathony2021MultiplicativeFN}.
The formua above shows that $\tilde{F}_k(\xx, \Sigma)$ is not a good approximation since each term $\alpha_{n,I}$ is off by different factor $E_k(n, I_n)$.

Fortunately, we know that all bases chosen from $I_n$ must come from a specific set of that shares the same angle (i.e. the limiting-subbands all shared a direction vector $\dd$ and an angular width $\gamma$ by the definition of $F_i$), but with different range of for the norm $\norm{\omega}_p$.
This suggests that we can approximate the error terms in the following way:
\begin{align}
    E_i(\cdot, I) \approx
    \exp\left(-\frac{1}{2}\dd_{I}^T\Sigma\dd_{I}^T\sum_{k,l \in I, k\neq l}\bar{r}_k\bar{r}_l \right) = A_i(\Sigma),
\end{align}
where $\bar{r}_k, \bar{r}_l$ are the medium radius of each subband within the subband series $\sS_i$.
Specifically, $\bar{r}_l = \frac{1}{2}(\alpha_l + \beta_l)$ if the $l^{th}$ subband used for $F_k$ is $S_l\in\sS_i$ lower- and upper- bounded by $\alpha_l$ and $\beta_l$.
This way, the error estimation only depends on the PNF-controllable set of subbands used to developed the network $F_k$, but not the network parameter.
This means we can modify the network architecture of each $F_i$ in test-time to compute $A_k(\Sigma)$ to approximate error term $E_k(\cdot, I)$:
\begin{align}
&\ \ \mathop{\mathbb{E}}_{\xx\sim\mathcal{N}(\mu,\Sigma)}[F_k(\xx)]  \\
&\approx \sum_{n,I\in I_n} \alpha_{nI} 
        \paren{
        \prod_{l\in I}
        \exp\paren{-\frac{1}{2}\omega_l^T\Sigma\omega_l^T + i\omega_n^T\xx}}
    A_k(\Sigma) \\
    &= A_i(\Sigma) \sum_{n,I\in I_n} \alpha_{nI} 
    \sum_{n,I \in I_n}
        \paren{
        \prod_{l\in I}
        \exp\paren{-\frac{1}{2}\omega_l^T\Sigma\omega_l^T + i\omega_n^T\xx}} \\
    &= A_i(\Sigma) \tilde{F}_k(\xx, \Sigma)
\end{align}
With this, our scale-space interpolation Fourier PNF can be written as sums of outputs of all subband series $\sS_k$:
\begin{align}
    \tilde{F}(\xx, \Sigma)=\sum_{k}A_k(\Sigma)\tilde{F}_k(\xx, \Sigma).
\end{align}
This approximation will be exact if $\gamma=1$ and $\alpha=\beta$ for all subbands.
But it also means we need infinite number of $F_k$ to tile the space.
The smaller $\gamma$ is, the more error can this potentially occurs.
Similarly, the larger $|\alpha-\beta|$ is, the more error this approximate can creates.
This shows a trade-off between compute and the approximation error.

\begin{table}[t]
\centering
\caption{Numerical results for scale-space interpretation experiments. IPE: intergrated positional encoding~\cite{barron2021mip}; RIPE: intergrated positional encoding with randomly fourier features; IPE-sup: supervised IPE with both 1x and 1/4x resolution. While PNF is only supervised with 1x, it's capable of inerpolating into smaller scale without breaking the image structure (as shown in good SSIM). This is more impressive as the basis functions for PNF is sampled from random directions (i.e. not necessarily aligned with the eigenvectors directions of the test-time Gaussian covariances). Doing the same thing to IPE results in worse performance as shown in RIPE.}
\label{tab:scale-space-2d}
\begin{tabular}{@{}lccccccccc@{}}
\toprule
        & \multicolumn{4}{c}{PSNR}      && \multicolumn{4}{c}{SSIM}      \\ 
        \cmidrule{2-5}\cmidrule{7-10}
Model   & 1x    & 1/2x  & 1/4x  & 1/8x  && 1x    & 1/2x  & 1/4x  & 1/8x  \\\midrule
RIPE    & 39.41 & 6.35  & 6.54  & 6.63  && 96.95 & 20.43 & 22.73 & 23.8  \\
IPE     & 37.58 & 24.62 & 21.94 & 14.68 && 94.67 & 73.88 & 58.51 & 37.95 \\
BACON   & 40.63 & 8.88  & 7.19  & 7.2   && 97.39 & 43.11 & 28.16 & 29.66 \\
PNF     & 36.45 & \textbf{27.07} & 29.74 & \textbf{24.6} && 95.98 & \textbf{86.30} & 83.33 & \textbf{72.66} \\ \midrule
RIPE-sup & \textbf{50.11} & 11.91 & 26.64 & 10.69 && \textbf{99.62} & 15.51 & 73.24 & 11.27  \\
IPE-sup & 34.76 & 26.8  & \textbf{30.00}    & 18.41 && 90.97 & 84.81 & \textbf{89.65} & 65.7  \\
\bottomrule
\end{tabular}
\end{table}
\subsubsection{Details}
The image used for Scale-space Representation is taken from the Set12 dataset~\cite{zhang2017beyond} and downsampled to $128^2$ resolution. 
We train all models for 1500 iterations with early stopping when the training PSNR reaches 40.
All models use Adam optimizer with learning rate $1e-3$.
The ground truth is generated from applying Gaussian filter of size $3\times3$ (for 1/2x), $5\times5$ (for 1/4x), and $65\times65$ (for 1/8x).
We compute the ground truth using OpenCV~\cite{opencv_library}.
For $\Sigma$, we use anisotropic covariance matrix of the form $\Sigma = \sigma^2 I$ with $I$ being the $2\times 2$ identity matrix.
We set $\sigma^2 = 1$ for 1x, $\sigma=2$ for 1/2x, $\sigma=4$ for 1/4x, and $\sigma=8$ for 1/8x.
For PNF, we used the spherical tiling $\tT_{rect}$ with $10$ subbands with 2-times overcomplete. 
So each subband has angular width of $\frac{2}{12}$ as we need to restrict the subband into either total-vertical or total-horizontal area (i.e. $R_\infty(1,+1)$ or $R_\infty(0, +1)$ as definted in \cref{def:l-inf-consistent}).
The band limits for each intermediate residual output are set to be the following fraction of the Nyquist rates: $[0, \frac{1}{8}]$, $[\frac{1}{16}, \frac{1}{4}]$, $[\frac{1}{8},\frac{1}{2}]$, and $[\frac{1}{4}, 1]$.
In this experiment, we compare to the following baselines:
\begin{enumerate}
    \item RIPE: we use intergrated random Fourier features in the following form: $\exp\paren{-\frac{1}{2}\omega^T\Sigma\omega}\sin(\omega^T\xx)$. These intergrated random Fourier features will be followed by a ReLUMLP.
    \item IPE: we use ReLUMLP on axis-aligned intergrated positional encodings as shown in \cite{barron2021mip}. 
    \item BACON: replacing BACON's filter with the intergrated random Fourier Features.
    \item RIPE-sup: same as RIPE, but supervised with both 1x and 1/4x.
    \item IPE-sup: same as IPE, but supervised with both 1x and 1/4x.
\end{enumerate}
All baselines have about 0.3M parameters (which is about the same number of floating points as the 128x128 image).
All models are trained on 1x images except for IPE-sup, which is also supervised for 1/4x.
The results are shown in \cref{tab:scale-space-2d}.
As we can see from the table, PNF is capable of representing 1/2x, 1/4x, and 1/8x resonably well while supervised only on 1x resolution.
Comparing to IPE-sup, PNF is able to match the performance at 1/4, eventhough PNF is not supervised at 1/4x.
Moreover, PNF used random Fourier features instead of the axis-align basis functions.
As shown in the comparison between RIPEs and IPEs, axis-align positional encoding performs better in this setting, which we hypothesize the reason being that during the test-time the $\Sigma$'s eigenvectors are also axis-align.
PNF model choose direction randomly within the angle of the PNF-Controllable set of subbands, and it still achieves very strong performance.

\subsection{Licenses}
DIV2K~\cite{agustsson2017ntire} dataset is available for academic use only. The multiscale Blender dataset~\cite{barron2021mip} is provided under the creative commons public license. The Stanford 3D scanning repository\footnote{\url{http://graphics.stanford.edu/data/3Dscanrep/}} dataset is available to use for research purposes. Data used for texture transfer is available for academic use or is free to use. Set12 dataset~\cite{zhang2017beyond} is available under under the creative commons public license. All data does not contain personally identifiable
information or offensive content. 

\subsection{GPU Resources}

Image based experiments require a single GPU such as an NVIDIA Titanx/Titanrtx GPU. For Neural fields and 3D Signed Distance Fields, a higher GPU number is required per experiment, and up to 4 NVIDIA Titanrtx GPUs. An internal GPU cluster was used for experiments. 

\subsection{Negative Societal Impact}

On some experimental settings, our framework requires a large GPU and memory requirement, which may have an adverse  environmental impact. Neural fields have been used in a verity of generative settings, our framework may be used in potential negative use of such generative models. For instance, for neural radiance fields, creating realistic novel views of fake scenes, could be used maliciously. Such cases could be better handled by developing tools to detect fake scenes, or any other content generated using neural fields such as our PNF.

\end{document}